\documentclass[11pt]{article}
\usepackage[a4paper,margin=2.5cm]{geometry}
\usepackage{amssymb}
\usepackage{latexsym}
\usepackage{amsbsy}
\usepackage{footmisc}
\usepackage{amsfonts}
\usepackage{amsthm}
\usepackage{hyperref}
\usepackage{graphicx}
\usepackage{subcaption}
\usepackage{enumerate}
\usepackage{enumitem}
\usepackage{mathtools}
\usepackage{algorithm}
\usepackage{algpseudocode}
\usepackage{color}
\renewcommand{\thealgorithm}{\Alph{algorithm}}
\usepackage{booktabs}
\usepackage{multirow}
\usepackage{arydshln}
\usepackage{tikz}
\usepackage{float}

\newtheorem{theorem}{Theorem}[section]
\newtheorem{lemma}[theorem]{Lemma}
\newtheorem{proposition}[theorem]{Proposition}

\newtheorem{remark}[theorem]{Remark}
\newtheorem{assump}[theorem]{Assumption}

\numberwithin{equation}{section}

\title{ Conditional Diffusion Guidance under Hard Constraint: A Stochastic Analysis Approach}
\author{Zhengyi Guo, Wenpin Tang, and Renyuan Xu}
\author{Zhengyi Guo \thanks{Department of Industrial Engineering and Operations Research, Columbia University. \textbf{Email:}  zg2525@columbia.edu and wt2319@columbia.edu.} \and Wenpin Tang  \footnotemark[1] \and   Renyuan Xu \thanks{Department of Management Science and Engineering, Stanford University.  \textbf{Email:} renyuanxu@stanford.edu} }
\date{First version: February 2026; This version: June 2026.} 

\begin{document}
\maketitle

\begin{abstract}
We study conditional generation in diffusion models under hard constraints, where generated samples must satisfy prescribed events with probability one. Such constraints arise naturally in safety-critical applications and in rare-event simulation, where soft or reward-based guidance methods offer no guarantee of constraint satisfaction. Building on a probabilistic interpretation of diffusion models, we develop a principled conditional diffusion guidance framework based on Doob’s $h$-transform, martingale representation and quadratic variation process. Specifically, the resulting guided dynamics augment a pretrained diffusion with an explicit drift correction involving the logarithmic gradient of a conditioning function, 
without modifying the pretrained score network. Leveraging martingale and quadratic-variation identities, we propose two novel off-policy learning algorithms based on a martingale loss and a martingale–covariation loss to estimate $h$ and its gradient using only trajectories from the pretrained model. We provide non-asymptotic guarantees for the resulting conditional sampler in both total variation and Wasserstein distances, explicitly characterizing the impact of score approximation and guidance estimation errors. Numerical experiments demonstrate the effectiveness of the proposed methods in enforcing hard constraints and generating rare-event samples. 
The code of the numerical experiments can be found at \url{https://github.com/ZhengyiGuo2002/CDG_Finance}.
\end{abstract}
\section{Introduction}
\label{sc1}

Diffusion models have emerged as a powerful class of generative models capable of producing high-quality samples across a wide range of domains, including image synthesis, molecular design, and time series generation. 
Notably, they have played a central role in the success of text-to-image generators 
such as DALL·E 2 \cite{Ramesh22} and Stable Diffusion \cite{Rombach2022};
in text-to-video generators
such as Sora \cite{Sora2024}, Make-A-Video \cite{Singer2022} and Veo \cite{Veo2024};
and more recently, in diffusion large language models 
such as Mercury \cite{KK25} and LLaDA \cite{Nie25}.
In many downstream applications, however, the objective goes beyond unconditional sampling from a data distribution. Instead, we are often interested in generating samples that satisfy prescribed structural, functional, or feasibility constraints. Such requirements naturally give rise to the problem of guided or fine-tuned diffusion sampling, in which a generative model is adapted to target a conditional distribution associated with a desired event or property.

This demand for conditional generation arises from two closely related but distinct considerations.
First, in many applications, generated data must respect hard constraints dictated by physical laws, operational rules, or feasibility requirements. Examples include safety-critical systems, regulated decision-making pipelines, and constrained design problems, where violations are not allowed and must be ruled out at the level of the generated distribution.
Second, in domains such as finance, healthcare, and large-scale service systems, there is a growing need for stress testing and scenario analysis. In these settings, we seek to simulate rare but high-impact events (such as extreme market movements, adverse clinical outcomes, or system overloads) that occur with very low probability under the empirical data distribution. Addressing such problems requires generative models that can reliably produce samples from rare-event regimes rather than typical scenarios.

Modern diffusion models are typically trained in a pretraining phase to approximate the unconditional data distribution. Concretely, a pretrained score-based diffusion model specifies a stochastic sampling dynamics of the form:
\begin{equation*}
d Y_t = \Big(\bar f(t,Y_t)+ s_\theta(t, Y_t)\Big) dt + \overline{g}(t) dB_t, 
\end{equation*}
where $\{B_t\}_{t \ge 0}$ is standard Brownian motion, 
$\bar f(t,y)$ and $\bar g(t)$ are prescribed model parameters,
and 
$s_\theta(t,y)$ is a learned score function parametrized by $\theta$. The function $s_\theta$ is trained offline, using samples from the data distribution, to approximate the score (the gradient of the log-density) of the forward diffusion process (see Section \ref{sc2} for details).
For a well-trained diffusion model, the terminal distribution of $Y_T$ is expected to be close to the target data distribution $p_{\tiny \rm data}(\cdot)$.

The pretraining of diffusion models is computationally intensive and generally requires access to large and representative datasets. As a result, such models are typically trained once to approximate the unconditional data distribution and subsequently reused across multiple downstream tasks using guidance or fine-tuning. Given a pretrained diffusion model, existing guidance mechanisms can be broadly categorized into several paradigms, including supervised fine-tuning with regularization (often referred to as soft guidance)  \cite{KDR24, Tang24, UZ24}, conditioning-based approaches \cite{Dh21, HS21}, reinforcement learning–based methods \cite{Fan23,  GZZ24,  ZCZ24,  ZZT24b}, and more recent preference-optimization frameworks \cite{WDR24, YT24}. While soft-constrained or reward-based guidance methods are often computationally convenient, they incorporate target criteria through penalty or reward terms in the optimization objective and, as a consequence, generally do not guarantee that the resulting generated distribution is supported on the constraint set or that the constraint is satisfied with probability one. As a result, such methods may generate samples that violate prescribed constraints, which is a critical limitation in applications requiring strict constraint adherence or reliable rare-event simulation.

   In contrast, hard conditioning aims to generate samples by replacing the original generative law with the conditional law given a prescribed constraint set, under which the terminal output satisfies the constraint with probability one. This setting is substantially more challenging, particularly when the conditioning event is rare under the data distribution. In such cases, the conditional law concentrates on low-probability or geometrically thin regions of the data domain, rendering naive rejection sampling or importance reweighting ineffective. Enforcing hard constraints therefore requires modifying the entire generative trajectory, rather than only its terminal distribution. Designing a principled, theoretically grounded, and lightweight post-training mechanism that achieves this goal—without modifying the underlying pretrained score function—remains an {\it open problem} in the diffusion model literature.

  Compared to rejection sampling (see e.g., \cite{flury1990acceptance,gilks1992adaptive}), our hard-constrained diffusion guidance (introduced below) replaces post-hoc filtering with a principled change of measure that directly targets the conditional law. While rejection sampling produces exact conditional samples under the pretrained model, its acceptance probability equals the constraint probability~$\rho$, leading to an expected computational cost of order $\mathcal{O}(1/\rho)$ diffusion rollouts per feasible sample. This is computationally extremely expensive when the conditioning event is rare, and is also impractical for users. In contrast, our framework modifies the sampling dynamics via an additional drift term; once trained, conditional samples are generated in a single rollout, thereby avoiding this intrinsic inefficiency.

\paragraph{Our work and contribution.}
This paper develops a principled framework for Conditional Diffusion Guidance that directly addresses these challenges. Our approach is rooted in classical stochastic analysis and exploits the probabilistic structure underlying diffusion models. Specifically, we adopt a conditioning-based perspective closely related to Doob’s  $h$-transform, which characterizes the law of a diffusion process conditioned on a terminal constraint via a change of measure. In this formulation, the central object is the function:
\begin{eqnarray*}
    h(t,y) = \mathbb{P}(Y_T\in S|Y_t=y),
\end{eqnarray*}
where $S$  denotes the constraint set. The function 
$h$ represents the conditional probability under the pretrained diffusion dynamics, 
and induces a drift correction that enforces the constraint. Namely, the resulting guided dynamics take the form:
\begin{equation*}
dY^S_t = \Big( \bar f(t,Y^S_t) + s_\theta(t, Y^S_t) + \overline{g}(t)^2 \nabla \log h(t, Y^S_t) \Big)dt + \overline{g}(t) d\overline B_t.
\end{equation*}
where $\{\overline B_t\}_{t \ge 0}$ is a copy of Brownian motion. We provide theoretical justification for this construction by proving that the terminal output satisfies
\begin{eqnarray*}
    Y^S_T\sim (Z \mid Z \in S) \,\,\text{ with }\,\, Z \sim p_{\mathrm{data}}(\cdot).
\end{eqnarray*}
Importantly, the proposed guidance mechanism does not modify the pretrained score network $s_\theta$; instead, it augments the sampling dynamics with an additive guidance term $\nabla \log h$, making the method lightweight in both training and implementation.

Although Doob’s transform has been explored in the literature for guidance, existing approaches typically rely on stochastic control or reinforcement-learning formulations to enforce soft constraints. In contrast, our viewpoint is fundamentally probability-theoretic: we exploit martingale properties as well as quadratic-variation identities to derive learning objectives for the 
$h$-function and its gradient directly from the dynamics of the pretrained model. This leads to a powerful learning framework that is effective in enforcing hard constraints.

Specifically, leveraging the {\it (local) martingale property} of $h(t,Y_t)$, we propose the Conditional Diffusion Guidance via Martingale Loss (CDG-ML) algorithm, which learns $h(\cdot,\cdot)$ by minimizing the 
$L_2$ loss
\begin{eqnarray*}
   \min_{\ell(\cdot, \cdot)} \mathbb{E}_{[0,T]}\left[ \int_0^T \Big(\ell(t,Y_t) - 1(Y_T \in S)\Big)^2 dt\right].
\end{eqnarray*}
This is because $h(\cdot,\cdot)$ is the unique minimizer of the above objective function (under suitable conditions) and $\mathbb{E}_{[0,T]}[\cdot]$ is the expectation with respect to the law of the process $\{Y_t\}_{0\leq t\leq T}$. 

However, learning a good approximation to $h$ alone does not guarantee a good approximation of $\nabla \log h = \nabla h/h$, a difficulty that is well documented in practice. To address this challenge, we introduce a novel approach that learns 
$\nabla h$ directly via quadratic variation. Observing the quadratic covariation process follows $d[h, Y]_t = \overline{g}(t)^2 \nabla h(t, Y_t) dt$, we propose the  Conditional Diffusion Guidance via Martingale–Covariation Loss (CDG-MCL):
\begin{equation*}
\min_{q(\cdot,\cdot)} \mathbb{E}_{[0,T]}\left[\int_0^T\left|\frac{1}{\overline{g}(t)^2}\frac{d[h, Y]_t}{dt} - q(t, Y_t)\right|^2dt\right],
\end{equation*}
which estimates $\nabla h$ via quadratic-variation information. 

Both CDG-ML and CDG-MCL operate solely on samples from the pretrained diffusion, require no access to the underlying data distribution, and are grounded in tools from stochastic analysis. To the best of our knowledge, this combination of martingale and covariation-based learning objectives for diffusion guidance is novel in the literature. In practice, we parameterize $\ell(\cdot,\cdot)$ and $q(\cdot,\cdot)$ in both loss functions using neural networks.


\smallskip

A central contribution of this work is a rigorous theoretical analysis of the proposed framework. We quantify the discrepancy between the target conditional data distribution and the distribution induced by the learned guided dynamics using two complementary metrics: total variation distance and Wasserstein distance. In particular, we establish non-asymptotic total variation bounds that decompose the error into contributions from (i) the pretrained model approximation and (ii) the learned guidance error (Lemmas \ref{lem:predata} and \ref{lem:prepre}), leading to an explicit end-to-end guarantee for conditional sampling (Theorem \ref{thm:TV}). Total variation provides strong distributional guarantees under minimal structural assumptions. In contrast, under additional regularity and stability conditions, we derive Wasserstein-2 error bounds for the guided dynamics (Theorem \ref{thm:W2}), which offer a geometrically meaningful notion of error closely tied to contraction and stability properties of stochastic differential equations. Together, these results elucidate the trade-offs between statistical strength and analytical tractability in conditional diffusion guidance.

To complement the theoretical analysis, we present numerical experiments that investigate the effectiveness of the proposed Conditional Diffusion Guidance framework in enforcing hard constraints and generating rare-event scenarios. The experiments illustrate how guidance learned from pretrained diffusion trajectories reshapes the sampling dynamics to concentrate on low-probability regions of the data distribution that are inaccessible to naive sampling. We compare the CDG-ML and CDG-MCL algorithms in terms of constraint satisfaction, sampling stability, and distributional accuracy, thereby highlighting the practical implications of learning the guidance function versus its gradient. These results provide empirical support for the theoretical guarantees and demonstrate the applicability of the proposed framework to stress testing and rare-event simulation tasks.

\paragraph{Literature review.} In this paper, we focus on supervised guidance via endogenous conditioning. The underlying idea is closely related to Doob’s 
$h$-transform, with the central task being the estimation of the conditioning function 
$h$. Related works based on the 
$h$-transform framework, such as \cite{DEFT,DP24,pidstrigach2025conditioning,howard2025control}, formulate the problem from a stochastic control perspective and treat $\frac{\nabla h}{h}$ as the optimal control to be learned. In contrast, our approach is fundamentally probability-theoretic, relying on martingale properties and quadratic-variation identities of the pretrained diffusion. 
A key limitation of control-based conditional diffusion methods is that they are {\it inherently on-policy}: learning the guidance requires simulating trajectories under the evolving post-training dynamics. As a consequence, the data distribution used for learning depends on the current control approximation, introducing distribution shift and feedback effects that complicate stability analysis and the separation of modeling error from guidance error. Our framework avoids this issue by learning the conditioning function entirely off-policy using trajectories from the fixed pretrained diffusion, thereby decoupling learning from sampling.  In contrast to our work, none of these control-based papers provides approximation guarantees for their proposed conditional sampling methods.

Our work is related to the literature on classifier guidance  \cite{BCS23,  shenoy2024},  in which an auxiliary classifier/label is used to steer diffusion sampling toward desired attributes or outcomes.  This line of work includes both algorithmic developments \cite{BCS23, ND22} and recent efforts to understand the statistical properties of classifier-guided diffusion \cite{YHN24}. Classifier guidance also provides an important conceptual link to reinforcement learning from human feedback (RLHF), where human preference or reward models play a role analogous to classifiers by shaping the sampling dynamics of generative models \cite{Fan23,  GZZ24, han2025stochastic, HS21, ZCZ24,  ZZT24b, WC24}. For broader overviews and surveys of fine-tuning diffusion models using RLHF, we refer to \cite{CM24,UZ24tut}.
Conditioning also emerges in reinforcement learning 
as
goal-conditioned reinforcement learning \cite{Liu22, Sch15, Wang23},
which is primarily used for hard-constrained robotic task planning.


Our framework is also connected to pluralistic alignment \cite{SMF24,liu2024alignment}\footnote{Pluralistic alignment refers to multi-objective alignment aimed at integrating complex and potentially conflicting real-world values; see \url{https://pluralistic-alignment.github.io/}
 for recent developments.}, in which diffusion guidance is driven by multidimensional reward or preference structures (see \eqref{eq:general_conditioning} and the discussion thereafter). From a learning perspective, this connection places our work in close relation to multi-task learning, where multiple objectives or constraints are incorporated within a unified modeling framework.

Finally, the targeted applications are closely related to the literature on rare-event simulation \cite{SG07, Buck04}, where the objective is to efficiently sample from low-probability regimes, often beyond the reach of classical importance sampling techniques \cite{BL11, BL12}. There is also growing interest in the use of diffusion models for problems in operations research and stochastic simulation; see, for example, \cite{LZ24}.

\paragraph{Organization of the paper.} The remainder of the paper is organized as follows.
We start with background on diffusion models in Section \ref{sc2}.
In Section \ref{sc3}, 
we build the foundations for conditional diffusion guidance, leading to novel methodologies. 
In Section \ref{sc4}, we provide theoretical results of the proposed methodologies.
Extensions of the proposed methods are discussed in Section \ref{sc4+}.
Numerical experiments are reported in Section \ref{sc5}.

\paragraph{Notations.} Below we collect the notation used throughout.
\begin{itemize}[itemsep = 3 pt]
\item
$\mathbb{R}$ is the set of real numbers.
\item
For $x,y \in \mathbb{R}^d$, $x \cdot y$ denotes the scalar product of $x$ and $y$,
and $|x|: = \sqrt{x \cdot x}$ is the Euclidean norm of $x$.
\item
For  $A = (a_{ij})_{1 \le i, j \le d}$ a matrix, $|A|_F: = \sqrt{\sum_{i,j =1}^d a_{ij}^2}$ denotes  its Frobenius norm.
\item
For $f$ a function on $X$, $|f|_\infty:= \sup_{X}|f(x)|$ denotes its sup-norm.
\item
For $f:[0,\infty) \times \mathbb{R}^d \ni (t,x) \to \mathbb{R}$,
$\partial_t f$ denotes its time derivative,
$\nabla f$ is the gradient of $f$
and $\partial_k f:= \frac{\partial f}{\partial x_k}$ its $k^{th}$ coordinate,
and $\Delta f:= \sum_{k=1}^d \frac{\partial^2f}{\partial x_k^2}$ is the Laplacian of $f$.
\item
For $Z$ a random variable, $\mathbb{E}Z$ denotes the expectation of $Z$.
\item
Let \(p(\cdot)\) and \(q(\cdot)\) be two probability distributions defined on the same measurable space.
The total variation distance between \(p\) and \(q\) is defined by $d_{\mathrm{TV}}(p,q) := \sup_{A} \bigl| p(A) - q(A) \bigr|.$
The Kullback--Leibler (KL) divergence from \(q\) to \(p\) is given by $d_{\mathrm{KL}}(p,q) := \int \log\!\left(\frac{dp}{dq}\right)\, dp,$ whenever \(p\) is absolutely continuous with respect to \(q\). Finally, the Wasserstein--\(2\) distance between \(p\) and \(q\) is defined as $W_2(p,q) := \left( \inf_{\gamma} \mathbb{E}_{(X,Y)\sim\gamma}
\bigl[ |X-Y|^2 \bigr] \right)^{1/2},$
where the infimum is taken over all couplings \(\gamma\) of \(p\) and \(q\).
\end{itemize}
We use $C$ for a generic constant whose values may change from line to line.

\section{Background on diffusion models}
\label{sc2}

   This section provides preliminaries of score-based diffusion models. 
We follow the presentation of \cite{TZ24tut}.
Diffusion models rely on a forward-backward procedure:
the forward process transforms the target data to noise,
and the backward process recovers the data from noise. 

   Let $p_{\tiny \rm{data}}(\cdot)\in\mathcal{P}(\in\mathbb{R}^d)$ be the target data distribution.
Fixing $T > 0$, the forward process $\{X_t\}_{0 \le t \le T}$ is governed by 
the stochastic differential equation (SDE):
\begin{equation}
\label{eq:forward}
dX_t = f(t, X_t) dt + g(t) dW_t, X_0 \sim p_{\tiny \rm{data}}(\cdot),
\end{equation}
where $f: \mathbb{R}_+ \times \mathbb{R}^d \to \mathbb{R}^d$, $g: \mathbb{R}_+ \to \mathbb{R}_+$,
and $\{W_t\}_{t \ge 0}$ is Brownian motion in $\mathbb{R}^d$. 
We require some usual measurable conditions on $f(\cdot, \cdot)$ and $g(\cdot)$ so that the SDE \eqref{eq:forward} is well-defined, 
and that $X_t$ has a smooth probability density $p(t, x):= \mathbb{P}(X_t \in dx)/dx$
(see \cite{SV79}).
By the time reversal formula \cite{Ander82, HP86}, let
\begin{equation}
\label{eq:truetr}
d\overline{X}_t = \overline{f}^\circ(t,\overline{X}_t) dt + g(T-t) dB_t, 
   \overline{X}_0 \sim p(T, \cdot),
\end{equation}
whose drift is:
\begin{equation}\label{eq:true_f}
\overline{f}^\circ(t,y) = -f(T-t, y) + \overline{g}(t)^2 \nabla \log p(T-t, y).
\end{equation}
Here $\{B_t\}_{t \ge 0}$ is a copy of Brownian motion in $\mathbb{R}^d$.
The processes $\{\overline{X}_t\}_{0 \le t \le T}$ and $\{X_{T-t}\}_{0 \le t \le T}$ have the same marginal distribution \cite{Ander82, HP86}. 
Hence the output $\overline{X}_T$ follows the desirable distribution $p_{\tiny \rm{data}}(\cdot)$. 

\begin{remark}[Examples]
    In practice, notable examples include the {\em Variance Exploding} (VE) model \cite{Karras22} and 
{\em Variance Preserving} (VP) model \cite{Song20}, which serves as a demonstrative case throughout the paper:
\begin{itemize}
    \item (VE): $f(t,x) = 0$, $g(t) = \sqrt{2t+\epsilon}$ for some (small) $\epsilon>0$.
    \item (VP): $f(t,x) = -\frac{1}{2}\left(a+ \frac{(b-a)t}{T}\right)x$, $g(t) = \sqrt{a+ \frac{(b-a)t}{T}}$ for some $b > a > 0$. 
\end{itemize}
\end{remark}

    The main challenge in sampling the process $\overline{X}$ is that the data distribution $p_{\tiny\mathrm{data}}(\cdot)$ is unknown, and consequently the associated score function $\nabla \log p(\cdot,\cdot)$ is not available.
Moreover, the initialization $\overline{X}_0 \sim p(T,\cdot)$ required for each sample generation also depends on $p_{\tiny\mathrm{data}}(\cdot)$.
The idea of score-based diffusion models is to learn the score function via a parametrized family of functions $\{s_\theta(t,x)\}_\theta$ (e.g., neural networks), with a limited number of samples from $p_{\tiny \rm{data}}(\cdot)$ (see \cite{HJA20, song2019generative, Song20}).
The resulting backward process $\{Y_t\}_{0 \le t \le T}$ for sampling is:
\begin{equation}
\label{eq:back}
d Y_t = \overline{f}(t, Y_t) dt + \overline{g}(t) dB_t,    Y_0 \sim p_{\tiny \rm{noise}}(\cdot),
\end{equation}
where $\overline{g}(t):= g(T-t)$ and 
\begin{eqnarray}\label{eq:approx_f}
    \overline{f}(t,y):= -f(T-t, y) + g(T-t)^2 s_{\theta_*}(T-t, y),
\end{eqnarray}
serves as an approximation to $\overline{f}^\circ$, which is defined in \eqref{eq:true_f}.
Here,
\begin{itemize}[itemsep = 3 pt]
\item
$p_{\tiny \rm{noise}}(\cdot)$ 
is a proxy to $p(T, \cdot)$ for generating the target distribution from noise, which should {\em not} depend on $p_{\tiny \rm{data}}(\cdot)$. The form of $p_{\tiny \rm{noise}}(\cdot)$ is related to the design of the diffusion model, 
i.e., the pair $(f(\cdot, \cdot), g(\cdot))$.
In practice, we usually take $p_{\tiny \rm{noise}}(\cdot)$ as $  \mathcal{N}(0, T^2 I)$ for the VE model and  $p_{\tiny \rm{noise}}(\cdot) $ as $ \mathcal{N}(0, I)$ for the VP model.
 \item
$\{s_\theta(t,x)\}_\theta$ are function approximations to the score $\nabla \log p(t,x)$,
which are trained by solving some stochastic optimization schemes.
This technique is known as {\em score matching},
and the learned $s_{\theta_*}(t,x)$ is the {\em score matching function}.
There are several existing score matching methods, 
among which the most widely used one is the {\em denoising score matching} (DSM) \cite{Hyv05, Vi11}:
\begin{equation*}
\qquad \min_\theta \int_0^T \lambda(t) \, \mathbb{E}_{X_0\sim p_{\rm \scriptsize data}(\cdot)} \left[ \mathbb{E}_{X_t | X_0}\Big|s_{\theta}(t,X_t)- \nabla \log p(t,X_t | X_0)\Big|^2 \right] dt,
\end{equation*}
where $\lambda: \mathbb{R}_+ \to \mathbb{R}_+$ is a weight function.
\end{itemize}

   Most successful diffusion models rely on extensive score-matching training on large, generic datasets and are therefore trained once in advance and reused across downstream tasks; such models are commonly referred to as {\it pretrained models}.
\footnote{It was shown in \cite{Karras22} that a wide class of diffusion models can be obtained from the VE model via reparameterization. Consequently, in practice it suffices to pretrain a high-quality VE model, that is, to learn its score function with sufficient accuracy.}.
Throughout, we let the backward process \eqref{eq:back} represent such a pretrained model.
Denote by $P_{[0,T]}(\cdot)$ the law of the process $\{Y_t\}_{0 \le t \le T}$ on path space under which $\mathbb{E}_{[0,T]}[\cdot]$ is the associated expectation, and by $P_t(\cdot)$ its marginal distribution at time $t$ with $\mathbb{E}_{t}[\cdot]$ the associated expectation.
We write
\begin{equation*}
p_{\tiny \rm{pre}}(\cdot):= P_T(\cdot),
\end{equation*}
to denote the output distribution of the pretrained model.
A well-trained model is expected to generate reliable samples in the sense that
$p_{\tiny \rm{pre}}(\cdot)$ provides a good approximation of the data distribution
$p_{\tiny \rm{data}}(\cdot)$.
Under suitable conditions on the model components
${f(\cdot,\cdot), g(\cdot), T, p_{\tiny \rm{noise}}(\cdot), s_\theta(\cdot,\cdot)}$
and on the target distribution $p_{\tiny \rm{data}}(\cdot)$,
this approximation can be made quantitative by bounding appropriate discrepancies between
$p_{\tiny \rm{pre}}(\cdot)$ and $p_{\tiny \rm{data}}(\cdot)$. The following proposition bounds the total variation and the Wasserstein distance between 
$p_{\tiny \rm{pre}}(\cdot)$ and $p_{\tiny \rm{data}}(\cdot)$ 
for the aforementioned VE and VP models. 
\begin{proposition}
\label{prop:VEVP}
Assume that $\mathbb{E}_{p_{\tiny \rm{data}}(\cdot)}|X|^2 < \infty$, 
and that the score matching satisfies:
\begin{equation}
\label{eq:sme}
\sup_{0 \le t \le T}\mathbb{E}_{p(t, \cdot)} |s_{\theta_*}(t, X) - \nabla \log p(t, X)|^2 \le \varepsilon^2_{\tiny \texttt{VP}}    \quad (\mbox{resp}.\,\, \varepsilon^2_{\tiny \texttt{VE}}),
 \end{equation}
 for the VE (resp. VP) model. Then the following results hold.
\begin{enumerate}[itemsep = 3 pt]
\item (Total variation) \cite{Chen23, TZ24tut}
There are $C_{\tiny \mbox{VE}}, C_{\tiny \mbox{VP}}> 0$ (independent of $T$) such that
\begin{equation}
d_{TV}\left(p_{\tiny \rm{pre}}(\cdot), p_{\tiny \rm{data}}(\cdot) \right) \le 
\left\{ \begin{array}{lcl}
C_{\tiny \texttt{VE}} \left( T^{-1}  \sqrt{\mathbb{E}_{p_{\tiny \rm{data}}(\cdot)}|X|^2}+ \varepsilon_{\tiny \texttt{VE}} \sqrt{T} \right) & \mbox{for VE}, \\
C_{\tiny \texttt{VP}} \left(e^{-C_{\tiny \texttt{VP}}\,T} \sqrt{\mathbb{E}_{p_{\tiny \rm{data}}(\cdot)}|X|^2} + \varepsilon_{\tiny \texttt{VP}} \sqrt{T}\right) & \mbox{for VP}.
\end{array}\right.
\end{equation}
\item (Wasserstein) \cite{GNZ23, TZ24tut}
Assume further that $p_{\tiny \rm{data}}(\cdot)$ is $\kappa$-strongly log-concave 
\footnote{A smooth function $\ell: \mathbb{R}^d \to \mathbb{R}$ is $\kappa$-strongly log-concave if $\langle\nabla \log \ell(x) - \nabla \log \ell(y),  (x-y) \rangle \le -\kappa |x-y|^2$ for all $x,y$.} 
for $\kappa$ sufficiently large.
There are $C_{\tiny \mbox{VE}}, C_{\tiny \mbox{VP}}> 0$ (independent of $T$) such that
\begin{equation}
W_2\left(p_{\tiny \rm{pre}}(\cdot), p_{\tiny \rm{data}}(\cdot) \right) \le 
\left\{ \begin{array}{lcl}
C_{\tiny \texttt{VE}} \left( T^{-1}  \sqrt{\mathbb{E}_{p_{\tiny \rm{data}}(\cdot)}|X|^2} + \varepsilon_{\tiny \texttt{VE}}\,T^{2} \right) & \mbox{for VE}, \\
C_{\tiny \texttt{VP}} \left(e^{-C_{\tiny \texttt{VP}}\,T} \sqrt{\mathbb{E}_{p_{\tiny \rm{data}}(\cdot)}|X|^2} + \varepsilon_{\tiny \texttt{VP}} \right)  & \mbox{for VP}.
\end{array}\right.
\end{equation}
\end{enumerate}
\end{proposition}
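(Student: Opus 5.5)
Both parts split the error into an initialization error and a score-matching error. Let $\overline{X}$ denote the exact reversal \eqref{eq:truetr}, whose terminal law is $p_{\tiny \rm{data}}(\cdot)$. Let $Y$ denote the pretrained process \eqref{eq:back}. I introduce an intermediate process $\widetilde Y$ that uses the learned drift $\overline f$ but starts from $p(T,\cdot)$ instead of $p_{\tiny \rm{noise}}(\cdot)$. The triangle inequality then bounds $d(p_{\tiny \rm{pre}}, p_{\tiny \rm{data}})$ by $d(\mathrm{Law}(Y_T),\mathrm{Law}(\widetilde Y_T)) + d(\mathrm{Law}(\widetilde Y_T),\mathrm{Law}(\overline X_T))$, for $d$ either $d_{TV}$ or $W_2$.

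\textbf{Total variation.} For the second term I use the data-processing inequality and Pinsker's inequality, which reduce it to the KL divergence between the path laws of $\overline X$ and $\widetilde Y$. These two processes share the initial law and the diffusion coefficient $\overline g$. Girsanov's theorem, applied after a standard localization or approximation argument to avoid Novikov-type conditions (as in \cite{Chen23}), gives
\[
d_{KL} = \tfrac12\int_0^T \overline g(t)^2\,\mathbb{E}\,|s_{\theta_*}(T-t,\overline X_t)-\nabla\log p(T-t,\overline X_t)|^2\,dt .
\]
Here $\overline X_t \sim p(T-t,\cdot)$, so \eqref{eq:sme} bounds the integrand by $\overline g(t)^2\varepsilon^2$. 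For the first term, Markov-kernel contraction gives $d_{TV}(\mathrm{Law}(Y_T),\mathrm{Law}(\widetilde Y_T))\le d_{TV}(p_{\tiny \rm{noise}},p(T,\cdot))$. For VP, the Ornstein--Uhlenbeck structure gives $p(T,\cdot)=\mathrm{Law}(e^{-\int_0^T\beta/2}X_0+\sigma_T Z)$, and a KL or Gaussian-coupling computation yields $C e^{-cT}\sqrt{\mathbb{E}|X|^2}$. For VE, $p(T,\cdot)=\mathrm{Law}(X_0+\sigma_T Z)$ with $\sigma_T\asymp T$. Comparing with $\mathcal N(0,T^2I)$ through the TV between Gaussians with shifted means, averaged over $X_0$, gives a bound of order $\mathbb{E}|X_0|/T$, hence $T^{-1}\sqrt{\mathbb{E}|X|^2}$.

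\textbf{Collecting constants.} The stated form $\varepsilon\sqrt T$ requires $\int_0^T\overline g^2$ to be of order $T$. This is immediate for VP, where $g$ is bounded. For VE, $\int_0^T \overline g^2 \asymp T^2$, so I would either rescale time or absorb the extra factor into $C_{\texttt{VE}}$ as in \cite{TZ24tut}. This bookkeeping is where I expect to lean on the cited reference.

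\textbf{Wasserstein.} Pinsker's inequality is not available here, so I use a synchronous coupling: drive $Y$, $\widetilde Y$ and $\overline X$ with the same Brownian motion $B$. Itô's formula applied to $|Y_t-\overline X_t|^2$ gives a drift term $\langle \overline f(t,Y_t)-\overline f^\circ(t,\overline X_t),\,Y_t-\overline X_t\rangle$. I split this as
\[
\langle \overline f^\circ(t,Y_t)-\overline f^\circ(t,\overline X_t),\,Y_t-\overline X_t\rangle + \langle \overline g^2(s_{\theta_*}-\nabla\log p)(T-t,Y_t),\,Y_t-\overline X_t\rangle .
\]
Strong log-concavity of $p_{\tiny \rm{data}}$ propagates along the forward OU or heat flow, so each $p(t,\cdot)$ stays strongly log-concave, with constant $\kappa_t \ge \min(\kappa,\cdot)$ in the VP case. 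The first term is then one-sidedly contractive, bounded by $-c_t|Y_t-\overline X_t|^2$ for some $c_t > 0$ once $\kappa$ is large.

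\textbf{Main obstacle.} The hard step is the second term. The score error is controlled only in $L^2(p(T-t,\cdot))$, but it is evaluated along $Y_t$. I would first insert $\pm(s_{\theta_*}-\nabla\log p)(T-t,\overline X_t)$ to bring the evaluation onto $\overline X_t$. The leftover difference is then handled through a Lipschitz bound on $s_{\theta_*}$ or on the true score, which strong log-concavity and smoothness provide; as in \cite{GNZ23}, an assumption along these lines is implicit.

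\textbf{Grönwall step.} Writing $e_t=\mathbb{E}|Y_t-\overline X_t|^2$, these bounds give a differential inequality $e_t'\le -c_t e_t + 2\overline g(t)^2\varepsilon\sqrt{e_t}$, which after the square-root transformation becomes linear in $\sqrt{e_t}$. For the initial condition I take the optimal coupling of $p_{\tiny \rm{noise}}$ and $p(T,\cdot)$, so that $\sqrt{e_0}=W_2(p_{\tiny \rm{noise}},p(T,\cdot))$. This quantity is $\le e^{-cT}\sqrt{\mathbb{E}|X|^2}$ for VP. For VE it is $\le T^{-1}$ times the same quantity after using $\sigma_T \approx T$ and the contraction. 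Integrating the Grönwall inequality gives $\varepsilon$ times $\int_0^T \overline g^2 e^{-\int c}$. For VP this is $O(1)$ by contraction; for VE the weak contraction gives a polynomial factor in $T$, which I expect to produce the stated $T^2$.
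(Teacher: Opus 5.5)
The paper gives no proof of this proposition; it imports both parts from the cited works. Your outline follows the same standard routes: data processing, Pinsker and Girsanov for TV, and synchronous coupling, propagated strong log-concavity and Gr\"onwall for $W_2$. The VP total-variation bound goes through essentially as you describe. There are, however, two places where your argument does not deliver what is claimed.

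\textbf{VE total variation.} Your own Girsanov identity gives $d_{KL}\le\frac12\varepsilon_{\mathrm{VE}}^2\int_0^T\overline g(t)^2\,dt=\frac12\varepsilon_{\mathrm{VE}}^2(T^2+\epsilon T)$ for $g(t)=\sqrt{2t+\epsilon}$. Pinsker therefore yields a score term of order $\varepsilon_{\mathrm{VE}}T$, not $\varepsilon_{\mathrm{VE}}\sqrt T$. The extra $\sqrt T$ cannot be ``absorbed into $C_{\mathrm{VE}}$'', because the statement requires $C_{\mathrm{VE}}$ to be independent of $T$. Rescaling time changes $g$, $p_{\mathrm{noise}}$ and the meaning of $T$, so it proves a different statement. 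As written, your argument gives the VE bound with $\varepsilon_{\mathrm{VE}}T$. Getting $\sqrt T$ needs either a $g^2$-weighted hypothesis such as $\sup_t\mathbb{E}_{p(t,\cdot)}|g(t)(s_{\theta_*}-\nabla\log p)(t,X)|^2\le\varepsilon^2$ (then $d_{KL}\le\frac12\varepsilon^2T$), or an argument other than data processing on path space. Note that the paper's own use of this bound in the proof of Theorem~\ref{thm:TV}, with score term $\varepsilon\sqrt{T/2}$, matches Girsanov only when $\int_0^T\overline g^2\lesssim T$.

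\textbf{$W_2$ score-error step.} You correctly flag that \eqref{eq:sme} controls the error only in $L^2(p(t,\cdot))$, but the proposed Lipschitz patch does not close the gap. Strong log-concavity gives only the one-sided bound $-\nabla^2\log p(t,\cdot)\succeq\kappa_t I$. It does not make $\nabla\log p(t,\cdot)$ Lipschitz, and nothing in the statement controls the Lipschitz constant of $s_{\theta_*}$.

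More seriously, even granting a Lipschitz constant $L$ for $s_{\theta_*}-\nabla\log p$, the leftover term contributes $+2\overline g(t)^2L\,e_t$, which does not appear in your differential inequality. The correct form is $\frac{d}{dt}\sqrt{e_t}\le-(c_t-\overline g(t)^2L)\sqrt{e_t}+\overline g(t)^2\varepsilon$.

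\begin{itemize}
\item For VE, $c_t=\overline g(t)^2\kappa_{T-t}$ and $\kappa_s\le\sigma_s^{-2}\asymp s^{-2}$, since the covariance of $p(s,\cdot)$ dominates $\sigma_s^2I$. So the net rate is expansive once $T-t\gtrsim L^{-1/2}$, and Gr\"onwall produces a factor of order $e^{LT^2}$, not a polynomial in $T$.
\item For VP, the one-sided rate is $\overline g(t)^2(\kappa_{T-t}-\frac12)$ with $\kappa_s\to1$ as $s\to\infty$ however large $\kappa$ is. The contraction saturates at $\frac12\overline g(t)^2$, so you need $L<\frac12$, which ``$\kappa$ sufficiently large'' cannot supply.
\end{itemize}

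Hence your expected $T^2$ is not derived.

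What does close the step is a score error measured in sup-norm, as the paper itself assumes in Assumption~\ref{assump:2}(iv), or under the law of $Y_t$. Then you split as $[\overline f^\circ(t,Y_t)-\overline f^\circ(t,\overline X_t)]+\overline g(t)^2(s_{\theta_*}-\nabla\log p)(T-t,Y_t)$, with no Lipschitz term. For VE, use $\kappa_s\ge(\kappa^{-1}+\sigma_s^2)^{-1}$ and $(\sigma_s^2)'=g(s)^2$. This gives $e^{-\int_t^T c_r\,dr}\le(1+\kappa\sigma_{T-t}^2)^{-1}$, so the score term is $\varepsilon\int_0^T\overline g(t)^2(1+\kappa\sigma_{T-t}^2)^{-1}dt=\frac{\varepsilon}{\kappa}\log(1+\kappa\sigma_T^2)$. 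That is only logarithmic in $T$ and well within $\varepsilon T^2$. The initialization error $W_2(p(T,\cdot),p_{\mathrm{noise}})$ does not itself decay in $T$ for VE. The $T^{-1}$ comes entirely from the damping factor $(1+\kappa\sigma_T^2)^{-1}\lesssim T^{-1}$, which is exactly what the dropped $L$-term would destroy.
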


   The condition \eqref{eq:sme} formalizes a black-box score-matching error, an assumption that is standard in the recent literature on the convergence analysis of diffusion models \cite{Chen23, GNZ23, LLT22, LW23, TZ24}. In this line of work, the statistical and algorithmic aspects of score estimation are treated as a separate problem, and the learned score function is taken as an approximate oracle whose error is summarized by \eqref{eq:sme}. Complementary to this approach, there exists a body of work \cite{HR24,  WHT24} that derives quantitative rates for score matching itself, typically under structural assumptions such as low-dimensional or parametric representations of the score function class $\{s_\theta(t,x)\}_\theta$. However, state-of-the-art score-based diffusion models in practice rely on highly expressive and deep neural network architectures \cite{Karras22, Song20}, for which such structural assumptions are difficult to justify. For this reason, and in line with the convergence-focused literature, we adopt the black-box score-matching assumption \eqref{eq:sme} throughout this paper.

\section{Diffusion guidance by conditioning}
\label{sc3}

In this section, we introduce a \emph{Conditional Diffusion Guidance} framework and  establish the mathematical foundations. Our focus is on enforcing \emph{hard constraints} on the generated samples, as opposed to encouraging desirable outcomes through soft reward signals.

Let $S \subset \mathbb{R}^d$ be a Borel set, which we refer to as the \emph{guidance set}, and denote by $p^S_{\mathrm{data}}(\cdot)$ the target distribution conditioned on $S$. That is, for a random variable $Z \sim p_{\mathrm{data}}(\cdot)$,
\[
(Z \mid Z \in S) \sim p^S_{\mathrm{data}}(\cdot).
\]
The set $S$ encodes the event or structural constraint of interest and may represent a wide range of conditions on the data, including discrete labels, functional constraints, rare events, or application-specific admissible regions. For example, in time-series generation, $S$ may characterize regimes exhibiting certain statistical or dynamical properties, such as high or low volatility periods, prescribed terminal values, or specific temporal patterns. Throughout, we assume that $S$ is sufficiently regular (e.g., Borel and non-negligible) so that the conditional distribution $p^S_{\mathrm{data}}(\cdot)$ is well-defined.

   From a modeling perspective, conditioning on $S$ imposes a \emph{hard constraint}, in the sense that generated samples are required to satisfy the event $\{Z \in S\}$ almost surely. This should be contrasted with the more commonly studied \emph{soft conditioning} or reward-based guidance approaches. While soft constraints are often easier to optimize, they generally do not guarantee constraint satisfaction and may yield samples that violate physical laws, feasibility requirements, or operational rules. Hard conditioning is substantially more challenging, particularly when the event $\{Z \in S\}$ is rare under the data distribution, causing the conditional law $p^S_{\mathrm{data}}(\cdot)$ to concentrate on low-probability or geometrically thin regions of the state space. In such settings, naive rejection sampling or reward-based reweighting becomes ineffective, and enforcing the constraint typically requires modifying the entire generative trajectory rather than only its terminal distribution. These considerations motivate a conditioning-based framework that directly targets the conditional law associated with the event $\{Z \in S\}$.

   Our framework can be easily extended to settings in which the guidance set $S$ is specified implicitly through a measurable functional $F: \mathbb{R}^d \to \mathcal{Y}$, so that the conditioning event takes the form
\begin{equation}\label{eq:general_conditioning}
    F(Y_T) \in S.
\end{equation}
This formulation covers constraints such as conditions on cumulative or averaged quantities, path-dependent risk measures, tail events of aggregate observables, and system-level performance metrics arising in operations research and related application domains. 
It also accommodates multi-objective guidance by taking $F=(F_1,\cdots,F_K):\mathbb{R}^d\rightarrow\mathbb{R}^K$ and imposing hard constraints $F_k(Y_T)\in S_k$ for each objective. Pluralistic alignment can be modeled by allowing multiple acceptable guidance sets $\{S^{(j)}\}_{j=1}^K\subset \mathbb{R}^K$, corresponding to distinct alignment criteria, and targeting a mixture of the resulting conditional distributions.

   The problem of interest is to exploit a given pretrained models to generate samples that approximate the conditional distribution $p^S_{\tiny \rm{data}}(\cdot)$. Recall that $p_{\tiny \rm{pre}}(\cdot)$ denotes the output distribution of the pretrained model~\eqref{eq:back}, and let $p^S_{\tiny \rm{pre}}(\cdot)$ be its conditioning on $S$. As discussed in Section~\ref{sc2}, a well-trained pretrained model yields $p_{\tiny \rm{pre}}(\cdot) \approx p_{\tiny \rm{data}}(\cdot)$, which in turn implies $p^S_{\tiny \rm{pre}}(\cdot) \approx p^S_{\tiny \rm{data}}(\cdot)$ under mild assumptions on the guidance set $S$ (see Section~\ref{sc4}). Therefore, our goal is to use the pretrained model to sample from $p^S_{\tiny \rm{pre}}(\cdot)$, or more precisely, to generate sample paths
\begin{equation}
\label{eq:condiff}
\{Y^S_t\}_{0 \le t \le T} \sim P_{[0,T]}(Y \mid Y_T \in S).
\end{equation}

\subsection{Conditional diffusion sampling}
\label{sc31}
We start by discussing how the process $\{Y^S_t\}_{0 \le t \le T}$
is generated. 
To make the framework meaningful, assume that $P_{[0,T]}(Y_T\in S)>0$.
For $t \ge 0$, let
\begin{equation}
\label{eq:hfunc}
h(t,y):= P_{[0,T]}(Y_T \in S \,|\, Y_t = y).
\end{equation}
By the Baye's rule, we have for $s > 0$,
\begin{equation}
\label{eq:Bayes}
\begin{aligned}
P_{[0,T]}(Y_{t+s} = y' \,|\, Y_t = y, \, Y_T \in S)
& = \frac{P_{[0,T]}(Y_{t+s}= y', Y_T \in S \,|\, Y_t = y) }{P_{[0,T]}(Y_T \in S \,|\, Y_t = y)} \\
& = \frac{h(t+s, y')}{h(t,y)}P_{[0,T]}(Y_{t+s} = y' \,|\, Y_t = y),
\end{aligned}
\end{equation}
which is known as {\em Doob's $h$-transform} \footnote{Doob's $h$-transform is a general concept of conditioning a Markov process, provided that $h$ is harmonic with respect to the Markov generator. In our setting, the $h$ function \eqref{eq:hfunc} arises as a special case by conditioning on the terminal data.
The ``bridge" calculation in \eqref{eq:Bayes} can be understood in terms of transition densities,
see \cite[Section 2]{FPY92} for a justification. \label{foot:h}}. Note that
$h(t,y):= P_{[0,T]}(Y_T \in S \,|\, Y_t = y)$ is harmonic with respect to the generator of 
$\{Y_t\}_{0 \le t \le T}$ and satisfies the following second-order PDE:
\begin{equation}\label{eq:h_pde}
\partial_t h + \overline{f}(t,y) \cdot \nabla h + \frac{1}{2} \overline{g}(t)^2 \Delta h = 0,
\end{equation}
with the terminal condition $ h(T, \cdot) = 1(\cdot \in S)$. 
It is well known that the solution to the linear parabolic equation \eqref{eq:h_pde}
is smooth in $[0,T) \times \mathbb{R}^d$
if $\overline{g}(\cdot)$ is bounded away from zero,
and $\overline{f}(\cdot, \cdot)$, $\overline{g}(\cdot)$ are H\"older continuous
(see \cite[Chapter 3, Theorem 5]{Fried64} or \cite[Theorem 6]{Ol62} for the interior a priori estimate).

The $h$-transform \eqref{eq:Bayes}
reveals a change of measure between the processes $\{Y_t\}_{0 \le t \le T}$ and $\{Y^S_t\}_{0 \le t \le T}$.
As a result, $\{Y_t^S\}_{0 \le t \le T}$ is also a diffusion process, 
whose dynamics is given in the following proposition.

\begin{proposition}\label{prop:Y_T^S}
The distribution of $\{Y^S_t\}_{0 \le t < T}$ is governed by:
\begin{equation}
\label{eq:backguide}
dY^S_t = \left(\overline{f}(t, Y^S_t) + \overline{g}(t)^2 \nabla \log h(t, Y^S_t) \right)dt + \overline{g}(t) d\overline B_t,   
\quad
Y_0^S \sim \nu_0^S(\cdot),
\end{equation}
where $\overline B_t$ is an $\mathcal{F}$-adapted Brownian motion; $\overline{f}(\cdot, \cdot)$ and $\overline{g}(\cdot)$ are given by the pretrained model \eqref{eq:back}, and the initial distribution is the conditional initial law
\[
    \nu_0^S(dy)
    :=
    \frac{h(0,y)}{P_{[0,T]}(Y_T\in S)}\,p_{\tiny \rm noise}(dy).
\]
Define $Y^S_T = \lim_{t\rightarrow T}Y^S_t$, then we have
\begin{equation}\label{eq:limit}
    Y_T^S \sim p_{\tiny\rm pre}^S(\cdot).
\end{equation}
\end{proposition}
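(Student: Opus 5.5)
The plan is to realize the conditioning \eqref{eq:condiff} as a change of measure on path space, and then read off the new drift from Girsanov's theorem. Let $\rho := P_{[0,T]}(Y_T\in S)>0$ and define the measure $Q$ by $dQ/dP_{[0,T]} := 1(Y_T\in S)/\rho$. By construction, $Q$ is exactly the law $P_{[0,T]}(\,\cdot \mid Y_T\in S)$. It follows at once that the law of $Y_T$ under $Q$ is $p^S_{\tiny\rm pre}(\cdot)$. The time-$0$ marginal under $Q$ is obtained by conditioning on $Y_0$: $Q(Y_0\in dy) = \rho^{-1}\,\mathbb{E}[1(Y_T\in S)\mid Y_0=y]\,p_{\tiny\rm noise}(dy) = \nu_0^S(dy)$. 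So the remaining tasks are to identify the dynamics of $Y$ under $Q$ on $[0,T)$, and to check that the limit in \eqref{eq:limit} is consistent with $Y_T$.

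\textbf{Density process.} By the Markov property, the density process is
\[
M_t := \mathbb{E}_{[0,T]}\Big[\tfrac{1(Y_T\in S)}{\rho}\,\Big|\,\mathcal{F}_t\Big] = \frac{h(t,Y_t)}{\rho}.
\]
This is a bounded, nonnegative, uniformly integrable martingale. Since $h$ is smooth on $[0,T)\times\mathbb{R}^d$ and solves \eqref{eq:h_pde}, applying It\^o's formula on $[0,T-\delta]$ kills the $dt$ terms and leaves
\[
dM_t = \rho^{-1}\,\overline{g}(t)\,\nabla h(t,Y_t)\cdot dB_t = M_t\,\overline{g}(t)\,\nabla\log h(t,Y_t)\cdot dB_t .
\]
This is valid on $\{h>0\}$. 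Strict positivity of $h(t,\cdot)$ for $t<T$ should follow from the strong maximum principle or the positivity of the transition density of a nondegenerate diffusion, given $\rho>0$. If that fails, I would note that $Q(\inf_{s\le t} M_s=0)=0$, which suffices.

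\textbf{Girsanov step.} Girsanov's theorem, applied on each interval $[0,T-\delta]$, shows that
\[
\overline{B}_t := B_t - \int_0^t \overline{g}(s)\,\nabla\log h(s,Y_s)\,ds
\]
is a $Q$-Brownian motion. Substituting into \eqref{eq:back} gives \eqref{eq:backguide} under $Q$ on $[0,T-\delta]$. Letting $\delta\downarrow 0$ and using consistency of the measures on $\mathcal{F}_{T-\delta}$ yields the representation on $[0,T)$. Since the law of the whole path $\{Y_t\}_{t<T}$ under $Q$ coincides with the law of $Y^S$, we have $Y^S\overset{d}{=}Y$ under $Q$. Under $Q$ the path is continuous on $[0,T]$, because $Q\ll P_{[0,T]}$. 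Hence $\lim_{t\to T}Y^S_t$ exists a.s. and has the law of $Y_T$ under $Q$, namely $p^S_{\tiny\rm pre}$. To make the identification of laws rigorous without invoking uniqueness for an SDE with a singular drift, I would argue at the level of finite-dimensional distributions via \eqref{eq:Bayes}: the transition kernels of $Y^S$ are the $h$-transformed kernels.

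\textbf{Main obstacle.} The difficult step is the behavior as $t\to T$. There $\nabla\log h$ blows up, since $h(T,\cdot)=1_S$ is discontinuous, so Novikov's condition cannot be checked on all of $[0,T]$. The plan is to avoid it altogether. We never apply Girsanov up to $T$; instead we use only the explicit martingale $M_t=h(t,Y_t)/\rho$, which is bounded, so no Novikov condition is needed on $[0,T-\delta]$. The endpoint is then handled by the path-continuity argument under $Q\ll P_{[0,T]}$ described above. A secondary technical point is justifying It\^o's formula, which needs $h\in C^{1,2}$ on $[0,T)$; this is supplied by the interior regularity cited after \eqref{eq:h_pde}.
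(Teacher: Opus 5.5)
Your proposal is correct and follows the paper's route. You condition via the density $1(Y_T\in S)/P_{[0,T]}(Y_T\in S)$, identify the density process as $h(t,Y_t)/P_{[0,T]}(Y_T\in S)$, and use It\^o's formula with \eqref{eq:h_pde} to get $dM_t=M_t\,\overline{g}(t)\nabla\log h(t,Y_t)\cdot dB_t$. You then apply Girsanov on each $[0,T-\delta]$ and read off the terminal law from path continuity and $Q=P_{[0,T]}(\cdot\mid Y_T\in S)$, exactly as the paper does.

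The differences are only technical, and both favor you. Defining $Q$ directly on $\mathcal{F}_T$ replaces the paper's Carath\'eodory extension from the $\sigma$-fields $\mathcal{F}_{T-\varepsilon}$. Using the bounded, explicitly known density process in Girsanov removes the need for the Novikov-type bound \eqref{eq:exp_bound}, which the paper asserts from PDE estimates.
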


The proof for \eqref{eq:backguide}  is a direct application of \eqref{eq:Bayes} and Girsanov's theorem (see \cite[IV.39]{RW00} and \cite[Theorem 2]{jamison1975markov}).  The key is to prove \eqref{eq:limit} using a limiting argument.
\begin{proof}
    By standard PDE estimates, we have for each $\varepsilon>0$, $h\in C^{1,2}([0,T-\varepsilon]\times \mathbb{R}^d)$ and $h(t,y)>0$ on $[0,T-\varepsilon]\times \mathbb{R}^d$. In addition, 
    \begin{equation}\label{eq:exp_bound}
        \mathbb{E}_P\left[\exp\Big( \frac{1}{2}\int_0^{T-\varepsilon}|\overline{g}(t)\nabla \log h(t,Y_t)|^2dt\Big)\right]<\infty.
    \end{equation}

Let $\Omega:=C([0,T];\mathbb{R}^d)$, $Y_t(\omega):=\omega(t)$ be the coordinate process and $\mathcal{F}_t=\sigma(Y_s:s\leq t)$ the canonical filtration.

First, let us define the normalized terminal indicator $Z:=\frac{1(Y_T\in S)}{P_{[0,T]}(Y_T\in S)}$. For $t\in[0,T]$, set $M_t = \mathbb{E}[Z|\mathcal{F}_t]$. Then
\begin{equation*}
    \mathbb{E}_P[Z|\mathcal{F}_t] = \frac{1}{P_{[0,T]}(Y_T\in S)}{P}_{[0,T]}(Y_T\in S|\mathcal{F}_t) = \frac{1}{P_{[0,T]}(Y_T\in S)}h(t,Y_t),
\end{equation*}
    using the Markov property. Moreover, since $(M_t)$ is a uniformly integrable martingale (bounded by $\frac{1}{P_{[0,T]}(Y_T\in S)}$), we have
    \begin{equation}\label{eq:M_conv}
        M_t \rightarrow M_T = Z \in L^1 \text{ and } a.s., \text{ as } t \rightarrow T.
    \end{equation}
    For each $\varepsilon>0$, define a probability measure on $\mathcal{F}_{T-\varepsilon}$ by
    \begin{equation*}
   {Q}^\varepsilon(A):=\mathbb{E}_P[1_AM_{T-\varepsilon}], \qquad A\in\mathcal{F}_{T-\varepsilon}.
    \end{equation*}
    If $0<\varepsilon'<\varepsilon$, then $Q^\varepsilon$ and $Q^{\varepsilon'}$ agree on $\mathcal{F}_{T-\varepsilon}$:
    \begin{eqnarray}
        Q^{\varepsilon'}(A) = \mathbb{E}_P[1_AM_{T-\varepsilon'}] = \mathbb{E}_P[1_A\mathbb{E}_P[M_{T-\varepsilon'}|\mathcal{F}_{T-\varepsilon}]]=   Q^{\varepsilon}(A),
    \end{eqnarray}
    since $\{M_t\}_{t \ge 0}$ is a martingale. By Carathéodory's extension theorem
 on the increasing sigma-fields, there is a unique probability measure $Q$ on $\mathcal{F}_{T-}:=\sigma(\cup_{t<T}\mathcal{F}_t)$ such that $Q|_{\mathcal{F}_{T-\varepsilon}}=Q^\varepsilon$ for all $\varepsilon>0$.

Let $A\in \mathcal{F}_{T-}$, by taking a sequence $A_n\in \mathcal{F}_{t_n}$ with $t_n\rightarrow T$ such that $1_{A_n}\rightarrow 1_A$ in $L^1$, we can show that
\begin{equation*}
    Q(A_n) = \mathbb{E}_P[1_{A_n}M_{t_n}] = \frac{{P}_{[0,T]}(A_n\cap \{Y_T\in S\})}{{P}_{[0,T]}(Y_T\in S)}.
\end{equation*}
    Let $n\rightarrow \infty$ and use dominated convergence on both sides (everything bounded) to obtain 
    \begin{eqnarray}\label{eq:Q}
        Q(A) = P_{[0,T]}(A|Y_T\in S).
    \end{eqnarray}

In particular, for any Borel set $A\subset \mathbb{R}^d$,
\begin{equation*}
    Q(Y_0\in A)
    =
    \mathbb{E}_P[1_{\{Y_0\in A\}}M_0]
    =
    \frac{1}{P_{[0,T]}(Y_T\in S)}
    \int_A h(0,y)\,p_{\tiny \rm noise}(dy).
\end{equation*}
Thus, under $Q$, the initial law of the coordinate process is $\nu_0^S$.

Define the conditional process $Y^S$ as the coordinate process under $Q$: on canonical path space $\Omega$ with coordinate map $Y_t(\omega)=\omega(t)$, set 
\begin{eqnarray}\label{eq:Y^S_coordinate}
    Y_t^S:=Y_t, \text{ as a random variable on } (\Omega, \mathcal{F}_{T-},Q), \qquad 0\leq t<T.
\end{eqnarray}

Fix $\varepsilon>0$ and for $t\in[0,T-\varepsilon]$, applying Itô's formula, \eqref{eq:h_pde} and \eqref{eq:M_conv} yields
\begin{equation*}
    d M_t = M_t \overline{g}(t)\nabla \log h(t,Y_t)\cdot d B_t.
\end{equation*}
 Thanks to \eqref{eq:exp_bound}, Girsanov can be applied on $[0,T-\varepsilon]$. In addition,   
 $B_t^Q := B_t - \int_0^t\overline{g}(s)\nabla \log h(s,Y_s)ds$ is a $Q$-Brownian motion for $t\leq T-\varepsilon$. Substituting into  \eqref{eq:back}, for $t\leq T-\varepsilon$,
 \begin{equation}\label{eq:Y_tB^Q}
     d Y_t = (\overline{f}(t,Y_t)+\overline{g}(t)^2 \nabla \log h(t,Y_t))dt + \overline{g}(t)d B_t^Q.
     \end{equation}
Using the identification \eqref{eq:Y^S_coordinate}, this is exactly the claimed dynamics for $Y^S$ on $[0,T-\varepsilon]$ in \eqref{eq:backguide}, with initial law $\nu_0^S$; and since $\varepsilon>0$ is arbitrary, it holds for all $t<T$. 

Finally, define the terminal value by $Y_T^S :=\lim_{t\rightarrow T}Y_t^S$ which exists 
$Q$-a.s. by path continuity. Moreover, for any Borel $A\in \mathbb{R}^d$, the event $\{Y_T^S\in A\}$ belongs to $\mathcal{F}_{T-}$ because $Y_T^S :=\lim_{n\rightarrow \infty}Y_{T-\frac{1}{n}}^S$.
Hence, applying \eqref{eq:Q} with $A$ replaced by $\{Y_T^S\in A\}$ yields
\begin{eqnarray*}
    Q(Y_T^S\in A) = P_{[0,T]}(Y_T\in A|Y_T\in S), \qquad A\in \mathcal{B}(\mathbb{R}^d). 
\end{eqnarray*}
This completes the proof.
\end{proof}

As noted in \cite[Appendix B]{UZ24}, the dynamics \eqref{eq:backguide} can be interpreted as classifier-guided diffusion sampling \cite{Dh21} in continuous time.
Let \(P^S_{[0,T]}(\cdot)\) denote the law of the guided process \(\{Y_t^S\}_{0 \le t \le T}\), and \(P_t^S(\cdot)\) its marginal distribution at time \(t\).
By \eqref{eq:limit}, simulating the dynamics \eqref{eq:backguide} produces a terminal sample
\[
Y_T^S \sim P_T^S(\cdot) := p^S_{\tiny\rm{pre}}(\cdot).
\]
Given a pretrained model, the only unknown component in \eqref{eq:backguide} is the guidance term
\(\nabla \log h(\cdot,\cdot)\), which enters the dynamics directly.

\subsection{Learning $h$ function}
\label{sc32}
As explained in Section~\ref{sc31}, the key challenge in sampling the guided process
\(\{Y_t^S\}_{0 \le t \le T}\) lies in learning the function \(h\), or more precisely,
its logarithmic gradient \(\nabla \log h\).
Since \(\nabla \log h = (\nabla h)/h\), a good approximator \(h_\theta\) to \(h\) does not guarantee that
\(\nabla \log h_\theta\) is close to \(\nabla \log h(\cdot,\cdot)\).
Hence, a natural strategy is to learn the numerator \(\nabla h\) and the denominator \(h\) separately.
Accordingly, Section~\ref{sc321} focuses on learning \(h\), while Section~\ref{sc322} addresses the estimation of \(\nabla h\).
The main analytical tools we employ are drawn from stochastic analysis,
in particular martingale theory and the quadratic variation of stochastic processes.

\subsubsection{Learning $h$ via martingale loss}
\label{sc321}

   By applying It\^{o}'s formula to $h(t, Y_t)$, we get:
\begin{equation}
\label{eq:hrep}
dh(t,Y_t) = \overline{g}(t) \nabla h(t,Y_t) \cdot dB_t,
\end{equation}
which leads to the following classical result. 
\begin{proposition}
\label{prop:mgle}
Let $\{Y_t\}_{0 \le t \le T}$ be the pretrained model defined by \eqref{eq:back},
and $h(\cdot, \cdot)$ be defined by \eqref{eq:hfunc}.
Then the process
$\{h(t,Y_t)\}_{0 \le t \le T}$ is a local martingale.
\end{proposition}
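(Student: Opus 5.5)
We prove the statement in two complementary ways. The first is essentially immediate; the second gives the Itô representation \eqref{eq:hrep} and identifies where the local-martingale qualifier comes from.

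\emph{Step 1 (true martingale on $[0,T]$).} By the Markov property of the pretrained diffusion \eqref{eq:back}, for every $t\in[0,T]$ we have
\[
h(t,Y_t)=P_{[0,T]}(Y_T\in S\mid \mathcal{F}_t)=\mathbb{E}_{[0,T]}\bigl[1(Y_T\in S)\mid \mathcal{F}_t\bigr],
\]
where $\mathcal{F}_t=\sigma(Y_s:s\le t)$. This is exactly the identity used for $M_t$ in the proof of Proposition \ref{prop:Y_T^S}. A conditional expectation of a bounded terminal variable is a uniformly integrable martingale, bounded in $[0,1]$. Hence $\{h(t,Y_t)\}$ is a martingale, and in particular a local martingale. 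One should check that it is continuous up to $T$, i.e.\ that $h(t,Y_t)\to 1(Y_T\in S)$ a.s. Martingale convergence gives this a.s.\ limit, in agreement with \eqref{eq:M_conv}.

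\emph{Step 2 (Itô representation on $[0,T)$).} Fix $\varepsilon>0$. By the interior parabolic estimates quoted after \eqref{eq:h_pde}, which require $\overline g$ bounded below and $\overline f,\overline g$ Hölder, we have $h\in C^{1,2}([0,T-\varepsilon]\times\mathbb{R}^d)$. Itô's formula applied to $h(t,Y_t)$ with the dynamics \eqref{eq:back} gives
\[
dh(t,Y_t)=\Bigl(\partial_t h+\overline f\cdot\nabla h+\tfrac12\overline g^2\Delta h\Bigr)(t,Y_t)\,dt+\overline g(t)\nabla h(t,Y_t)\cdot dB_t .
\]
The drift vanishes by \eqref{eq:h_pde}, which yields \eqref{eq:hrep}. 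The stochastic integral is a local martingale on $[0,T-\varepsilon]$: localize with $\tau_n=\inf\{t:|Y_t|\ge n\}\wedge(T-\varepsilon)$, on which $\nabla h$ is bounded by continuity on compacts. Letting $\varepsilon\downarrow0$ gives a local martingale on $[0,T)$. Bounded local martingales are martingales, so this is consistent with Step 1 and extends to $T$ by the a.s.\ limit above.

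\emph{Main obstacle.} The delicate point is the behaviour at the terminal time. The terminal datum $1(\cdot\in S)$ is discontinuous, so $\nabla h$ may blow up as $t\to T$ and Itô's formula is unavailable on the closed interval. This is why the statement asserts only a local martingale via \eqref{eq:hrep}. I would handle it exactly as above: work on $[0,T-\varepsilon]$, and recover the endpoint through the conditional-expectation representation of Step 1, which needs no regularity of $h$ at all.
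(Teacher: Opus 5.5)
Your proof is correct. Your Step 2 is the paper's argument, and Step 1 is a different, stronger one.

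\textbf{Step 2 versus the paper.} The paper's proof is your Step 2 in one line. It applies It\^o's formula to $h(t,Y_t)$, uses \eqref{eq:h_pde} to cancel the drift, and reads off the local martingale property from \eqref{eq:hrep}. Your restriction to $[0,T-\varepsilon]$, where $h\in C^{1,2}$, and your localization add the care the paper omits. It\^o's formula is not available up to $t=T$ because the terminal datum is discontinuous.

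\textbf{Step 1 is a different route.} You use the Doob-martingale identity $h(t,Y_t)=\mathbb{E}_{[0,T]}[1(Y_T\in S)\mid\mathcal{F}_t]$ from the Markov property. The paper invokes this only later, for $M_t$ in the proof of Proposition \ref{prop:Y_T^S}. This route gives more: $\{h(t,Y_t)\}_{0\le t\le T}$ is a true, $[0,1]$-valued, uniformly integrable martingale on the closed interval. The martingale property itself needs no smoothness of $h$. Only two things use extra regularity: path continuity on $[0,T)$ uses the regularity of $h$, and the a.s.\ limit at $T$ uses continuity of $Y$, which puts $1(Y_T\in S)$ in $\mathcal{F}_{T-}$.

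\textbf{Consequence for the paper.} The paper later assumes that $\{h(t,Y_t)\}$ is a true martingale, and its footnote gives the sufficient condition that $|\nabla h|$ be bounded. Both are superfluous. That condition generally fails as $t\uparrow T$ for an indicator terminal condition.

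\textbf{What the It\^o route adds.} It gives the explicit representation \eqref{eq:hrep}. This is exactly what the covariation identity \eqref{eq:QV} behind CDG-MCL requires, and Step 1 alone does not provide it.
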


   A natural idea is to exploit the (local) martingale property \footnote{Local martingales are a continuous-time phenomenon \cite[Theorem II.42]{Meyer72}, i.e., there are no (strict) local martingales in discrete time. It is well known that a uniformly integrable local martingale is a (true) martingale, see \cite[Chapter II]{RY99}. 
In our setting, a sufficient condition for $\{h(t,Y_t)\}_{0 \le t \le T}$ to be a (true) martingale 
is that $\overline{g}(\cdot)$, $|\nabla h(\cdot, \cdot)|$ are bounded.} 
of $\{h(t,Y_t)\}_{0 \le t \le T}$ to learn the function $h$.
Assume that $\{h(t,Y_t)\}_{0 \le t \le T}$ is a (true) martingale.
By the $L^2$ projection of conditional expectations,
the $h$ function \eqref{eq:hfunc} {\it uniquely} solves the following optimization problem:
\begin{equation*}
\min_{\ell(\cdot, \cdot)\in \mathcal{K}} \mathbb{E}_{[0,T]} \left[\int_0^T \Big(\ell(t,Y_t) - 1(Y_T \in S)\Big)^2 dt\right].
\end{equation*}
where
\begin{eqnarray*}
    \mathcal{K}:=\Bigg\{\ell:[0,T]\times \mathbb{R}^d| \ell \text{ is Borel measurable and }\mathbb{E}\Big[\int_0^T\ell(t,Y_t)^2\Big]<\infty\Bigg\}.
\end{eqnarray*}

Now we restrict our search within a class of parametrized functions $\{h_\phi(t,y)\}_\phi$ to approximate $h(t,y)$.
This leads to the {\em martingale loss} objective to learn the $h$ function:
\begin{equation}
\label{eq:SA}
\min_{\phi} \mathbb{E}_{[0,T]}\left[ \int_0^T \Big(h_\phi(t,Y_t) - 1(Y_T \in S)\Big)^2 dt\right]
\end{equation}

\begin{remark}[Efficient off-policy learning]
Note that our learning objective is purely off-policy, in the sense that it relies only on trajectories generated by the pre-trained diffusion model $\{Y_t\}_{0\leq t\leq T}$, rather than on on-policy dynamics induced by the currently learned control or guidance mechanism. The latter evolves during training and may be unstable over time, particularly when the imposed constraints correspond to rare events. This feature fundamentally distinguishes our approach from control-based methods that rely on Doob’s $h$-transform for guidance, which inherently operate on on-policy controlled dynamics \cite{DEFT,DP24,pidstrigach2025conditioning,howard2025control}. Related off-policy ideas have recently been proposed and analyzed in \cite{MGZZ2026} for a different purpose, namely to address observability challenges in reinforcement learning under model uncertainty.
\end{remark}

Denote by $h_{\phi_*}(t,y)$ the learned $h$ function by solving the stochastic optimization problem \eqref{eq:SA}.
The algorithm for {\em conditional diffusion guidance via martingale loss} (CDG-ML)
is summarized as follows. 
\begin{algorithm}
  \caption{Conditional diffusion guidance via martingale loss (CDG-ML)} \label{algo:1}
  \begin{algorithmic}
      \State {\bf Input}: pretrained model $\{Y_t\}_{0 \le t \le T}$ \eqref{eq:back}, guidance set $S$, parametrized family $\{h_\phi(t,y)\}_\phi$
      \State    Step 1.  Solve the stochastic optimization problem \eqref{eq:SA} that outputs $\phi_*$. 
      \State    Step 2. Sample
      \begin{equation*}
      \begin{aligned}
\qquad dY^S_t &= \left(\overline{f}(t, Y^S_t) + \overline{g}(t)^2 \nabla \log h_{\phi_*}(t, Y^S_t) \right)dt + \overline{g}(t) dB_t \\
                        & = \left(\overline{f}(t, Y^S_t) + \overline{g}(t)^2 \frac{\nabla h_{\phi_*}(t, Y^S_t)}{h_{\phi_*}(t, Y^S_t)} \right)dt + \overline{g}(t) dB_t,     Y_0^S \sim p_{\tiny \rm{noise}}(\cdot).
\end{aligned}
\end{equation*}    
      \State {\bf Output}: $Y^S_T$.
  \end{algorithmic}
\end{algorithm}

   The logic of Algorithm \ref{algo:1} is that
if $h_{\phi_*}$  is a good approximation to $h$,
then so is $\nabla \log h_{\phi_*}$ to $ \nabla \log h$.
This is not always true mathematically, 
but can still serve as a simple computational proxy to $\nabla \log h$.

\subsubsection{Learning $\nabla h$ via quadratic variation}
\label{sc322}
Recall that $\nabla \log h = \frac{\nabla h}{h}$. We have seen that the denominator $h$ can be learned by means of the martingale loss \eqref{eq:SA}.
Now we explain how to learn its gradient $\nabla h$ by exploiting the quadratic variation of $\{h(t, Y_t)\}_{0 \le t \le T}$.
Recall from \eqref{eq:hrep} that
\begin{equation*}
d h(t, Y_t) = \overline{g}(t) \sum_{k = 1}^d \partial_{k} h(t, Y_t) dB^k_t,
\end{equation*}
where $\{B^k_t\}_{0 \le t \le T}$ is the $k$-th coordinate of $\{B_t\}_{0 \le t \le T}$.
Also denote by $\{Y^k_t\}_{0 \le t\le T}$ the $k$-th coordinate of the pretrained model $\{Y_t\}_{0 \le t \le T}$.
For each $k = 1, \ldots, d$, the covariation of $h(t, Y_t)$ and $Y^k_t$ is:
\begin{equation}
\label{eq:QVk}
d[h, Y^k]_t = \overline{g}(t)^2 \partial_{k} h(t, Y_t) dt,
\end{equation}
see \cite[Chapter IV]{RY99} \footnote{The equation \eqref{eq:QVk} can be understood as:
\begin{equation}
\label{eq:QVapp}
\frac{(h(t+\delta, Y_{t+\delta}) - h(t, Y_t))(Y^k_{t +\delta} - Y^k_t)}{\overline{g}(t)^2 \delta} \approx \partial_k h(t, Y_t),
   \mbox{for sufficiently small } \delta > 0. \tag{$\star$}
\end{equation}
There has been a body of works \cite{BS04, CN23, Fan05, Hoff99, Hoff992, Jacod08} on the statistical estimation of quadratic variation.
These papers consider how to estimate the quadratic variation of a process from a single trajectory
in the context of (financial) time series. 
Here we have a different scenario, where the pretrained model $\{Y_t\}_{0 \le t \le T}$ under $P_{[0,T]}(\cdot)$ can be sampled repeatedly.
The relation \eqref{eq:QVapp} naturally provides an approximation to $\partial_k h$ by regressing
the left-hand term over $(t, Y_t)$. \label{foot:QV}}.
Put it compactly,
\begin{equation}
\label{eq:QV}
d[h, Y]_t = \overline{g}(t)^2 \nabla h(t, Y_t) dt.
\end{equation}
By substituting $h$ on the right side of \eqref{eq:QV} with $h_{\phi_*}$,
and approximating $\nabla h(t,y)$ by a class of parametrized functions $\{q_\psi(t,y)\}_\psi$,
we derive the {\em covariation loss} objective:
\begin{equation}
\label{eq:SA2}
\min_\psi  \mathbb{E}_{[0,T]}\left[\int_0^T\left|\frac{1}{\overline{g}(t)^2}\frac{d[h_{\phi_*}, Y]_t}{dt} - q_\psi(t, Y_t)\right|^2 dt\right].
\end{equation}
Denote by $q_{\psi_*}(t,y)$ the learned gradient of $h$ by solving the stochastic optimization problem \eqref{eq:SA2}.
The algorithm for conditional diffusion guidance via martingale-covariation loss (CDG-MCL) is
summarized as follows.

\begin{algorithm}
  \caption{Conditional diffusion guidance via martingale-covariation loss (CDG-MCL)} \label{algo:2}
  \begin{algorithmic}
      \State {\bf Input}: pretrained model $\{Y_t\}_{0 \le t \le T}$ \eqref{eq:back}, guidance set $S$, parametrized families $\{h_\phi(t,y)\}_\phi$, $\{q_\psi(t,y)\}_\psi$
      \State    Step 1. Solve the stochastic optimization problem \eqref{eq:SA} that outputs $\phi_*$. 
       \State    Step 2. Solve the stochastic optimization problem \eqref{eq:SA2} that outputs $\psi_*$. 
      \State    Step 3. Sample
      \begin{equation*}
\qquad dY^S_t = \left(\overline{f}(t, Y^S_t) + \overline{g}(t)^2 \frac{q_{\psi_*}(t, Y^S_t)}{h_{\phi_*}(t, Y^S_t)} \right)dt + \overline{g}(t) dB_t,     Y_0^S \sim p_{\tiny \rm{noise}}(\cdot).
\end{equation*}    
      \State {\bf Output}: $Y^S_T$.
  \end{algorithmic}
\end{algorithm}

   In Algorithm \ref{algo:2}, we write $\nabla \log h = \frac{\nabla h}{h}$, and estimate the numerator $\nabla h$ and the denominator $h$ separately. 
We first learn the $h$ function via the martingale property of $\{h(t, Y_t)\}_{0 \le t \le T}$,
and then learn its gradient by using the (quadratic) covariation of $\{h(t, Y_t)\}_{0 \le t \le T}$ and $\{Y_t\}_{0 \le t \le T}$.
Denote by $\widetilde{p}^S_{\tiny \rm{pre}}(\cdot)$ the distribution of the output $Y^S_T$ in Algorithm \ref{algo:1} or \ref{algo:2}.




\section{Theoretical results}
\label{sc4}

   In this section, we provide theoretical results for the Conditional Diffusion Guidance framework introduced in Section \ref{sc2}.
The total variation and Wasserstein distance between the conditional target distribution $p^S_{\tiny \rm{data}}(\cdot)$,
and the diffusion guidance $\widetilde{p}^S_{\tiny \rm{pre}}(\cdot)$ output by Algorithm \ref{algo:1} or \ref{algo:2}
are studied in Section \ref{sc41} and \ref{sc42} respectively. 
In Section \ref{sc43}, we explore the convergence of the stochastic optimization algorithms to learn the $h$ function.

\subsection{Total variation bounds}
\label{sc41}

This part studies the total variation distance between $p^S_{\tiny \rm{data}}(\cdot)$ and $\widetilde{p}^S_{\tiny \rm{pre}}(\cdot)$.

   Recall from Section \ref{sc2} that $p(t,x)$ is the probability density of the forward process \eqref{eq:forward},
and $s_{\theta_*}(t,x)$ is the score matching function of the pretrained model. 
Also recall from Section \ref{sc31} that $P^S_t(\cdot)$ is the (marginal) distribution of $Y^S_t$ defined by \eqref{eq:backguide}.
Below we present a few assumptions.

\begin{assump}
\label{assump:1}
~
\begin{enumerate}[itemsep = 3 pt]
\item[(i)]
$d_{TV}(p(T, \cdot), \, p_{\tiny \rm noise}(\cdot)) < \infty$.
\item[(ii)]
The score matching satisfies:
$\sup_{0 \le t \le T}\mathbb{E}_{p(t, \cdot)} |s_{\theta_*}(t, X) - \nabla \log p(t, X)|^2 \le \varepsilon^2$.
\item[(iii)]
There is $\rho > 0$ such that $p_{\tiny \rm{data}}(S) \ge \rho$.
\item[(iv)]
There is $\eta > 0$ such that
$|\nabla \log h - \nabla \log h_{\phi_*}|_\infty \le \eta$ for Algorithm \ref{algo:1}, 
or $\left|\nabla  \log h - \frac{q_{\psi_*}}{h_{\phi_*}} \right|_\infty\leq \eta$ for Algorithm \ref{algo:2}.
\end{enumerate}
\end{assump}

   The assumptions $(i)$-$(ii)$ ensure that $p_{\tiny \rm{data}}(\cdot)$
and $p_{\tiny \rm{pre}}(\cdot)$ are close.
The assumption $(iii)$ indicates that the guidance set $S$ is non-negligible, so the conditional distributions on $S$ are well-defined.
The assumption $(iv)$ provides blackbox errors for learning $\nabla \log h$ \footnote{
The assumption $(iv)$ means that the function $\nabla \log h$ can be learned pointwise. 
Note that in Algorithm \ref{algo:1} and \ref{algo:2}, the $h$ function is learned by solving stochastic optimization problems
using the pretrained samples under $P_{[0,T]}(\cdot)$.
So a more ``reasonable" hypothesis is that $\nabla \log h$ can be learned under the pretrained distribution:
\begin{equation}
\label{eq:l2}
\sup_{0 \le t \le T}\mathbb{E}_t|\nabla \log h(t,Y) - \nabla \log h_{\phi_*}(t,Y)| \le \eta^2 \,\mbox{ or } \,
\sup_{0 \le t \le T} \mathbb{E}_t\left| \nabla  \log h(t,Y) - \frac{q_{\psi_*}(t,Y)}{h_{\phi_*}(t,Y)}\right|^2 \le \eta^2,
\tag{$\star\star$}
\end{equation}
which will be studied in Section \ref{sc43}.
As will be clear in the proof of Lemma \ref{lem:prepre}, 
we need (technically):
\begin{equation*}
\sup_{0\le t \le T}\mathbb{E}^S_{t}\left| \nabla \log h(t, Y) - \nabla \log h_{\phi_*}(t, Y) \right|^2 \le \eta^2 \mbox{ or }
\sup_{0 \le t \le T} \mathbb{E}^S_{t}\left| \nabla \log h(t, Y) - \frac{q_{\psi_*}(t,Y)}{h_{\phi_*}(t,Y)}\right|^2 \le \eta^2.
\end{equation*}
to establish the total variation bound.
That is, $\nabla \log h$ can be learned under the conditional guided distribution $P^S_{[0,T]}(\cdot)$.
 Our conjecture is that using sufficiently rich function approximations, 
the function $\nabla \log h$ can be learned pointwise,
so it does not matter whether the evaluation is under $P_{[0,T]}(\cdot)$ or $P^S_{[0,T]}(\cdot)$. \label{foot:S}},
which will be further developed in Section \ref{sc43}.

   First, we bound the total variation distance between $p^S_{\tiny \rm{data}}(\cdot)$
and $p^S_{\tiny \rm{pre}}(\cdot)$.

\begin{lemma}
\label{lem:predata}
Let Assumption \ref{assump:1} (iii) hold and assume $p_{\tiny \rm{pre}}(S)>0$.
We have:
\begin{equation}
\label{eq:condTV0}
d_{TV}(p^S_{\tiny \rm{pre}}(\cdot), \,p^S_{\tiny \rm{data}}(\cdot)) 
\le \frac{3}{2 \rho} d_{TV}(p_{\tiny \rm{pre}}(\cdot), \,p_{\tiny \rm{data}}(\cdot)).
\end{equation}
\end{lemma}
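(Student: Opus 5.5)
Set $\delta := d_{TV}(p_{\rm pre}, p_{\rm data})$, $P:=p_{\rm pre}$, $D:=p_{\rm data}$. The plan is a direct computation with the definition of total variation as a supremum over Borel sets, followed by a triangle inequality for normalized measures. For any Borel $A$,
\[
p^S_{\rm pre}(A) - p^S_{\rm data}(A) = \frac{P(A\cap S)}{P(S)} - \frac{D(A\cap S)}{D(S)},
\]
and we insert the intermediate term $P(A\cap S)/D(S)$. This gives
\[
\Bigl|\frac{P(A\cap S)}{P(S)} - \frac{D(A\cap S)}{D(S)}\Bigr| \le \frac{|P(A\cap S)-D(A\cap S)|}{D(S)} + P(A\cap S)\,\Bigl|\frac{1}{P(S)}-\frac{1}{D(S)}\Bigr|.
\]

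The first term is at most $\delta/\rho$, using Assumption \ref{assump:1} (iii) and $|P(B)-D(B)|\le\delta$ for $B=A\cap S$. For the second term, use $P(A\cap S)\le P(S)$, so it is bounded by $|D(S)-P(S)|/D(S)\le \delta/\rho$. This crude route gives $2\delta/\rho$, so the constant must be sharpened to reach $\tfrac{3}{2\rho}$.

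To sharpen it, use that total variation equals half the $L^1$ distance: $d_{TV}(\mu,\nu)=\tfrac12\int|d\mu-d\nu|$. Writing densities $p,q$ of $P,D$ with respect to a common dominating measure,
\[
2\,d_{TV}(p^S_{\rm pre},p^S_{\rm data}) = \int_S \Bigl|\frac{p}{P(S)}-\frac{q}{D(S)}\Bigr| \le \int_S\frac{|p-q|}{D(S)} + \int_S p\,\Bigl|\frac1{P(S)}-\frac1{D(S)}\Bigr|.
\]
The second integral equals $|D(S)-P(S)|/D(S)\le \delta/\rho$. The first is at most $\int_S|p-q|/\rho\le 2\delta/\rho$. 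Summing and halving yields $\tfrac{3}{2\rho}\delta$, which is exactly \eqref{eq:condTV0}. The only care needed is that the $L^1$ form loses nothing on the first term, giving $2\delta$, while the normalization discrepancy costs only $\delta$ and not $2\delta$; this asymmetry is the source of the constant $3/2$.

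The main (mild) obstacle is this choice of decomposition. One must normalize by $D(S)$, which is bounded below by $\rho$, rather than by $P(S)$, for which there is no lower bound; the assumption $P(S)>0$ is used only so that $p^S_{\rm pre}$ is well defined. Everything else is routine.
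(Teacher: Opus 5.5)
Your proof is correct and is essentially the paper's argument: the paper also writes the total variation as half the $L^1$ distance over $S$ and splits $\bigl|p_{\mathrm{pre}}(x)/p_{\mathrm{pre}}(S)-p_{\mathrm{data}}(x)/p_{\mathrm{data}}(S)\bigr|$ by the same triangle inequality, which amounts to inserting $p_{\mathrm{pre}}(x)/p_{\mathrm{data}}(S)$. It then bounds the normalization-mismatch piece by $\delta/\rho$ and the density-difference piece by $2\delta/\rho$ before halving, exactly as you do, so your preliminary set-wise attempt giving $2\delta/\rho$ can simply be dropped.
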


\begin{proof}
Without loss of generality, assume that $p_{\tiny \rm{pre}}(\cdot)$, $p_{\tiny \rm{data}}(\cdot)$ have densities.
We have:
\begin{equation}
\label{eq:condTV}
\begin{aligned}
d_{TV}(p^S_{\tiny \rm{pre}}(\cdot), \,p^S_{\tiny \rm{data}}(\cdot))
& = \frac{1}{2} \int |p^S_{\tiny \rm{pre}}(x) - p^S_{\tiny \rm{data}}(x) | dx \\
& = \frac{1}{2} \int_S \left| \frac{p_{\tiny \rm{pre}}(x)}{p_{\tiny \rm{pre}}(S)} - \frac{p_{\tiny \rm{data}}(x)}{p_{\tiny \rm{data}}(S)} \right| dx \\
& \le \frac{1}{2 p_{\tiny \rm{data}}(S)}
\left(  |p_{\tiny \rm{pre}}(S) - p_{\tiny \rm{data}}(S)| + \int_S |p_{\tiny \rm{pre}}(x) - p_{\tiny \rm{data}}(x)| dx \right) \\
& \le \frac{3}{2 p_{\tiny \rm{data}}(S)} d_{TV}(p_{\tiny \rm{pre}}(\cdot), \,p_{\tiny \rm{data}}(\cdot)),
\end{aligned}
\end{equation}
where the third inequality follows from the triangle inequality 
$|p_{\tiny \rm{pre}}(x) p_{\tiny \rm{data}}(S) - p_{\tiny \rm{data}}(x) p_{\tiny \rm{pre}}(S)| 
\le p_{\tiny \rm{pre}}(x) |p_{\tiny \rm{pre}}(S) - p_{\tiny \rm{data}}(S)| + p_{\tiny \rm{pre}}(S) |p_{\tiny \rm{pre}}(x) - p_{\tiny \rm{data}}(x)|$.
Combining \eqref{eq:condTV} with the fact that $p_{\tiny \rm{data}}(S) \ge \rho$ yields the bound \eqref{eq:condTV0}.
\end{proof}

   The next lemma bounds  the total variation distance between the conditional pretrained distribution $p^S_{\tiny \rm{pre}}(\cdot)$, and $\widetilde{p}^S_{\tiny \rm{pre}}(\cdot)$ output by Algorithm \ref{algo:1} or \ref{algo:2}.

\begin{lemma}
\label{lem:prepre}
Let Assumption \ref{assump:1} (iv) hold.
We have:
\begin{equation}
\label{eq:condprepre}
d_{TV}(p^S_{\tiny \rm{pre}}(\cdot), \,\widetilde{p}^S_{\tiny \rm{pre}}(\cdot)) \le \frac{\eta g_{\max} \sqrt{T}}{2},
\end{equation}
where $g_{\max}:=\sup_{0\le t\le T}g(t)$.
\end{lemma}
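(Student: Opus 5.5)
Our plan is the standard Girsanov/Pinsker route at the level of path measures, followed by the data-processing inequality. The two processes to compare are the exact guided dynamics \eqref{eq:backguide}, whose terminal law is $p^S_{\rm pre}(\cdot)$ by Proposition \ref{prop:Y_T^S}, and the learned guided dynamics of Algorithm \ref{algo:1} (resp. \ref{algo:2}), whose terminal law is $\widetilde p^S_{\rm pre}(\cdot)$. They share the diffusion coefficient $\overline g(t)$ and the base drift $\overline f$. Their drifts differ only by
\[
\delta(t,y):=\overline g(t)^2\bigl(\nabla\log h(t,y)-\widehat u(t,y)\bigr),
\]
where $\widehat u=\nabla\log h_{\phi_*}$ (resp. $q_{\psi_*}/h_{\phi_*}$). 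We take both processes with the same initial law, so that no initial-law term appears. (The algorithm as written starts from $p_{\rm noise}$ rather than $\nu_0^S$, so one should either interpret the comparison with matched initializations or note that this is implicitly assumed.)

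First, since total variation contracts under the terminal-time map $\omega\mapsto\omega(T)$, it suffices to bound $d_{TV}(P^S_{[0,T]},\widetilde P^S_{[0,T]})$. By Pinsker, this is at most $\sqrt{\tfrac12 d_{KL}(P^S_{[0,T]},\widetilde P^S_{[0,T]})}$.

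Second, we compute the KL divergence by Girsanov. For two diffusions with common diffusion coefficient $\overline g(t)$ and drift difference $\delta$,
\[
d_{KL}(P^S_{[0,T]},\widetilde P^S_{[0,T]})=\frac12\,\mathbb E^S\int_0^T\frac{|\delta(t,Y^S_t)|^2}{\overline g(t)^2}\,dt=\frac12\,\mathbb E^S\int_0^T \overline g(t)^2\bigl|\nabla\log h-\widehat u\bigr|^2(t,Y^S_t)\,dt .
\]
Assumption \ref{assump:1}(iv) bounds the integrand pointwise by $g_{\max}^2\eta^2$. Hence the KL divergence is at most $\tfrac12 g_{\max}^2\eta^2T$, and Pinsker gives
\[
d_{TV}\le\sqrt{\tfrac14 g_{\max}^2\eta^2T}=\tfrac12\eta g_{\max}\sqrt T,
\]
which is exactly \eqref{eq:condprepre}.

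The main obstacle is justifying Girsanov, since $\nabla\log h$ blows up as $t\to T$ and the guided process is only defined on $[0,T)$. I would proceed as in the proof of Proposition \ref{prop:Y_T^S}. On $[0,T-\varepsilon]$ the drift difference is bounded by $g_{\max}^2\eta$, so Novikov's condition holds trivially and the KL identity holds on $\mathcal F_{T-\varepsilon}$ with bound $\tfrac12 g_{\max}^2\eta^2(T-\varepsilon)$. Because the KL divergence bound is uniform in $\varepsilon$, the total variation bound on each $\mathcal F_{T-\varepsilon}$ passes to $\mathcal F_{T-}=\sigma(\cup_{t<T}\mathcal F_t)$ by monotone-class approximation. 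The terminal value $Y_T^S=\lim_n Y^S_{T-1/n}$ is $\mathcal F_{T-}$-measurable, so the bound transfers to the terminal marginals.

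To avoid weak-solution subtleties, I would also check that the learned SDE is well posed, at least in the weak sense via Girsanov from the exact guided process, which only requires the bounded drift difference.
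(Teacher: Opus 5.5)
Your proposal is correct and follows the paper's proof: a Girsanov computation of the path-space KL divergence (giving $\tfrac12 g_{\max}^2\eta^2 T$), then Pinsker, then data processing. The only difference is order: you apply Pinsker on path space and then contract total variation under the terminal map, while the paper contracts KL first and applies Pinsker to the terminal laws, so your extra caveats (localizing Girsanov on $[0,T-\varepsilon]$, and noting that the bound needs matched initial laws since the algorithms start from $p_{\rm noise}$ rather than $\nu_0^S$) address points the paper's proof passes over silently.
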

\begin{proof}
Recall that $\mu_{\phi_*}(t,y)$ denotes the function approximation for $\nabla \log h(t,y)$ in Algorithm \ref{algo:1} or \ref{algo:2}
(i.e., $\nabla \log h_{\phi_*}(t,y)$ in Algorithm \ref{algo:1} and $\frac{q_{\psi_*}(t,y)}{h_{\phi_*}(t,y)}$ in Algorithm \ref{algo:2}).
Let $\widetilde P^S_{[0,T]}$ denote the path law of $\{\widetilde Y_t^S\}_{0\le t\le T}$.
Note that 
\begin{equation}
\begin{aligned}
d_{KL}(p^S_{\tiny \rm{pre}}(\cdot), \,\tilde{p}^S_{\tiny \rm{pre}}(\cdot)) 
& \le  d_{KL}(P^S_{[0,T]}, \,\widetilde P^S_{[0,T]}) \\
& = \frac{1}{2}\mathbb{E} \int_0^T g(t)^2
\left|\nabla \log h(t, Y^S_t) - \mu_{\phi_*}(t, Y^S_t) \right|^2 dt \\
& \le \frac{1}{2}g_{\max}^2\eta^2 T,
\end{aligned}
\end{equation}
where the first inequality follows from the data processing inequality,
and the second identity is a consequence of Girsanov's theorem.
Further by Pinsker's inequality, we get
\begin{equation*}
    d_{TV}(p^S_{\tiny \rm{pre}}(\cdot), \,\tilde{p}^S_{\tiny \rm{pre}}(\cdot))
    \le
    \sqrt{\frac{1}{2}
    d_{KL}(p^S_{\tiny \rm{pre}}(\cdot), \,\tilde{p}^S_{\tiny \rm{pre}}(\cdot))}
    \le \frac{g_{\max}\eta\sqrt{T}}{2}.
\end{equation*}
This gives the bound \eqref{eq:condprepre}.
\end{proof}
\begin{proof}
Recall that $\mu_{\phi_*}(t,y)$ denotes the function approximation for $\nabla \log h(t,y)$ in Algorithm \ref{algo:1} or \ref{algo:2}
(i.e., $\nabla \log h_{\phi_*}(t,y)$ in Algorithm \ref{algo:1} and $\frac{q_{\psi_*}(t,y)}{h_{\phi_*}(t,y)}$ in Algorithm \ref{algo:2}).
Note that 
\begin{equation}
\begin{aligned}
d_{KL}(p^S_{\tiny \rm{pre}}(\cdot), \,\tilde{p}^S_{\tiny \rm{pre}}(\cdot)) 
& \le  d_{KL}(Y^S_T, \, \widetilde{Y}^S_T)  \\
& = \mathbb{E} \int_0^T \left|\nabla \log h(t, Y^S_t) - \mu_{\phi_*}(t, Y^S_t) \right|^2 dt
 \le \eta^2 T,
\end{aligned}
\end{equation}
where the first inequality follows the data processing inequality,
and the second identity is a consequence of Girsanov's theorem.
Further by Pinsker's inequality, we get the bound \eqref{eq:condprepre}.
\end{proof}

   Combining the above two lemmas yields the following result on the total variation distance between 
${p}^S_{\tiny \rm{data}}(\cdot)$ and $\widetilde{p}^S_{\tiny \rm{pre}}(\cdot)$.

\begin{theorem}
\label{thm:TV}
Let Assumption \ref{assump:1} hold.
We have:
\begin{equation}
\label{eq:maintv}
d_{TV}(p^S_{\tiny \rm data}(\cdot), \,\widetilde{p}^S_{\tiny \rm{pre}}(\cdot))
 \le \frac{3}{2 \rho} d_{TV}(p(T, \cdot), \, p_{\tiny \rm{noise}}(\cdot))  +  \left( \frac{3 \varepsilon}{2 \rho} + \eta \right) \sqrt{\frac{T}{2}}. 
\end{equation}
In particular, assuming that $\mathbb{E}_{p_{\tiny \rm{data}}(\cdot)}|X|^2 < \infty$,
there are $C_{\tiny \texttt{VE}}, C_{\tiny \texttt{VP}} > 0$ (independent of $T$) such that
\begin{equation}
d_{TV}(p^S_{\tiny \rm{data}}(\cdot), \,\widetilde{p}^S_{\tiny \rm{pre}}(\cdot)) \le 
\left\{ \begin{array}{lcl}
\frac{C_{\tiny \texttt{VE}}}{\rho} T^{-1} \sqrt{\mathbb{E}_{p_{\tiny \rm{data}}(\cdot)}|X|^2} + \left( \frac{3 \varepsilon_{\tiny \texttt{VE}}}{2 \rho} + \eta \right) \sqrt{\frac{T}{2}} & \mbox{for VE}, \\
\frac{C_{\tiny \texttt{VP}}}{\rho} e^{-C_{\tiny \texttt{VP}}\,T} \sqrt{\mathbb{E}_{p_{\tiny \rm{data}}(\cdot)}|X|^2} + \left( \frac{3 \varepsilon_{\tiny \texttt{VP}}}{2 \rho} + \eta \right) \sqrt{\frac{T}{2}}  & \mbox{for VP}.
\end{array}\right.
\end{equation}
\end{theorem}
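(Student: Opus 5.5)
The proof will be a triangle inequality that splits the error into a pretraining part and a guidance part:
\[
d_{TV}(p^S_{\rm data},\widetilde p^S_{\rm pre}) \le d_{TV}(p^S_{\rm data},p^S_{\rm pre}) + d_{TV}(p^S_{\rm pre},\widetilde p^S_{\rm pre}).
\]
\emph{Guidance part.} The second term is handled directly by Lemma \ref{lem:prepre} under Assumption \ref{assump:1}(iv). It gives a bound of order $\eta\sqrt{T}$, up to the factor $g_{\max}$; the constant $\sqrt{1/2}$ in \eqref{eq:maintv} corresponds to the normalization $\tfrac12\mathbb{E}\int|\cdot|^2dt$ in the Girsanov KL identity combined with Pinsker's inequality.

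\emph{Pretraining part.} For the first term I will invoke Lemma \ref{lem:predata}, which needs $p_{\rm pre}(S)>0$. I will either assume this outright, or argue that the bound is vacuous if $d_{TV}(p_{\rm pre},p_{\rm data})\ge \rho$; otherwise $p_{\rm pre}(S)\ge \rho - d_{TV}>0$. This gives $\frac{3}{2\rho}d_{TV}(p_{\rm pre},p_{\rm data})$.

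\emph{Bounding $d_{TV}(p_{\rm pre},p_{\rm data})$.} I will use the standard argument from \cite{Chen23, TZ24tut}. Compare the path law of the pretrained backward process \eqref{eq:back} with that of the true time reversal \eqref{eq:truetr}, whose terminal law is $p_{\rm data}$. First, coupling the initial laws gives the term $d_{TV}(p(T,\cdot),p_{\rm noise})$, which is finite by (i). Second, by data processing, for processes started from the same initial law the TV between terminal laws is at most the TV between path laws. By Pinsker and Girsanov, that path TV is bounded by
\[
\sqrt{\tfrac12\cdot\tfrac12\int_0^T \overline g(t)^2\,\mathbb{E}_{p(T-t,\cdot)}|s_{\theta_*}-\nabla\log p|^2\,dt}.
\]
Using (ii), and absorbing $g$ into the normalization as the paper does, this is $\lesssim \varepsilon\sqrt{T/2}$. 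The expectation is under the true marginals $p(T-t,\cdot)$ because we take the KL with the true reversal as reference, which is exactly what (ii) controls. The result is $d_{TV}(p_{\rm pre},p_{\rm data})\le d_{TV}(p(T,\cdot),p_{\rm noise})+\varepsilon\sqrt{T/2}$. Multiplying by $\frac{3}{2\rho}$ and adding the guidance term yields \eqref{eq:maintv}.

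\emph{VE/VP specialization.} Here I simply substitute Proposition \ref{prop:VEVP}: for the initialization mismatch, $d_{TV}(p(T,\cdot),p_{\rm noise})\le C T^{-1}\sqrt{\mathbb{E}|X|^2}$ (VE) or $Ce^{-CT}\sqrt{\mathbb{E}|X|^2}$ (VP), and the constant $3/2$ is absorbed into $C$.

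\emph{Main obstacle.} The delicate step is bookkeeping rather than new ideas. Lemma \ref{lem:prepre} carries a factor $g_{\max}$ and Girsanov carries $\overline g(t)^2$, so the constants in \eqref{eq:maintv} only match if $g$ is normalized (e.g.\ $g_{\max}\le 1$, or $g$ absorbed into $\varepsilon,\eta$). I would state this normalization explicitly. I would also justify Girsanov via a localization/Novikov argument, as in the proof of Proposition \ref{prop:Y_T^S}, since $\nabla\log h$ may blow up as $t\to T$.
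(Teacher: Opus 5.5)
Your proposal is correct and is essentially the paper's proof. It uses the triangle inequality with Lemma \ref{lem:predata} (the factor $\frac{3}{2\rho}$) and Lemma \ref{lem:prepre} (the guidance error), together with $d_{TV}(p_{\rm pre},p_{\rm data})\le d_{TV}(p(T,\cdot),p_{\rm noise})+\varepsilon\sqrt{T/2}$, which the paper simply cites from \cite[Theorem 5.2]{TZ24tut} and you re-derive via data processing, Girsanov and Pinsker; Proposition \ref{prop:VEVP} then gives the VE/VP case. Your side remarks correctly tidy up points the paper passes over: the reduction guaranteeing $p_{\rm pre}(S)>0$ is valid, and Lemma \ref{lem:prepre} only yields $\eta g_{\max}\sqrt{T}/2$, which is at most $\eta\sqrt{T/2}$ exactly when $g_{\max}\le\sqrt{2}$ (not via the factor $\tfrac12$ in Girsanov, which gives $\eta\sqrt{T}/2$).
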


\begin{proof}
It follows from \cite[Theorem 5.2]{TZ24tut} that
$$d_{TV}(p_{\tiny \rm{pre}}(\cdot), \,p_{\tiny \rm{data}}(\cdot)) \le 
d_{TV}(p(T, \cdot), \, p_{\tiny \rm{noise}}(\cdot)) + \varepsilon \sqrt{\frac{T}{2}}.$$
Combining this with Lemmas \ref{lem:predata} and \ref{lem:prepre}
yields the bound \eqref{eq:maintv}.
The rest of the theorem follows from \eqref{eq:maintv} and Proposition \ref{prop:VEVP}.
\end{proof}

\subsection{Wasserstein bounds}
\label{sc42}

Here we consider the Wasserstein-$2$ distance between $p^S_{\tiny \rm{data}}(\cdot)$ and $\widetilde{p}^S_{\tiny \rm{pre}}(\cdot)$,
which is more involved than the total variation bounds.
Note that we cannot bound the Wasserstein distance between $p^S_{\tiny \rm{data}}(\cdot)$ and $p^S_{\tiny \rm{pre}}(\cdot)$
in terms of that between $p_{\tiny \rm{pre}}(\cdot)$ and $p_{\tiny \rm{data}}(\cdot)$ as in Lemma \ref{lem:predata}.
Our analysis relies on coupling and Malliavin calculus. 

\medskip

Let us introduce a few more notations. Denote by $P^\circ_{[0,T]}(\cdot)$ the distribution of the (true) time reversal \eqref{eq:truetr} of $\{X_t\}_{0 \le t \le T}$ and let $\mathring{h}(t,y):= \mathbb{P}^\circ_{[0,T]}(Y_T \in S \,|\, Y_t = y)$ be the associated $h$ function.
Let $P^{\circ,S}_{[0,T]}(\cdot)$ be the conditional distribution of $P^\circ_{[0,T]}(\cdot)$ on $\{Y_T \in S\}$,
and $P^{\circ,S}_t(\cdot)$ be its marginal distribution at $Y_t\in S$.
We need the following assumptions.

\begin{assump}  We make the following assumptions and take $t\in[0,T)$ below.
\label{assump:2}
~
\begin{enumerate}[itemsep = 3 pt]
\item[(i)]
$W_2(p(T, \cdot), \, p_{\tiny \rm{noise}}(\cdot)) < \infty$.
\item[(ii)]
There is $\alpha > 0$ such that $(x-y) \cdot (f(t,x) - f(t,y)) \ge \alpha|x-y|^2$ for all $t,x,y$.
\item[(iii)]
There is $\kappa_1 > 0$ such that $p(t,\cdot)$ is $\kappa_1$-strongly log-concave for all $t$.
\item[(iv)]
There is $\varepsilon > 0$ sufficiently small: $|s_{\theta_*} - \nabla \log p|_\infty \le \varepsilon$.
\item[(v)] 
There is $\kappa_2 > 0$ such that $h(t,\cdot)$ is $\kappa_2$-strongly log-concave for all $t$.
\item[(vi)] 
There is $G: \mathbb{R}^d \to \mathbb{R}_+$ such that $|\nabla \log h(t,y)| \le \frac{G(y)}{T-t}$ for all $t,y$.
\item[(vii)]
There is $\eta > 0$ such that
$|\nabla \log h - \nabla \log h_{\phi_*}|_\infty \le \eta$ for Algorithm \ref{algo:1}, 
or $\left|\nabla  \log h - \frac{q_{\psi_*}}{h_{\phi_*}} \right|_\infty< \eta$ for Algorithm \ref{algo:2}.
\item[(viii)]
There is $F > 0$ such that $\mathbb{E}^\circ_{[0,T]}( \int_t^T |e^{\int_t^u \nabla \overline{f}(r,Y_r) dr}|_F^2 du \,|\, Y_t=y) \le F^2$ for all $t,y$.
\item[(ix)]
There is $\gamma > 0$ such that
$\mathbb{E}^\circ_{[0,T]}\left[\int_t^T |e^{\int_t^u \nabla \overline{f}(r, Y_r) dr} - e^{\int_t^u \nabla \overline{f}^\circ(r, Y_r) dr}|_F^2 du \,\Big|\, Y_t=y \right] \le \gamma^2{\color{black}(T-t)}$ for all $t,y$.
\item[(x)]
There is $K > 0$ such that $\mathbb{E}^{\circ,S}_t \left[\mathring{h}(t,Y)^{-\frac{3}{2}}\right], \, \mathbb{E}^{\circ,S}_t \left[G(Y)^2 \mathring{h}(t,Y)^{-\frac{3}{2}}\right] \le K$ for all $t$,
where $G(\cdot)$ is defined in $(vi)$.
\end{enumerate}
\end{assump}

   Before stating our result, we make several comments on Assumption \ref{assump:2}.

The conditions $(i)$--$(iv)$ can be used to bound $W_2(p_{\tiny \rm{data}}(\cdot), p_{\tiny \rm{pre}}(\cdot))$,
which together with $(vii)$ yields an estimate of $W_2(p^S_{\tiny \rm{data}}(\cdot), p^S_{\tiny \rm{pre}}(\cdot))$
involving a perturbation bound on $\nabla \log h$.
The conditions $(v)$--$(x)$ are required for the perturbation analysis of $\nabla \log h$ via Malliavin calculus.

   Note that the condition $(iii)$ holds for the VE and VP models, 
if $p_{\tiny \rm{data}}(\cdot)$ is strongly log-concave.
The condition $(iv)$ is stronger than Assumption \ref{assump:1} $(ii)$ for the same reason as explained in the footnote \footref{foot:S}.
(In fact, it suffices to assume an $L^2$ bound under the guided distribution in Algorithm \ref{algo:1} or \ref{algo:2}.)
Finally, the condition $(vi)$ is reasonable, since it holds for heat(-like) kernels.
Finally, conditions (viii) and (ix) are satisfied when $\overline{f}$ and $\overline{f}^\circ$ are differentiable in $x$ and uniformly Lipschitz.

   The following theorem provides a bound on $W_2(p^S_{\tiny \rm{data}}(\cdot), \, \widetilde{p}^S_{\tiny \rm{pre}}(\cdot))$.

\begin{theorem}\label{thm:W2}
Let Assumption \ref{assump:2} hold, and set $\Lambda:= \alpha + (\kappa_1 + \kappa_2) g_{\min}^2$ with $g_{\min}:=\inf_{0\leq t\leq T}g(t)$.
We have:
\begin{equation}
\label{eq:W2key}
W_2(p^S_{\tiny \rm{data}}(\cdot), \, \widetilde{p}^S_{\tiny \rm{pre}}(\cdot))
\le e^{-\Lambda T} W_2(p(T, \cdot), \, p_{\tiny \rm{noise}}(\cdot)) + C(\varepsilon + \eta + \gamma),
\end{equation}
for some $C > 0$ (independent of $T$).
In particular, assuming that $\mathbb{E}_{p_{\tiny \rm{data}}(\cdot)}|X|^2 < \infty$, $p_{\tiny \rm{data}}(\cdot)$ is $\kappa$-strongly log-concave 
for $\kappa$ sufficiently large and Assumption \ref{assump:2} $(iv)$--$(x)$ holds,
there are $C_{\tiny \texttt{VE}}, C_{\tiny \texttt{VP}} > 0$ (independent of $T$) such that
\begin{equation}
W_2(p^S_{\tiny \rm{data}}(\cdot), \,\widetilde{p}^S_{\tiny \rm{pre}}(\cdot)) \le 
\left\{ \begin{array}{lcl}
C_{\tiny \texttt{VE}}\left(e^{-C_{\tiny \texttt{VE}}T} \sqrt{\mathbb{E}_{p_{\tiny \rm{data}}(\cdot)}|X|^2} + \varepsilon + \eta + \gamma\right) & \mbox{for VE}, \\
C_{\tiny \texttt{VP}}\left(e^{-C_{\tiny \texttt{VP}}T} \sqrt{\mathbb{E}_{p_{\tiny \rm{data}}(\cdot)}|X|^2} + \varepsilon + \eta + \gamma\right)  & \mbox{for VP}.
\end{array}\right.
\end{equation}
\end{theorem}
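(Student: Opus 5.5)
We will prove \eqref{eq:W2key} with a synchronous coupling of three processes driven by one Brownian motion $B$. The first is the true conditioned reversal $X^{\circ,S}$, whose drift is $\overline{f}^\circ + \overline{g}^2\nabla\log\mathring h$ and whose time-$T$ law is $p^S_{\rm data}$. The second is the pretrained conditioned process $Y^S$ from \eqref{eq:backguide}, with drift $\overline{f}+\overline{g}^2\nabla\log h$. The third is the learned process $\widetilde Y^S$, with drift $\overline{f}+\overline{g}^2\mu_{\phi_*}$, where $\mu_{\phi_*}$ is the guidance estimate of Lemma~\ref{lem:prepre}. The initial laws are taken to be optimally coupled.

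By the triangle inequality it suffices to bound $\mathbb{E}|X^{\circ,S}_T-\widetilde Y^S_T|^2$ directly. We differentiate $|X^{\circ,S}_t-\widetilde Y^S_t|^2$ and split the drift difference into four parts:
\begin{enumerate}[label=(\alph*)]
\item $\overline{f}^\circ(t,X)-\overline{f}^\circ(t,\widetilde Y)$ together with $\overline g^2(\nabla\log h(t,X)-\nabla\log h(t,\widetilde Y))$, which contracts;
\item $\overline{f}^\circ-\overline{f}$ evaluated at $\widetilde Y$, which is $\overline g^2(\nabla\log p-s_{\theta_*})$ and hence of size $\le g_{\max}^2\varepsilon$ by Assumption~\ref{assump:2}(iv);
\item $\overline g^2(\nabla\log h-\mu_{\phi_*})(t,\widetilde Y)$, of size $\le g_{\max}^2\eta$ by (vii);
\item $\overline g^2(\nabla\log\mathring h-\nabla\log h)(t,X^{\circ,S}_t)$, the mismatch between the two $h$-functions.
\end{enumerate}

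For (a), assumptions (ii), (iii) and (v) give the one-sided monotonicity bound $(x-y)\cdot(\text{drift}(x)-\text{drift}(y))\le-\Lambda|x-y|^2$, with $\Lambda=\alpha+(\kappa_1+\kappa_2)g_{\min}^2$. This uses that $-f$ contracts at rate $\alpha$, that $\nabla\log p$ and $\nabla\log h$ contract at rates $\kappa_1,\kappa_2$, and that $\overline g^2\ge g_{\min}^2$. For small $\varepsilon$, (iv) keeps $\overline f$ within the same contraction up to a perturbation, which is where ``$\varepsilon$ sufficiently small'' enters. Young's inequality then yields
\[
\tfrac{d}{dt}\mathbb{E}|D_t|^2\le-\Lambda\,\mathbb{E}|D_t|^2+\tfrac{C}{\Lambda}\big(\varepsilon^2+\eta^2+\mathbb{E}|\nabla\log\mathring h-\nabla\log h|^2(t,X^{\circ,S}_t)\big),
\]
where $D_t:=X^{\circ,S}_t-\widetilde Y^S_t$. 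Grönwall's inequality produces $e^{-\Lambda T}W_2(p(T,\cdot),p_{\rm noise})$ plus $C\sup_t(\ldots)^{1/2}$, with $C$ independent of $T$ because of the exponential damping.

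The main obstacle is term (d): bounding $\mathbb{E}^{\circ,S}_t|\nabla\log\mathring h-\nabla\log h|^2\lesssim(\varepsilon+\gamma)^2$ uniformly in $t$. We write
\[
\nabla\log\mathring h-\nabla\log h=\frac{\nabla\mathring h-\nabla h}{\mathring h}+\nabla\log h\cdot\frac{h-\mathring h}{\mathring h}.
\]
For the gradient difference we would use a Bismut--Elworthy--Li / Malliavin integration-by-parts representation,
\[
\nabla h(t,y)=\frac{1}{T-t}\,\mathbb{E}\Big[1(Y_T\in S)\int_t^T \overline g^{-1}\big(e^{\int_t^u\nabla\overline f}\big)^{\!\top}dB_u\Big],
\]
and the analogous formula for $\mathring h$ with $\overline f^\circ$. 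The difference then splits into a Jacobian-flow difference, controlled by (ix) to give a factor $\gamma$, and a change-of-measure difference between the two laws of paths, controlled by Girsanov with drift gap $\le g_{\max}^2\varepsilon$ and by (viii) via Cauchy--Schwarz, giving $F\varepsilon$. The factor $1/(T-t)$ is absorbed against the $\sqrt{T-t}$ from (ix) and the It\^o isometry. Similarly $|h-\mathring h|\lesssim\varepsilon\sqrt{T-t}\,\mathring h^{1/2}$ by Girsanov and Cauchy--Schwarz. Multiplying by $\nabla\log h\le G/(T-t)$ from (vi) and dividing by $\mathring h$ leaves weights of the form $\mathring h^{-3/2}$ and $G^2\mathring h^{-3/2}$, integrated against $P^{\circ,S}_t$ through the Cauchy--Schwarz exponent bookkeeping. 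Assumption (x) makes these integrals $\le K$, so term (d) is $\le C(\varepsilon+\gamma)$.

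We expect the delicate points to be getting the powers of $\mathring h$ and of $T-t$ to match exactly as in (x), and handling $t\to T$, where we would argue on $[0,T-\delta]$ and let $\delta\to0$ as in Proposition~\ref{prop:Y_T^S}. Finally, the VE/VP statement follows by inserting the bound on $W_2(p(T,\cdot),p_{\rm noise})$ from Proposition~\ref{prop:VEVP}, since $W_2(\mathcal N(0,\cdot),p(T,\cdot))$ is controlled by $\sqrt{\mathbb{E}|X|^2}$ times the forward contraction factor.
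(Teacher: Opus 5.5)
Your coupling, your four-way split of the drift difference, and your treatment of the $h$-mismatch are the same as the paper's Steps 1--2. That includes the identity $\nabla\log\mathring h-\nabla\log h=\frac{\nabla\mathring h-\nabla h}{\mathring h}+\nabla\log h\,\frac{h-\mathring h}{\mathring h}$, the Bismut--Elworthy--Li formula, Girsanov, and (x). The gap is in how term (d) enters the Gr\"onwall step.

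You claim $\mathbb{E}^{\circ,S}_t|\nabla\log\mathring h-\nabla\log h|^2\lesssim(\varepsilon+\gamma)^2$ uniformly in $t$, with the $1/(T-t)$ of the BEL formula ``absorbed''. The assumptions do not give this; three terms each leave a factor $(T-t)^{-1/2}$:
\begin{itemize}
\item It\^o's isometry plus (ix) supplies only one factor $\sqrt{T-t}$. So the Jacobian-difference part of $|\nabla h-\nabla\mathring h|$ is of order $\gamma(T-t)^{-1/2}\mathring h^{1/2}$.
\item In the change-of-measure part, the $\sqrt{T-t}$ comes from the moment bound on $\int_t^T\overline g(\nabla\log p-s_{\theta_*})\,dB_u$, while (viii) contributes only the $T$-independent constant $F$. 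This gives order $F\varepsilon(T-t)^{-1/2}\mathring h^{1/4}$.
\item Assumption (vi) combined with $|h-\mathring h|\lesssim\varepsilon\sqrt{T-t}\,\mathring h^{1/2}$ gives $|\nabla\log h|\,|h-\mathring h|\lesssim G\varepsilon(T-t)^{-1/2}\mathring h^{1/2}$.
\end{itemize}
After dividing by $\mathring h$ and applying (x), which is where the weights $\mathring h^{-3/2}$ and $G^2\mathring h^{-3/2}$ come from, the honest bound is $\bigl(\mathbb{E}^{\circ,S}_t|\nabla\log\mathring h-\nabla\log h|^2\bigr)^{1/2}\le C(\varepsilon+\gamma)(T-t)^{-1/2}$. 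This blows up as $t\to T$.

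With that bound your Young-inequality route fails. Young squares the forcing, so Gr\"onwall requires $\int_0^T e^{-\Lambda(T-t)}(\varepsilon+\gamma)^2(T-t)^{-1}\,dt$, which diverges logarithmically at $t=T$. Working on $[0,T-\delta]$ and letting $\delta\to0$ does not help, because a $\log(1/\delta)$ remains.

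The missing idea is to keep the forcing linear in the error. Write $x_t=\mathbb{E}|D_t|^2$. The It\^o computation gives $x_t'\le-2\Lambda x_t+2b_t\sqrt{x_t}$ with $b_t\le g_{\max}^2\bigl(\varepsilon+\eta+C(\varepsilon+\gamma)(T-t)^{-1/2}\bigr)$. Apply a nonlinear (Bihari-type) Gr\"onwall lemma to $\sqrt{x_t}$; the paper invokes \cite[Theorem 21]{DR03}, and formally $(\sqrt{x_t})'\le-\Lambda\sqrt{x_t}+b_t$. This yields $\sqrt{x_T}\le e^{-\Lambda T}\sqrt{x_0}+\int_0^T e^{-\Lambda(T-t)}b_t\,dt$. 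Since $\int_0^\infty e^{-\Lambda s}s^{-1/2}\,ds=\sqrt{\pi/\Lambda}$, the singularity is integrable with a $T$-independent constant, and the error term is $C(\varepsilon+\eta+\gamma)$.

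The same fix repairs a second discrepancy. Young with weight $\Lambda$ leaves decay rate $\Lambda$ on $x_t$, so after taking square roots you get only $e^{-\Lambda T/2}W_2(p(T,\cdot),p_{\rm noise})$, not the $e^{-\Lambda T}$ claimed in \eqref{eq:W2key}.
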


\begin{remark}[Comparison to the TV bound in Theorem~\ref{thm:TV}.]
By comparing the result in Theorem \ref{thm:W2} and Theorem~\ref{thm:TV}, a central difference is that the total variation bound in Theorem~\ref{thm:TV} scales with $1/\rho$, where $\rho$ is a lower bound on $p_{\tiny \rm{data}}(S)$, whereas the Wasserstein bound does not. This reflects the intrinsic ill-conditioning of total variation when comparing conditional laws on rare events, where small unconditional modeling errors are amplified. In contrast, Wasserstein distance remains stable, making it more suitable for constrained conditional generation.

Technically, the Wasserstein bound relies on pathwise stability estimates for the underlying SDE and therefore requires stronger regularity assumptions on the drift (Assumption~\ref{assump:2}). By comparison, the total variation bound is obtained via change-of-measure and martingale arguments and holds under weaker structural assumptions (Assumption~\ref{assump:1}).

Thus, the two results are complementary: total variation provides stronger distributional guarantees under weaker assumptions, while Wasserstein yields weaker but more geometrically meaningful guarantees under stronger regularity.
\end{remark}

\begin{proof}
The proof is split into three steps. 

 \smallskip
\noindent {\bf Step 1}. 
We start by establishing a coupling bound on $W_2(p^S_{\tiny \rm{data}}(\cdot), \, \widetilde{p}^S_{\tiny \rm{pre}}(\cdot))$.
Recall that $\mu_{\phi_*}(t,y)$ denotes the function approximation for $\nabla \log h(t,y)$ in Algorithm \ref{algo:1} or \ref{algo:2}
(i.e., $\nabla \log h_{\phi_*}(t,y)$ in Algorithm \ref{algo:1} and $\frac{q_{\psi_*}(t,y)}{h_{\phi_*}(t,y)}$ in Algorithm \ref{algo:2}).

   Consider the coupled equations:
\begin{equation*}
\left\{ \begin{array}{lcl}
d U_t = \left(\overline{f}^\circ(t,U_t) +\overline{g}(t)^2 \nabla \log \mathring{h}(t, U_t) \right) dt+ \overline{g}(t) dB_t, \\
d V_t =  \left(\overline{f}(t,V_t)+ \overline{g}(t)^2 \mu_{\phi^*}(t, V_t) \right) dt+ \overline{g}(t) dB_t,
\end{array}\right.
\end{equation*}
where $(U_0, V_0)$ are coupled to achieve $W_2(p(T,\cdot), \,p_{\tiny \rm{noise}}(\cdot))$.
Note that $W^2_2(p^S_{\tiny \rm{data}}(\cdot), \, \widetilde{p}^S_{\tiny \rm{pre}}(\cdot)) \le \mathbb{E}|U_T - V_T|^2$, so our goal is to bound $\mathbb{E}|U_T - V_T|^2$. 
By It\^o's formula, we get:
\begin{multline*}
d|U_t - V_t|^2 = 2(U_t - V_t) \cdot \Big(-f(T-t, U_t) + \overline{g}(t)^2 \nabla \log p(T-t, U_t) +\overline{g}(t)^2 \nabla \log \mathring{h}(t, U_t) \\
+f(T-t, V_t) - \overline{g}(t)^2 s_{\theta_*}(T-t, V_t) -\overline{g}(t)^2 \mu_{\phi_*}(t, V_t) \Big) dt,
\end{multline*}
which implies that
\begin{equation}
\label{eq:Itodiff}
\begin{aligned}
\frac{1}{2} \frac{d \mathbb{E}|U_t - V_t|^2}{dt} &= - \underbrace{\mathbb{E}[(U_t - V_t) \cdot (f(T-t, U_t) - f(T-t, V_t))]}_{(a)} \\
& \qquad + \overline{g}(t)^2 \underbrace{\mathbb{E}[(U_t - V_t) \cdot (\nabla \log p(T-t, U_t) -  s_{\theta_*}(T-t, V_t))]}_{(b)} \\
& \qquad +  \overline{g}(t)^2 \underbrace{\mathbb{E}[(U_t - V_t) \cdot (\nabla \log \mathring{h}(t, U_t) -  \mu_{\phi_*}(t, V_t))]}_{(c)}.
\end{aligned}
\end{equation}
By Assumption \ref{assump:2} $(ii)$, the term $(a) \ge \alpha \mathbb{E}|U_t - V_t|^2$. 
For the term $(b)$, we get:
\begin{equation}
\label{eq:termb}
\begin{aligned}
(b) & = \mathbb{E}[(U_t - V_t) \cdot (\nabla \log p(T-t, U_t) - \nabla \log p(T-t, V_t))] \\
& \qquad \qquad  + \mathbb{E}[(U_t - V_t) \cdot (\nabla \log p(T-t, V_t) - s_{\theta_*}(T-t, V_t))] \\
& \le -\kappa_1 \mathbb{E}|U_t - V_t|^2 + \varepsilon \sqrt{\mathbb{E}|U_t - V_t|^2},
\end{aligned}
\end{equation}
which follows from Assumption \ref{assump:2} $(iii)$ and $(iv)$.
Similarly, we have:
\begin{equation}
\label{eq:termc}
\begin{aligned}
(c) & = \mathbb{E}[(U_t - V_t) \cdot (\nabla \log \mathring{h}(t, U_t) - \nabla \log h(t, U_t))] \\
& \qquad \qquad  + \mathbb{E}[(U_t - V_t) \cdot (\nabla \log h(t, U_t) - \nabla \log h(t, V_t))] \\
& \qquad \qquad  + \mathbb{E}[(U_t - V_t) \cdot (\nabla \log h(t, V_t) - \mu_{\phi_*}(t, V_t))] \\
& \le -\kappa_2 \mathbb{E}|U_t - V_t|^2 + \left(\eta + \sqrt{\mathbb{E}|\nabla \log \mathring{h}(t, U_t) - \nabla \log h(t, U_t))|^2} \right) \sqrt{\mathbb{E}|U_t - V_t|^2}
\end{aligned}
\end{equation}
which follows from Assumption \ref{assump:2} $(v)$ and $(vii)$.
Combining \eqref{eq:Itodiff}, \eqref{eq:termb} and \eqref{eq:termc} yields:
\begin{equation}
\label{eq:diffineq}
\begin{aligned}
\frac{d \mathbb{E}|U_t - V_t|^2}{dt} & \le
-2 \Lambda \,\mathbb{E}|U_t - V_t|^2 \\
&    + 2g_{\max}^2 \left(\sqrt{\mathbb{E}|\nabla \log \mathring{h}(t, U_t) - \nabla \log h(t, U_t))|^2} + \varepsilon + \eta \right)\sqrt{\mathbb{E}|U_t - V_t|^2}.
\end{aligned}
\end{equation}

 \smallskip
\noindent{\bf Step 2}. Throughout this step, we fix $t\in[0,T)$.
Now we apply Malliavin calculus to bound $|\nabla \log h(t,y) - \nabla \log \mathring{h} (t,y)|$.
First we consider $| h(t,y) - \mathring{h} (t,y)|$.
It follows from  \cite[Proposition 3.1]{FL99} that for $\varepsilon > 0$ sufficiently small,
\begin{equation}
\label{eq:hest}
\begin{aligned}
& |h(t,y) - \mathring{h}(t,y)| \\
&\le C \left|\mathbb{E}^\circ_{[0,T]}\left[1(Y_T \in S) \int_t^T \overline{g}(u) (\nabla \log p(T-u, Y_u) - s_{\theta^*}(T-u, Y_u)) dB_u \,\bigg| \,Y_t =y \right] \right| \\
& \le Cg_{\max} \sqrt{\mathring{h}(t, y)} \sqrt{\mathbb{E}^\circ_{[0,T]}\left[\int_t^T \Big|\nabla \log p(T-u, Y_u) - s_{\theta^*}(T-u, Y_u))\Big|^2 du \,\bigg|\, Y_t = y\right]} \\
& \le Cg_{\max}  \sqrt{T-t} \sqrt{\mathring{h}(t,y)} \,\varepsilon,
\end{aligned}
\end{equation}
where the second inequality is by the Cauchy–Schwarz inequality,
and the last inequality is due to Assumption \ref{assump:2} $(iv)$.

   Next we bound $|\nabla h(t,y) - \nabla \mathring{h}(t,y)|$.
Introduce the {\em first variation process} $\{Z_u\}_{t \le u \le T}$ which solves:
\begin{equation*}
    d Z_u = \nabla \overline{f}(u, Y_u) Z_u du,    Z_t = I.
\end{equation*}
So $Z_u = \exp \left(\int_t^u \nabla \overline{f}(r, Y_r) dr \right)$ (here $\nabla \overline{f}$ is a matrix.)
By \cite[Proposition 3.2]{FL99},
\begin{equation*}
    \nabla h(t,y) = \mathbb{E}_{[0,T]}\left(\frac{1(Y_T \in S)}{T-t} \int_t^T \frac{Z_u}{\overline{g}(u)} dB_u \right)
    = \mathbb{E}_{[0,T]}\left(\frac{1(Y_T \in S)}{T-t} \int_t^T \frac{e^{\int_t^u \nabla \overline{f}(r, Y_r) dr}}{\overline{g}(u)} dB_u \right).
\end{equation*}
A similar argument as before shows that
\begin{equation}
\label{eq:nablahest}
\begin{aligned}
& |\nabla h(t,y) - \nabla \mathring{h}(t,y)| \le (d) + (e), \mbox{ where} \\
& (d)= C \bigg| \mathbb{E}^\circ_{[0,T]}\bigg(\frac{1(Y_T \in S)}{T-t} \int_t^T \frac{e^{\int_t^u \nabla \overline{f}(r, Y_r) dr}}{\overline{g}(u)} dB_u  \\ 
& \qquad \qquad \qquad \qquad \qquad \int_t^T \overline{g}(u) (\nabla \log p(T-u, Y_u) - s_\theta(T-u, Y_u)) dB_u \,\bigg|\, Y_t = y \bigg) \bigg|,  \\
& (e) = \left|\mathbb{E}^\circ_{[0,T]}\left(\frac{1(Y_T \in S)}{T-t} \int_t^T \frac{e^{\int_t^u \nabla \overline{f}(r, Y_r) dr} - e^{\int_t^u \nabla \overline{f}^\circ(r, Y_r) dr}}{\overline{g}(u)} dB_u \, \bigg|\,Y_t = y\right) \right|.
\end{aligned}
\end{equation}
For the term $(d)$, we have:
\begin{align}
\label{eq:termd}
(d) &\le \frac{C}{(T-t) g_{\min}}  \mathring{h}(t,y)^{\frac{1}{4}} \left\{\mathbb{E}^\circ_{[0,T]}\left( \int_t^T |e^{\int_t^u \nabla \overline{f}(r,Y_r) dr}|_F^2 du \,\bigg|\, Y_t=y\right)\right\}^{\frac{1}{2}} \notag \\
& \qquad \qquad \qquad    \left\{\mathbb{E}^\circ_{[0,T]}\left(\int_t^T \overline{g}(u) (\nabla \log p(T-u, Y_u) - s_\theta(T-u, Y_u)) dB_u \,\bigg|\, Y_t=y\right)^4\right\}^{\frac{1}{4}} \notag \\
& \le \frac{C}{(T-t)g_{\min}} \varepsilon \mathring{h}(t,y)^{\frac{1}{4}} \left\{\mathbb{E}^\circ_{[0,T]}\left( \int_t^T |e^{\int_t^u \nabla \overline{f}(r,Y_r) dr}|_F^2 du \,\bigg|\, Y_t=y \right)\right\}^{\frac{1}{2}} \left(g_{\max} \varepsilon \sqrt{T-t}\right) \notag \\
& \le \frac{C g_{\max} F}{g_{\min}{\color{black}\sqrt{T-t}}} \varepsilon \mathring{h}(t,y)^{\frac{1}{4}},
\end{align}
where the first inequality is due to H\"{o}lder's inequality,
the second inequality follows from the moment inequality (see \cite[Chapter IV, $\S 4$]{RY99}) and Assumption \ref{assump:2} $(iv)$,
and the final inequality is by Assumption \ref{assump:2} $(viii)$.
For the term $(e)$, we have:
\begin{equation}
\label{eq:terme}
\begin{aligned}
(e) &\le \frac{1}{(T-t) g_{\min}} \sqrt{\mathring{h}(t,y)} \sqrt{\mathbb{E}^\circ_{[0,T]}\left( \int_t^T |e^{\int_t^u \nabla \overline{f}(r, Y_r) dr} - e^{\int_t^u \nabla \overline{f}^\circ(r, Y_r) dr}|_F^2 du \,\bigg|\, Y_t=y \right)} \\
& \le \frac{1}{g_{\min}} \frac{\gamma}{\sqrt{T-t}} \sqrt{\mathring{h}(t,y)},
\end{aligned}
\end{equation}
where the first inequality is by the Cauchy–Schwarz inequality,
and the second inequality follows from Assumption \ref{assump:2} $(ix)$.
Applying \eqref{eq:termd} and \eqref{eq:terme} into \eqref{eq:nablahest} and combining with the fact that $\mathring{h}(t,y)\leq 1$ yields:
\begin{equation}
\label{eq:nablahest2}
 |\nabla h(t,y) - \nabla \mathring{h}(t,y)| \le C  \frac{{\color{black}F \varepsilon  +}\gamma}{\sqrt{T-t}} \mathring{h}(t,y)^{\frac{1}{4}},
    \mbox{for some } C > 0. 
\end{equation}

Combining \eqref{eq:hest}, \eqref{eq:nablahest2} and Assumption \ref{assump:2} $(vi)$ leads to:
\begin{equation}
\label{eq:keyhest}
\begin{aligned}
|\nabla \log h(t,y) - \nabla \log \mathring{h}(t,y)|
& \le \left| \frac{\mathring{h}(t,y) - h(t,y)}{\mathring{h}(t,y)} \nabla \log h(t,y)\right| + \left| \frac{\nabla h(t,y) - \nabla \mathring{h}(t,y)}{\mathring{h}(t,y)} \right| \\
& \le C \left\{  \frac{{\color{black}F +}G(y)}{\sqrt{T-t}}  \varepsilon + \frac{\gamma}{\sqrt{T-t}} \right\} \mathring{h}(t,y)^{-\frac{3}{4}}.
\end{aligned}
\end{equation}

 \smallskip
\noindent{\bf Step 3}. Observe that $\{U_t\}_{0 \le t \le T}$ is distributed by $P^{\circ,S}_{[0,T]}(\cdot)$.
By \eqref{eq:diffineq}, \eqref{eq:keyhest} and Assumption \ref{assump:2} $(x)$, we have:
\begin{equation}
\frac{d \mathbb{E}|U_t - V_t|^2}{dt} \le
-{\color{black}2} \Lambda \, \mathbb{E}|U_t - V_t|^2 + C \left(\varepsilon + \eta + \frac{\varepsilon + \gamma}{\sqrt{T-t}} \right) \sqrt{\mathbb{E}|U_t -V_t|^2}.
\end{equation}
By Gr\"{o}nwall's inequality (see \cite[Theorem 21]{DR03}), we get:
\begin{equation*}
\begin{aligned}
\mathbb{E}|U_T - V_T|^2 &\le \left( e^{-\Lambda T} W_2(p(T, \cdot), \, p_{\tiny \rm{noise}}(\cdot)) + C \int_0^T \left(\varepsilon + \eta + \frac{\varepsilon + \gamma}{\sqrt{T-t}} \right)e^{- \Lambda(T-t)} dt\right)^2 \\
& \le  \left( e^{-\Lambda T} W_2(p(T, \cdot), \, p_{\tiny \rm{noise}}(\cdot)) + C (\varepsilon + \eta + \gamma)\right)^2,
\end{aligned}
\end{equation*}
which leads to the bound \eqref{eq:W2key}.

   The rest of the theorem follows from the fact that 
$W^2_2(p(T, \cdot), \, p_{\tiny \rm{noise}}(\cdot)) \le \mathbb{E}_{p_{\tiny \rm{data}}(\cdot)}|X|^2$ for VE,
and 
$W_2(p(T, \cdot), \, p_{\tiny \rm{noise}}(\cdot)) \le e^{-CT}\mathbb{E}_{p_{\tiny \rm{data}}(\cdot)}|X|^2$ with $C > 0$ for VP.
\end{proof}
 
\subsection{Learning the $h$ function}
\label{sc43}

In this section, we study the convergence of the stochastic optimization problems outlined in Section \ref{sc32} 
(Alogrithm \ref{algo:1} and \ref{algo:2}) to learn the function $\nabla \log h$.
Our approach is generic, and does not require any explicit structure of function approximations. 

\subsubsection{Learning $h$}
We first consider the convergence of the stochastic optimization problem \eqref{eq:SA}. 
The stochastic approximation to the martingale loss is given by
\begin{equation}
\label{eq:SAh}
\phi_{n+1} = \phi_n + \delta_n \mathcal{V}(\phi_n,  \tau^{(n)}, Y^{(n)}),
\end{equation}
where $\delta_n > 0$ is the step size,
$\tau^{(n)} \sim \mbox{Unif}\,[0,T]$,
$Y^{(n)} = \{Y^{(n)}_t\}_{0 \le t \le T}$ is a copy of the pretrained model $P_{[0,T]}(\cdot)$,
and  
\begin{equation}
\mathcal{V}(\phi, \tau, Y):= -2  \partial_\phi h_{\phi}(\tau, Y_{\tau}) (h_{\phi}(\tau,Y_{\tau}) - 1(Y_T \in S)).
\end{equation}

   Our goal is to provide a quantitative bound on $|h_{\phi_n}(t,y) - h(t,y)|$ (in some weak sense).
Our idea follows from \cite[Section 4]{TZreg24}, 
which relies on \cite{BMP90} for stochastic approximations.
Set 
\begin{equation}
V(\phi):= \mathbb{E}_{\tau \sim \tiny \mbox{Unif}\,[0,T]}\Big[\mathbb{E}_{[0,T]}[ \mathcal{V}(\phi, \tau, Y)]\Big].
\end{equation}
We need the following assumptions.

 \begin{assump} \label{assump:3}
~
\begin{enumerate}[itemsep = 3 pt]
\item[(i)]
The ODE $\phi'(u) = V(\phi(u))$ has a unique stable equilibrium $\phi_*$ \footnote{$\phi_*$ is the unique stable equilibrium means that 
$V(\phi) = 0$ has a unique root $\phi_*$, and $V'(\phi_*) < 0$.}.
\item[(ii)]
There is $C > 0$ such that $\mathbb{E}_{[0,T]}[\mathcal{V}(\phi_{n+1}, Y) \,|\, \phi_n] \le C(1 + \phi_n^2)$.
\item[(iii)]
There is $\ell > 0$ such that $(\phi - \phi_*) V(\phi) \le - \ell |\phi - \phi_*|^2$.
\item[(iv)]
There is  a function $\omega: \mathbb{R}_+ \to \mathbb{R}_+$ such that 
$\omega(r)/r^\nu$ is bounded for some $\nu \le 2$,
and
$|h_{\phi} - h_{\phi'}|_{\infty} \le \omega(|\phi - \phi'|)$ for all $\phi, \phi'$.
\end{enumerate}
\end{assump}

The assumptions $(i)$--$(iii)$ guarantees that the stochastic approximation \eqref{eq:SAh} converges,
and the assumption $(iv)$ quantifies the sensitivity of the function approximation $\{h_{\phi}(t,x)\}_\phi$ with respect to the parameter.

\begin{theorem}
\label{thm:learnh}
Let Assumption \ref{assump:3} hold,
and $\delta_n = \frac{A}{n^{\zeta} + B}$ for some $\zeta \le 1$, $A > \frac{\zeta}{2 \ell}$ and $B > 0$.
We have:
\begin{equation}
\label{eq:hbound}
\mathbb{E}_{[0,T]}|h_{\phi_n} - h|_\infty \le |h - h_{\phi_*}|_\infty + Cn^{-\frac{\zeta \nu}{2}},
\end{equation}
where $h$ is defined in \eqref{eq:hfunc}.
\end{theorem}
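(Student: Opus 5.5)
The plan is to split the error by the triangle inequality,
\[
|h_{\phi_n} - h|_\infty \le |h - h_{\phi_*}|_\infty + |h_{\phi_n} - h_{\phi_*}|_\infty ,
\]
so that the first term is the deterministic approximation error of the parametric family at its best parameter. Only the second, random term remains to be controlled. By Assumption \ref{assump:3} $(iv)$ it satisfies $|h_{\phi_n} - h_{\phi_*}|_\infty \le \omega(|\phi_n - \phi_*|) \le C|\phi_n-\phi_*|^{\nu}$. If $\nu<2$ this only needs to hold for small arguments; I will either assume boundedness of the iterates or absorb the large-deviation region using the second moment bound below. Taking expectations and applying Jensen's inequality (concavity of $r\mapsto r^{\nu/2}$ for $\nu\le 2$) gives
\[
\mathbb{E}|h_{\phi_n}-h_{\phi_*}|_\infty \le C\big(\mathbb{E}|\phi_n-\phi_*|^2\big)^{\nu/2}.
\]
Hence it suffices to prove the mean-square rate $\mathbb{E}|\phi_n - \phi_*|^2 \le C n^{-\zeta}$.

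For the mean-square rate I will follow the standard Robbins--Monro argument, as in \cite[Section 4]{TZreg24} and \cite{BMP90}.

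1. Set $e_n := \mathbb{E}|\phi_n-\phi_*|^2$ and expand the recursion \eqref{eq:SAh}:
\[
|\phi_{n+1}-\phi_*|^2 = |\phi_n-\phi_*|^2 + 2\delta_n(\phi_n-\phi_*)\mathcal{V}(\phi_n,\tau^{(n)},Y^{(n)}) + \delta_n^2|\mathcal{V}|^2 .
\]
2. Condition on $\phi_n$. Because $(\tau^{(n)},Y^{(n)})$ are fresh independent samples, the conditional mean of $\mathcal{V}$ is exactly $V(\phi_n)$.
3. By Assumption \ref{assump:3} $(iii)$, the cross term is at most $-2\ell\delta_n|\phi_n-\phi_*|^2$.
4. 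I read Assumption \ref{assump:3} $(ii)$ as a second-moment growth bound. Together with $\phi_n^2\le 2|\phi_n-\phi_*|^2+2\phi_*^2$, it bounds the quadratic term by $C\delta_n^2(1+e_n)$.
5. This yields the scalar recursion
\[
e_{n+1}\le (1-2\ell\delta_n + C\delta_n^2)e_n + C\delta_n^2 .
\]

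It then remains to solve this recursion with $\delta_n = A/(n^\zeta+B)$. I will show by induction that $e_n\le Kn^{-\zeta}$ for a large constant $K$. The choice of $B$ and $K$ handles the initial steps, since $\delta_n$ is small for large $n$ and the term $C\delta_n^2$ is eventually dominated by $\ell\delta_n$. Plugging the ansatz in, the induction needs
\[
(1-2\ell\delta_n)Kn^{-\zeta} + C\delta_n^2 \le K(n+1)^{-\zeta}.
\]
Using $(n+1)^{-\zeta}\ge n^{-\zeta}(1-\zeta/n)$ and $\delta_n\approx A n^{-\zeta}$, this reduces for $\zeta=1$ to $2\ell A>\zeta$, which is precisely the hypothesis $A>\zeta/(2\ell)$. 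For $\zeta<1$ the contraction $2\ell\delta_n\gg \zeta/n$ makes the step easier. The induction gives $e_n\le Cn^{-\zeta}$, and substituting into the first paragraph gives $Cn^{-\zeta\nu/2}$, hence \eqref{eq:hbound}.

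The main obstacle is this last bookkeeping step: closing the induction uniformly, in particular the boundary case $\zeta=1$, where the constant $A$ matters and the threshold $A>\zeta/(2\ell)$ must be used sharply. A secondary difficulty is justifying the interchange of $\mathbb{E}_{[0,T]}$ with the sup norm and the use of $(iv)$ globally when $\nu<2$.
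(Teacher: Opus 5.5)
Your proposal is correct and follows the paper's proof: the same triangle-inequality split, Assumption~\ref{assump:3}~$(iv)$, and the concavity bound $\mathbb{E}|\phi_n-\phi_*|^{\nu}\le(\mathbb{E}|\phi_n-\phi_*|^2)^{\nu/2}$ (which the paper calls Cauchy--Schwarz) reduce everything to $\mathbb{E}|\phi_n-\phi_*|^2\le Cn^{-\zeta}$. The only difference is that the paper imports this mean-square rate from \cite[Theorem 22]{BMP90}, whereas you derive it by hand from the one-step recursion and induction; that derivation is valid under your second-moment reading of $(ii)$ and shows that $A>\zeta/(2\ell)$ is only binding at $\zeta=1$, and your worry about $\nu<2$ is moot because $(iv)$ bounds $\omega(r)/r^\nu$ globally.
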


Note that the upper bound in \eqref{eq:hbound} consists of two terms. The first term, $\lVert h - h_{\phi_*}\rVert_\infty$, represents the approximation error induced by the functional class $\{h_\phi\}_{\phi}$. If the class is sufficiently expressive to contain $h$, this term vanishes. The second term converges to zero as $n \to \infty$. In particular, choosing $\zeta = 1$ and $\nu = 2$ yields a linear convergence rate of order $n^{-1}$ for the second term.

\begin{proof}
It follows from \cite[Theorem 22]{BMP90} that under Assumption \ref{assump:3} $(i)$--$(iii)$ and $\delta_n = \frac{A}{n^{\zeta} + B}$,
\begin{equation}
\label{eq:phiL2}
\mathbb{E}_{[0,T]}|\phi_n - \phi_*|^2 \le Cn^{-\zeta}.
\end{equation}
As a result,
\begin{equation}
\label{eq:happrox}
\begin{aligned}
\mathbb{E}_{[0,T]}|h_{\phi_n} - h|_\infty & \le |h - h_{\phi_*}|_\infty + \mathbb{E}_{[0,T]}|h_{\phi_n} - h_{\phi_*}|_\infty \\
& \le |h - h_{\phi_*}|_\infty + \mathbb{E}_{[0,T]}[\omega(|\phi_n - \phi_*|)]\\
& \le |h - h_{\phi_*}|_\infty + C (\mathbb{E}|\phi_n - \phi_*|^2)^{\frac{\nu}{2}},
\end{aligned}
\end{equation}
where the first inequality is from the triangle inequality,
the second inequality is due to Assumption \ref{assump:3} $(iv)$,
and the last inequality is by the Cauchy-Schwarz inequality.
Combining \eqref{eq:phiL2} and \eqref{eq:happrox} yields the bound \eqref{eq:hbound}.
\end{proof}

   The first term $|h - h_{\phi_*}|_\infty$ on the right side of \eqref{eq:hbound} quantifies how well the family $\{h_\phi(t,y)\}_\phi$ approximates the $h$ function,
and the second term $n^{-\frac{\zeta \nu}{2}}$ gives the convergence rate of the stochastic approximation \eqref{eq:SAh}.
In particular, if the family $\{h_\phi(t,y)\}_\phi$ is rich enough to contain the $h$ function (i.e., $|h - h_{\phi_*}|_\infty =0$),
and $\{h_\phi(t,y)\}_\phi$ is Lipschitz in $\phi$ (i.e, $\nu = 1$), 
then
$h_{\phi_n}$ converges to $h$ at a rate $n^{-\frac{1}{2}}$ by taking the step size $1/n$.

\subsubsection{Learning $\nabla h$}
Now we establish similar results for the stochastic optimization problem \eqref{eq:SA2}.
Fixing $n >0$, we use $h_{\phi_n}$ to approximate the covariation,
so the stochastic approximation to the covariation loss is:
\begin{equation}
\label{eq:SAnabla}
\psi_{m+1}  = \psi_{m} + \delta'_m \mathcal{U}_n(\psi_m, \tau^{(m)}, Y^{(m)}),
\end{equation}
where 
\begin{equation}
\mathcal{U}_n(\psi, \tau, Y):= -2  \partial_{\psi} q_{\psi}(\tau, Y_{\tau})\left(q_{\psi}(\tau, Y_\tau) - \frac{1}{\overline{g}(\tau)^2} \frac{d[h_{\phi_n}, Y]_t}{dt}|_{t= \tau} \right).
\end{equation}
Also set
\begin{equation}
U_n(\psi):= \mathbb{E}_{\tau \sim \tiny \mbox{Unif}\,[0,T]}\{\mathbb{E}_{[0,T]}[\mathcal{U}_n(\psi, \tau, Y)]\}.
\end{equation}
We need the following assumptions.

\begin{assump} \label{assump:4}
~
\begin{enumerate}[itemsep = 3 pt]
\item[(i)]
The ODE $\psi'(u) = U_n(\psi(u))$ has a unique stable equilibrium $\psi_{*}$.
\item[(ii)]
There is $C > 0$ such that $\mathbb{E}_{[0,T]}[\mathcal{U}_n(\psi_{m+1}, Y) \,|\, \psi_m] \le C(1 + \psi_m^2)$.
\item[(iii)]
There is $\ell > 0$ such that $(\psi - \psi_*) U_n(\psi) \le - \ell |\psi - \psi_*|^2$.
\item[(iv)]
There is  a function $\omega: \mathbb{R}_+ \to \mathbb{R}_+$ such that 
$\omega(r)/r^{\nu'}$ is bounded for some $\nu' \le 2$,
and
$|q_{\psi} - q_{\psi'}|_{\infty} \le \omega(|\psi - \psi'|)$ for all $\psi, \psi'$.
\end{enumerate}
\end{assump}

\begin{theorem}
\label{thm:learnabla}
Let Assumption \ref{assump:4} hold,
and $\delta'_m = \frac{A}{m^{\zeta'} + B}$ for some $\zeta' \le 1$, $A > \frac{\zeta'}{2 \ell}$ and $B > 0$.
We have:
\begin{equation}
\label{eq:nhbound}
\begin{aligned}
\mathbb{E}_{[0,T]}|q_{\psi_m} - \nabla h|_\infty & \le 
\mathbb{E}_{[0,T]}\left|\frac{1}{\overline{g}(t)^2} \frac{d[h_{\phi_n}, Y]_t}{dt} -\nabla h(t, Y_t) \right| + \mathbb{E}_{[0,T]}| \nabla h_{\phi_n}-q_{\psi_{*n}}|_{\infty}
+  Cm^{-\frac{\zeta' \nu'}{2}}.
\end{aligned}
\end{equation}
\end{theorem}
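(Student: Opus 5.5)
The plan mirrors the proof of Theorem \ref{thm:learnh}: a triangle-inequality decomposition, followed by a stochastic-approximation rate for the parameter iterates $\psi_m$. For fixed $n$, the target of the covariation loss \eqref{eq:SA2} is the regression function
\[
\frac{1}{\overline{g}(t)^2}\frac{d[h_{\phi_n},Y]_t}{dt} = \nabla h_{\phi_n}(t,Y_t),
\]
by the covariation identity \eqref{eq:QV} applied to the process $h_{\phi_n}(t,Y_t)$. The loss therefore has population minimizer $q_{\psi_{*n}}$, which is the best approximation of $\nabla h_{\phi_n}$ within the class $\{q_\psi\}_\psi$. I would then write
\[
|q_{\psi_m}-\nabla h| \le \Bigl|\nabla h - \frac{1}{\overline{g}(t)^2}\frac{d[h_{\phi_n},Y]_t}{dt}\Bigr| + |\nabla h_{\phi_n} - q_{\psi_{*n}}| + |q_{\psi_{*n}} - q_{\psi_m}|,
\]
evaluate it along $(t,Y_t)$, and take $\mathbb{E}_{[0,T]}$. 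The first two terms on the right already appear in \eqref{eq:nhbound}: the first is the error from substituting $h_{\phi_n}$ for $h$, and the second is the approximation error of the $q$-class. Only the third term needs work.

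For the third term I would invoke \cite[Theorem 22]{BMP90}, exactly as in \eqref{eq:phiL2}. With $n$ frozen, the recursion \eqref{eq:SAnabla} is a Robbins--Monro scheme with mean field $U_n$. Assumption \ref{assump:4} (i)--(iii) provides uniqueness and stability of $\psi_{*}=\psi_{*n}$, the growth bound on the noise, and the strong monotonicity $(\psi-\psi_*)U_n(\psi)\le -\ell|\psi-\psi_*|^2$. With step sizes $\delta'_m = A/(m^{\zeta'}+B)$ and $A>\zeta'/(2\ell)$, this yields $\mathbb{E}|\psi_m-\psi_{*n}|^2\le Cm^{-\zeta'}$.

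Next, Assumption \ref{assump:4} (iv) gives
\[
|q_{\psi_m}-q_{\psi_{*n}}|_\infty\le \omega(|\psi_m-\psi_{*n}|)\le C|\psi_m-\psi_{*n}|^{\nu'}.
\]
Since $\nu'\le 2$, Jensen's inequality then gives
\[
\mathbb{E}|\psi_m-\psi_{*n}|^{\nu'}\le (\mathbb{E}|\psi_m-\psi_{*n}|^2)^{\nu'/2}\le Cm^{-\zeta'\nu'/2}.
\]
Combining this with the decomposition gives \eqref{eq:nhbound}.

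I expect the main obstacle to be bookkeeping rather than analysis. The left side of \eqref{eq:nhbound} is a sup-norm, but the first term on the right is a pathwise expectation of the covariation-estimation error. The decomposition is therefore cleanest pointwise along trajectories, and one has to be careful that the sup-norm bound is interpreted in the same (weak) sense as in Theorem \ref{thm:learnh}. A second point to check is that the constant $C$ coming from \cite{BMP90} may depend on $n$ through $U_n$. The statement does not track this dependence, so I would simply allow $C$ to depend on $n$.
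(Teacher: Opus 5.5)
Your proposal is essentially the paper's argument, which the paper only sketches as being ``in the same vein'' as Theorem~\ref{thm:learnh}: a triangle-inequality split, \cite[Theorem 22]{BMP90} for $\mathbb{E}|\psi_m-\psi_{*n}|^2\le Cm^{-\zeta'}$, then Assumption~\ref{assump:4}(iv) plus Jensen (which, unlike the Cauchy--Schwarz cited there, is the correct tool for general $\nu'\le 2$). The sup-norm versus pathwise-expectation mismatch you flag is genuine and lies in the statement, not in your reasoning. Unlike in Theorem~\ref{thm:learnh}, a sup-norm on the left cannot be controlled by an expectation along $(t,Y_t)$ on the right, so what the argument (yours and the paper's) actually yields is one of two things: either \eqref{eq:nhbound} with the left side evaluated along $(t,Y_t)$, or the sup-norm bound with the first term replaced by its sup-norm analogue $|\nabla h_{\phi_n}-\nabla h|_\infty$, via your identity $\overline{g}(t)^{-2}\,d[h_{\phi_n},Y]_t/dt=\nabla h_{\phi_n}(t,Y_t)$.
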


   The proof of the theorem is in the same vein as that of Theorem \ref{thm:learnh}.
The first term $\mathbb{E}_{[0,T]}\left(\left|\frac{1}{\overline{g}(t)^2} \frac{d[h_{\phi_n}, Y]_t}{dt} -\nabla h(t, Y_t) \right| \right)$
on the right side of \eqref{eq:nhbound}
quantifies how close the covariation $\frac{d[h_{\phi_n}, Y]_t}{dt}$ is to $\frac{d[h, Y]_t}{dt}$.
However, it is generally hard to provide an explicit bound on this term \footnote{It follows from \cite[Chapter VI, \S 6c]{JS03} (and also \cite{Jacod81}) that if a diffusion process $\{Z^n_t\}_{0 \le t \le T}$ converges in distribution to $\{Z_t\}_{0 \le t \le T}$, and under very technical conditions, the quadratic variation of $Z^n$ converges in distribution to that of $Z$.
So $\frac{d[h_{\phi_n}, Y]_t}{dt}$ and $\frac{d[h, Y]_t}{dt}$ are expected to be close, because $h_{\phi_n} \approx h$ by Theorem \ref{thm:learnh}. 
However, the proofs in \cite{Jacod81, JS03} rely on soft measure-theoretic arguments, and it seems to be a challenging task to provide an explicit convergence rate of quadratic variation.}, 
and we simply denote it by $\theta(n)$.
Also note that the estimation of $\frac{d[h_{\phi_n}, Y]_t}{dt}$ also incurs a sample error \footnote{As mentioned in the footnote \footref{foot:QV}, the quadratic variation can be estimated by sampling the pretrained model repeatedly.
So the central limit theorem implies that the sample error of estimating the covariation $\frac{d[h_{\phi_n}, Y]_t}{dt}$
is of order $1/M$, with $M$ the sample size.},
which we do not pursue here.
The second term $|\nabla h_{\phi_n}-q_{\psi_{*n}}|_{\infty}$ measures how well the family $\{q_{\psi}(t,y)\}_\psi$ approximates $\nabla h_{\phi_n}$,
and the third term $m^{-\frac{\zeta' \nu'}{2}}$ is the convergence rate of the stochastic approximation \eqref{eq:SAnabla}.

   Combining Theorem \ref{thm:learnh} and \ref{thm:learnabla},
we have (at least heuristically) that the learning error $\eta$ of $\nabla \log h$ is of order:
\begin{equation}
\theta(n) + n^{-\frac{\zeta \nu}{2}} + m^{-\frac{\zeta' \nu'}{2}} + \mbox{discrepancy of approximations } \{h_\phi(t,y)\}_\phi, \{q_\psi(t,y)\}_\psi.
\end{equation}
Again if the families $\{h_\phi(t,y)\}_\phi$, $\{q_\psi(t,y)\}_\psi$ are rich enough and Lipschitz in the parameter (i.e., $\nu = \nu' = 1$),
then $\eta$ is of order $\theta(n) + n^{-\frac{1}{2}} + m^{-\frac{1}{2}}$
by taking the step sizes $\delta_n = 1/n$, $\delta'_m = 1/m$.

\section{Extensions}
\label{sc4+}

This section provides two extensions of the algorithms proposed in Section \ref{sc3}.
In Section \ref{sc:ODE}, we consider a more efficient probability-flow ODE sampler.
In Section \ref{sc:reinforce},
we discuss how conditioning can be reinforced in the context of classifier guidance.

\subsection{ODE sampling}
\label{sc:ODE}
Diffusion models admit an equivalent probability-flow ODE whose deterministic trajectories preserve the target marginals, and ODE-based samplers are often more efficient than stochastic SDE samplers due to reduced variance accumulation, improved numerical stability, and the ability to leverage higher-order adaptive solvers.
In particular, our Diffusion Conditional Guidance framework naturally extends to ODE-based sampling with minimal modification.

The ODE sampler for the pretrained model \eqref{eq:back} follows:
\begin{equation}
\label{eq:ODEpre}
\frac{d Y_t}{dt} = -f(T-t, Y_t) + \frac{1}{2} g(T-t)^2 s_{\theta_*}(T-t, Y_t),    Y_0 \sim p_{\tiny \rm{noise}}(\cdot).
\end{equation}

The following proposition is key to the ODE sampling of $p^S_{\tiny \rm{data}}(\cdot)$.
\begin{proposition}
Let $\{X_t\}_{0 \le t \le T}$ be defined by the SDE
$$dX_t = f(t, X_t) dt + g(t) dW_t,    X_0 \sim p^S_{\tiny \rm{data}}(\cdot),$$
 and let $\{X'_t\}_{0 \le t \le T}$ be defined by the ODE
\begin{equation}
\frac{dX'_t}{dt} = f(t,X'_t) - \frac{1}{2} g^2(t)\left(\nabla \log p(t,X'_t) + \nabla \log \mathring{h}(t,X'_t) \right),   
X_0 \sim p^S_{\tiny \rm{data}}(\cdot).
\end{equation}
Then $X_t$ and $X'_t$ have the same distribution for each $t$.
\end{proposition}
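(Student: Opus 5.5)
The plan is to show that the two processes have the same marginal densities. First we identify the marginal law of the forward SDE started from $p^S_{\rm data}(\cdot)$ in closed form. Then we check that this density solves the continuity equation driven by the ODE's velocity field, and conclude by uniqueness for that equation.

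\emph{Step 1 (marginals of $X$).} The SDE for $X$ has the same coefficients as \eqref{eq:forward}; only the initial law changes. Hence, by the Markov property, $X$ has the law of the original forward process conditioned on $\{X_0\in S\}$. By Bayes' rule its density at time $t$ is
\begin{equation*}
q(t,x) = \frac{p(t,x)\,\mathbb{P}(X_0\in S\,|\,X_t=x)}{p_{\rm data}(S)} .
\end{equation*}
The true time reversal \eqref{eq:truetr} satisfies $\overline X_s \stackrel{d}{=} X_{T-s}$, and in fact the reversal has the law of the reversed path, so conditional laws agree as well. Therefore $\mathbb{P}(X_0\in S\,|\,X_t=x)=\mathring h(T-t,x)$. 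This gives $q(t,x)=p(t,x)\mathring h(T-t,x)/p_{\rm data}(S)$ and
\begin{equation*}
\nabla\log q(t,x)=\nabla\log p(t,x)+\nabla\log\mathring h(T-t,x).
\end{equation*}
I read the $\mathring h$ term in the ODE as evaluated in the forward-time convention, i.e.\ at time $T-t$; this is the only choice that makes the statement consistent. As a check, the function $\mathring h(T-\cdot,\cdot)$ is exactly the backward Kolmogorov ($h$-transform) quantity attached to the forward process.

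\emph{Step 2 (continuity equation).} The density $q$ solves the Fokker--Planck equation of the forward SDE:
\begin{equation*}
\partial_t q = -\nabla\cdot(fq)+\tfrac12 g(t)^2\Delta q = -\nabla\cdot\Big(\big(f-\tfrac12 g(t)^2\nabla\log q\big)q\Big),
\end{equation*}
using $\Delta q=\nabla\cdot(q\nabla\log q)$. By Step 1 the velocity field here, $v(t,x)=f(t,x)-\tfrac12 g(t)^2(\nabla\log p+\nabla\log\mathring h)$, is exactly the ODE's velocity. So $t\mapsto q(t,\cdot)$ and $t\mapsto \mathrm{Law}(X'_t)$ both solve $\partial_t\mu_t+\nabla\cdot(v\mu_t)=0$ with the same initial datum $p^S_{\rm data}$.

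\emph{Step 3 (uniqueness) and main obstacle.} If $v$ is locally Lipschitz in $x$ with suitable growth, the flow of the ODE is well defined. In that case any solution of the continuity equation equals the push-forward of the initial law by the flow, which gives $X_t\stackrel{d}{=}X'_t$. The delicate point is $t\downarrow 0$. There $\mathring h(T-t,\cdot)\to 1(\cdot\in S)$, so $\nabla\log\mathring h$ blows up (compare the bound in Assumption \ref{assump:2}(vi)), and $\nabla\log p(t,\cdot)$ may also be singular if $p_{\rm data}$ is not smooth.

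I would first handle this on $[\epsilon,T]$. For $t\ge\epsilon$, $g$ is bounded away from zero, so $p$ and $\mathring h$ are smooth and positive by the parabolic estimates cited after \eqref{eq:h_pde}, and $v$ is regular. Applying uniqueness on $[\epsilon,T]$ with initial law $q(\epsilon,\cdot)$, I would then let $\epsilon\to0$ using $X_\epsilon\to X_0$ and continuity of the ODE solution at $0$. Alternatively, one can invoke the superposition principle, which needs only $\int_0^T\!\int|v|\,q\,dx\,dt<\infty$. This integrability follows from $\mathbb{E}|\nabla\log q(t,X_t)|^2<\infty$ for $t>0$ together with a Fisher-information-type integrability near $0$, and it is the estimate I expect to require the most care.
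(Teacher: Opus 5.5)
Your argument is the paper's. The paper also writes the law of $X_t$ as $p(t,x;S)\propto p(t,x)\,\mathbb{P}(X_0\in S\mid X_t=x)$, splits its score into $\nabla\log p+\nabla\log\mathring h$, and cites the probability-flow ODE theorem \cite[Theorem 6.1]{TZ24tut}, which your Steps 2--3 reprove via the continuity equation; your evaluation of $\mathring h$ at time $T-t$ is the correct reading of what the paper writes as $\mathring h(t,x)$. The one caution is in Step 3: close it with the superposition route (together with uniqueness of integral curves from $t=0$, which the statement tacitly assumes), because uniqueness on $[\epsilon,T]$ only transports whatever law $X'_\epsilon$ has, and without uniform control of the flows from time $\epsilon$ as $\epsilon\to0$ the limit does not show $\mathrm{Law}(X'_\epsilon)=q(\epsilon,\cdot)$.
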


\begin{proof}
Let $p(t,x; S): = \frac{\mathbb{P}(X_t \in dx, \, X_0 \in S)/dx}{p_{\tiny \rm{data}}(S)}$,
and note that
$\nabla \log p(t,x; S) = \nabla \log p(t,x) +\nabla \log \mathring{h}(t,x)$.
Thus, 
$\frac{dX'_t}{dt} = f(t,X'_t) - \frac{1}{2} \nabla \log p(t,X'_t;S)$.
The conclusion follows from \cite[Theorem 6.1]{TZ24tut}.
\end{proof}

   It is expected that $\nabla \log \mathring{h}(t,y) \approx \nabla \log h(t,y)$.
That is,
the $h$ functions are close under $P^\circ_{[0,T]}(\cdot)$ and $P_{[0,T]}(\cdot)$.
Denote by $\mu_{\phi_*}(t,y)$ the function approximation for $\nabla \log h(t,y)$ in Algorithm \ref{algo:1} or \ref{algo:2}
(i.e., $\nabla \log h_{\phi_*}(t,y)$ in Algorithm \ref{algo:1} and $\frac{q_{\psi_*}(t,y)}{h_{\phi_*}(t,y)}$ in Algorithm \ref{algo:2}).
The ODE sampler of $p^S_{\tiny \rm{data}}(\cdot)$ follows
\begin{equation}
\frac{d Y^S_t}{dt} = -f(t, Y_t) + \frac{1}{2} g(T-t)^2 \left(s_{\theta_*}(T-t, Y^S_t) +\mu_{\phi_*}(t, Y^S_t) \right),    Y_0 \sim p_{\tiny \rm{noise}}(\cdot).
\end{equation}

   Algorithm \ref{algo:1} and \ref{algo:2} can be easily adapted to labeling and ODE sampling,
which are summarized as follows.
{
\renewcommand{\thealgorithm}{A'}
\begin{algorithm}
  \caption{Conditional diffusion guidance via martingale loss (CDG-ML)} \label{algo:3}
  \begin{algorithmic}
      \State {\bf Input}: pretrained model $\{Y_t\}_{0 \le t \le T}$ \eqref{eq:ODEpre}, label $r(y)$, guidance set $S_r$, parametrized family $\{h_\phi(t,y)\}_\phi$
      \State    Step 1.  Solve the stochastic optimization problem \eqref{eq:SA} that outputs $\phi_*$. 
      \State    Step 2. Sample
      \begin{equation*}
      \frac{d Y^S_t}{dt} = -f(t, Y_t) + \frac{1}{2} \overline{g}(t)^2 s_{\theta_*}(T-t, Y^S_t) +\frac{1}{2} \overline{g}(t)^2 \nabla \log h_{\phi_*}(t, Y^S_t),    Y^S_0 \sim p_{\tiny \rm{noise}}(\cdot).
\end{equation*}    
      \State {\bf Output}: $Y^S_T$.
  \end{algorithmic}
\end{algorithm}
}
{
\renewcommand{\thealgorithm}{B'}
\begin{algorithm}
  \caption{Conditional diffusion guidance via martingale-covariation loss (CDG-MCL)} \label{algo:4}
  \begin{algorithmic}
      \State {\bf Input}: pretrained model $\{Y_t\}_{0 \le t \le T}$ \eqref{eq:back} and \eqref{eq:ODEpre}, label $r(y)$, guidance set $S_r$, parametrized families $\{h_\phi(t,y)\}_\phi$, $\{q_\psi(t,y)\}_\psi$
      \State    Step 1. Solve the stochastic optimization problem \eqref{eq:SA} that outputs $\phi_*$. 
       \State    Step 2. Use the ODE \eqref{eq:ODEpre} to sample $Y_t$, and then the SDE \eqref{eq:back} to estimate $\frac{d[h_{\phi_*}, Y]_t}{dt}$. 
       \State \qquad       \, Solve the stochastic optimization problem \eqref{eq:SA2} that outputs $\psi_*$. 
      \State    Step 3. Sample
      \begin{equation*}
\frac{d Y^S_t}{dt} = -f(t, Y_t) + \frac{1}{2} \overline{g}(t)^2 s_{\theta_*}(T-t, Y^S_t) +\frac{1}{2} \overline{g}(t)^2 \frac{q_{\psi_*}(t, Y^S_t) }{h_{\phi_*}(t, Y^S_t)},    Y^S_0 \sim p_{\tiny \rm{noise}}(\cdot).
\end{equation*}    
      \State {\bf Output}: $Y^S_T$.
  \end{algorithmic}
\end{algorithm}
}

Note that Algorithm \ref{algo:4} also requires the SDE sampler \eqref{eq:back},
which is used to estimate the covariation $\frac{d[h_{\phi_*}, Y]_t}{dt}$ in Step 2.
This is because 
the ODE and the SDE sampler only agree in distribution marginally,
but not at the level of the process that is needed to approximate the quadratic variation.

We also point out various numerical schemes to discretize the above continuous-time samplers,
see \cite[Section 5.3]{TZ24tut} and \cite{WCW24} for the references. 
Since we rely on the pretrained model for sampling,
we will simply follow its built-in schemes
(so we do not pursue this direction here).

\subsection{Reinforcing conditioning}
\label{sc:reinforce}
Central to the conditional sampling \eqref{eq:backguide} is the Baye's rule
$P_{[0,T]}(Y|Y_T \in S) \propto P_{[0,T]}(Y_T \in S|Y) P_{[0,T]}(Y)$,
which guides the sample $Y$ to a mode specified by the classifier $P_{[0,T]}(Y_T \in S|Y)$.
Classifier guidance \cite{Dh21} further strengthens conditioning 
with a guidance scale $\eta > 0$:
\begin{equation*}
P_{[0,T]}(Y |Y_T \in S) \propto P_{[0,T]}(Y_T \in S | Y)^{\eta} P_{[0,T]}(Y),
\end{equation*}
which echos  \cite{Norvig12} 
that empirical data sometimes infer physical laws differing from those assumed in traditional modeling.
See \cite{holmes2017assigningvaluepowerlikelihood} for detailed discussions.

The resulting SDE sampler is:
\begin{equation}
\label{eq:backguidew}
\begin{aligned}
dY^{S,\eta}_t &= \left(\overline{f}(t, Y^{S,\eta}_t) + \eta\overline{g}^2(t) \nabla \log h(t, Y^{S,\eta}_t) \right)dt + \overline{g}(t) dB_t.
\end{aligned}
\end{equation}
So the sampling step in CDG-ML (Algorithm A and A') reads as:
\begin{equation}
dY^{S, \eta}_t = \left(\overline{f}(t, Y^{S, \eta}_t) +  \eta\overline{g}(t)^2 \frac{\nabla h_{\phi_*}(t, Y^{S, \eta}_t)}{h_{\phi_*}(t, Y^{S, \eta}_t)} \right)dt + \overline{g}(t) dB_t,  Y_0^{S, \eta} \sim p_{\tiny \rm{noise}}(\cdot),
\end{equation}
or
\begin{equation}
\label{eq:guideA}
\frac{d Y^{S, \eta}_t}{dt} = -f(t, Y^{S, \eta}_t) + \frac{1}{2} \overline{g}(t)^2 s_{\theta_*}(T-t, Y^{S, \eta}_t) +\frac{\eta}{2} \overline{g}(t)^2 \nabla \log h_{\phi_*}(t, Y^{S, \eta}_t),  Y^{S, \eta}_0 \sim p_{\tiny \rm{noise}}(\cdot),
\end{equation}
and that in CDG-MCL (Algorithm B and B') reads as:
\begin{equation}
dY^{S, \eta}_t = \left(\overline{f}(t, Y^{S, \eta}_t) + \eta\overline{g}(t)^2 \frac{q_{\psi_*}(t, Y^{S, \eta}_t)}{h_{\phi_*}(t, Y^{S, \eta}_t)} \right)dt + \overline{g}(t) dB_t, Y_0^{S, \eta} \sim p_{\tiny \rm{noise}}(\cdot),
\end{equation}
or
\begin{equation}
\label{eq:guideB}
\frac{d Y^{S, \eta}_t}{dt} = -f(t, Y^{S, \eta}_t) + \frac{1}{2} \overline{g}(t)^2 s_{\theta_*}(T-t, Y^{S, \eta}_t) +\frac{\eta}{2} \overline{g}(t)^2 \frac{q_{\psi_*}(t, Y^{S, \eta}_t) }{h_{\phi_*}(t, Y^{S, \eta}_t)},    Y^{S, \eta}_0 \sim p_{\tiny \rm{noise}}(\cdot).
\end{equation}

A crucial problem in classifier guidance is the choice of the scale $\eta$.
It was shown in \cite{WC24} that as $\eta$ increases,
the generation is steered toward the high-probability modes induced by the classifier,
while the sample diversity (in entropy) decreases.
On the other hand, 
overly large  $\eta$ will lead to mode collapse.

\section{Numerical experiments}
\label{sc5}
In this section, we present numerical experiments to illustrate and validate the proposed Conditional Diffusion Guidance framework. We test our framework on a synthetic example to visualize the power of enforcing a sharp constraint. We then apply our framework to perform stress testing in finance and supply chain systems. 
Across all three testcases, ur experiments are designed to assess (i) the effectiveness of the martingale-based objectives for learning the guidance function $h$ and its gradient, (ii) the quality of the resulting conditional samples compared with the target conditional distribution, and (iii) the empirical behavior of the method under finite-sample and model-misspecification effects. In particular, we focus on demonstrating how the proposed algorithms translate the theoretical guarantees developed in Section 4 into practical performance, and how different guidance strategies impact sample fidelity and stability. All experiments are conducted using pretrained diffusion models, emphasizing that the guidance procedure operates as a lightweight post-training mechanism without modifying the underlying score network.
\subsection{Synthetic examples}
\label{sc61}
To demonstrate the effectiveness of our framework, we consider one-dimensional and two-dimensional toy examples.

First, suppose we aim to generate data from the Gaussian distribution $\mathcal{N}(1,4)$, and define the endogenous guidance set as $S = (3,\infty)$. When the target distribution is Gaussian, its marginal distributions along the diffusion process remain Gaussian (e.g., under a variance-exploding SDE of the form $d X_t = \sigma^t dW_t, t\in[0,1]$). As a result, the corresponding score functions are available in closed form, and we do not need to train a neural network to estimate them via score matching.


   We evaluate Algorithm~\ref{algo:1}, \emph{Conditional Diffusion Guidance via Martingale Loss} (CDG-ML), and Algorithm~\ref{algo:2}, \emph{Conditional Diffusion Guidance via Martingale–Covariation Loss} (CDG-MCL). The corresponding results are reported in Figures~\ref{fig:1cdg-ml} and~\ref{fig:1cdg-mcl}, respectively. To further quantify the agreement between the generated samples and the target distribution, we conduct a Kolmogorov–Smirnov (K–S) test. CDG-ML yields a K–S statistic of $0.0694$ with a $p$-value of $7.1 \times 10^{-126}$, while CDG-MCL yields a K–S statistic of $0.0437$ with a $p$-value of $4.1 \times 10^{-50}$. In both cases, the relatively small K–S statistics indicate that the empirical distributions closely match the target conditional distribution $\mathcal{N}(1,4)\mid(3,\infty)$, with CDG-MCL exhibiting a closer fit.

\begin{figure}[htbp]
    \centering
    \begin{subfigure}{0.5\linewidth}
        \centering
        \includegraphics[width=\linewidth]{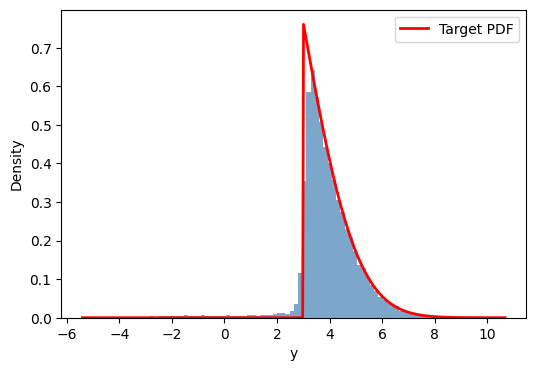}
        \caption{CDG-ML on $\mathcal{N}(1,4)|(3,\infty)$}
        \label{fig:1cdg-ml}
    \end{subfigure}%
    \hfill
    \begin{subfigure}{0.5\linewidth}
        \centering
        \includegraphics[width=\linewidth]{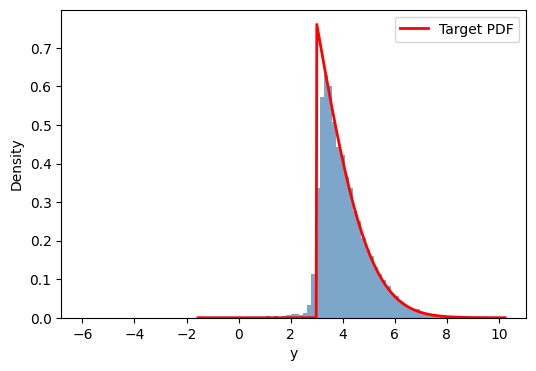}
        \caption{CDG-MCL on $\mathcal{N}(1,4)|(3,\infty)$}
        \label{fig:1cdg-mcl}
    \end{subfigure}
\end{figure}

Next, we consider a two-dimensional synthetic example with target distribution $\mathcal{N}(0,4I_2)$ and endogenous guidance set $S = (1,\infty) \times (1,\infty)$. As in the one-dimensional case, the marginal distributions along the diffusion process admit closed-form expressions. The histogram of samples from the true conditional distribution is shown in Figure~\ref{fig:2d-real}. To quantitatively assess performance, we compute the 2-Wasserstein distance. Algorithm~\ref{algo:1} yields a distance of $0.3451$, while Algorithm~\ref{algo:2} achieves a substantially smaller distance of $0.0765$. The corresponding generated histograms are presented in Figures~\ref{fig:2cdg-ml} and~\ref{fig:2cdg-mcl}, respectively. Both methods produce reasonable approximations of the target conditional distribution $\mathcal{N}(0,4I_2)\mid\big((1,\infty)\times(1,\infty)\big)$, with Algorithm~\ref{algo:2} providing a closer match.


\begin{figure}
    \centering
    \includegraphics[width=0.5\linewidth]{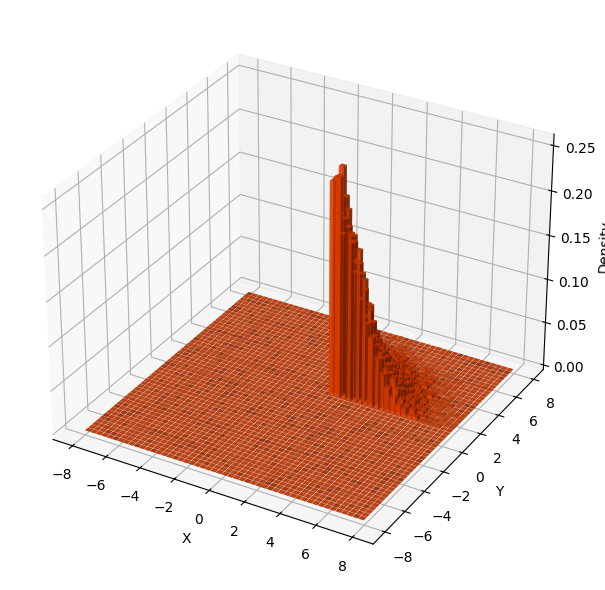}
    \caption{Histogram of Truncated Normal $\mathcal{N}(0,4I_2)|(1, \infty) \times (1, \infty)$}
    \label{fig:2d-real}
\end{figure}

\begin{figure}[htbp]
    \centering
    \begin{subfigure}{0.5\linewidth}
        \centering
        \includegraphics[width=\linewidth]{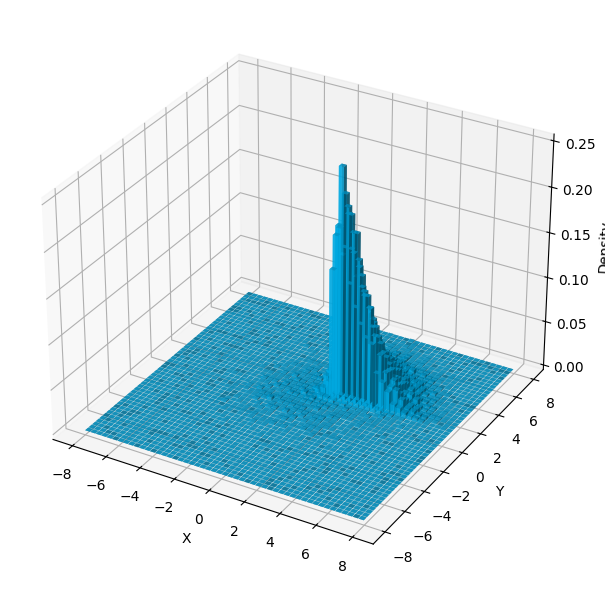}
        \caption{Generated histogram via CDG-ML}
        \label{fig:2cdg-ml}
    \end{subfigure}%
    \hfill
    \begin{subfigure}{0.5\linewidth}
        \centering
        \includegraphics[width=\linewidth]{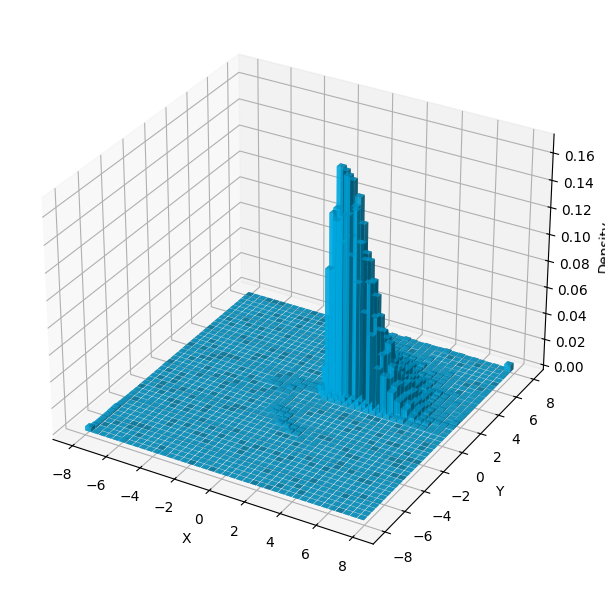}
        \caption{Generated histogram via CDG-MCL}
        \label{fig:2cdg-mcl}
    \end{subfigure}
\end{figure}

\subsection{Stress Testing}
\label{sc62}
In this section, we illustrate how our Conditional Diffusion Guidance framework can produce controlled perturbations of high-dimensional return distributions, allowing one to probe the sensitivity of portfolio-level risk and tail behavior under targeted distributional shifts while preserving realistic dependence structures.
Specifically, we apply our CDG-ML and CDG-MCL algorithms to real-world financial data and evaluate its economic relevance in constructing \cite{guo2025diffusiongenerativemodelsmeet}: (1) equal weight portfolios, (2) Markowitz minimum variance portfolios \cite{Portfolio1952},
and (3) risk parity portfolios \cite{Qian2005RiskParity}. 

We use daily log-return data for U.S. stocks: AAPL, AMZN, TSLA, JPM from August 25th, 2016, to October 19th, 2023 as the training set. We applied a mild 0.5\% winsorization to the data, removed weekday (Monday–Friday) seasonal effects, and standardized the series by dividing by its long-term standard deviation. We use standardized daily log-return data to construct a joint dataset of $d$ stocks over a rolling window of $N$ consecutive days. Each window is shifted by one day to create overlapping samples. 
In our setting, $N = 64$ (approximately three months) and $d = 4$.

\renewcommand{\thealgorithm}{E}
\begin{algorithm}[H]
  \caption{Portfolio Construction and Evaluation under Guidance Conditions}
  \label{algo:portfolio_eval}
  \begin{algorithmic}
    \State \textbf{Input:} Standardized daily log-return data $\{r_{t,i}\}$ for $d$ stocks over $T$ days;
           window size $N$ (e.g., 64); guidance window $k$ (e.g., 10);
           evaluation window $m$ (e.g., 5); threshold $\tau$ for guidance selection.
    \State \textbf{Output:} Cumulative return comparison between generated and real portfolios.
    \Statex

    \State \textbf{Step 1. Data Preparation:}
      \For{$t = 1$ to $T - N + 1$}
        \State Construct a rolling window dataset
        $\mathbf{R}_t = [r_{t,i}, r_{t+1,i}, \dots, r_{t+N-1,i}]_{i=1}^d$.
      \EndFor

    \Statex
    \State \textbf{Step 2. Guidance Set Selection:}
      \State Select a subset of stocks $\mathcal{G}$ such that their cumulative
      log-returns over the last $k$ days satisfy
      $\sum_{j=N-k+1}^{N} r_{t+j,i} < \tau, \ \forall i \in \mathcal{G}$.

    \Statex
    \State \textbf{Step 3. Portfolio Optimization:}
      \For{each model $\in \{\text{CDG-ML}, \text{CDG-MCL}\}$}
        \State Generate synthetic data $\hat{\mathbf{R}}^{(\text{model})} \in \mathbb{R}^{N \times d}$.
        \State Apply the following portfolio strategies on the first $N - k$ days:
        \begin{enumerate}
          \item Equal-weight portfolio: $w_i = 1/d$
          \item Markowitz mean-variance portfolio
          \item Risk-parity portfolio
        \end{enumerate}
      \EndFor

    \Statex
    \State \textbf{Step 4. Evaluation:}
      \State For each portfolio, compute the cumulative return $R_{\text{cum}}$ over the final $m$ days ($m \le k$):
      \State Compare $R_{\text{cum}}$ under synthetic data with that under real market conditions where
      $\sum_{j=N-k+1}^{N} r_{t+j,i} < \tau$ for $i \in \mathcal{G}$.
  \end{algorithmic}
\end{algorithm}

We choose the \textit{guidance set $S$}  to represent a stress scenario; for example, $S$ may consist of paths where the cumulative log return of selected stocks over the last $k$ days falls below a prescribed threshold. In our setting, we take $S$ to be the event that TSLA’s cumulative log return over the last 10 days is less than $-5\%$. 
We then apply three portfolio management strategies: (1) equal weight, (2) Markowitz minimum variance, and (3) risk parity on the first $N - k$ days of the $N \times d$ data generated by CDG-ML and CDG-MCL, respectively.

The resulting portfolios are evaluated by comparing their cumulative returns over the last $m$ days ($m \leq k$) with those observed under real market conditions, 
where the same subset of stocks exhibits cumulative log returns below the threshold. 
In our empirical example, we set $k = 10$ and $m = 5$. The detailed pipeline is shown in Algorithm \ref{algo:portfolio_eval}.

For the inference (sampling) process, we employ the DDIM sampler, a deterministic sampler obtained by discretizing the probability-flow ODE of the diffusion model (see \cite[Section 5.1]{TZ24tut}).
As discussed in Section \ref{sc:reinforce},
we adopt a guidance scale $\eta >0$ to adjust the magnituide of guidance,
with the sampler \eqref{eq:guideA} or \eqref{eq:guideB}.
Further, we take 
$$f(t,x) = -\frac{1}{2}\beta(t)x \quad \mbox{and} \quad 
g(t) = \sqrt{\beta(t)},$$
with linear noise schedules $\beta(t) = \beta_{\text{min}} + (\beta_{\text{max}}-\beta_{\text{min}})\,t$ for $0 \le t \le 1$. 

Moreover, using the idea that the step size should be scheduled according to the noise level \cite{song2020improvedtechniquestrainingscorebased}, we build the VP-SDE time grid by spacing time points according to the noise level. This produces a decreasing time grid that allocates smaller steps when approaching the final stage.

\paragraph{In-sample performance.} Figures~\ref{fig:finance_autograd} and~\ref{fig:finance_qmodel} show the performance of CDG-ML and CDG-MCL in recovering the cumulative return over the last $k=5$ days, conditional on the stress scenario that TSLA’s cumulative log-return over the last $10$ days is below $-5\%$. The in-sample reference samples, shown by the orange histograms, are simulated from pre-trained dynamics that already satisfy the constraint, rather than from the real data. 
This is because most proprietary pretrained models do not disclose their actual training data,
so we can only query the pretrained model to approximate the in-sample distribution.
Also this eliminates errors from the pre-trained model, which are not the focus of our post-training framework.

\begin{figure}[H]
    \centering
    \includegraphics[width=0.9\linewidth]{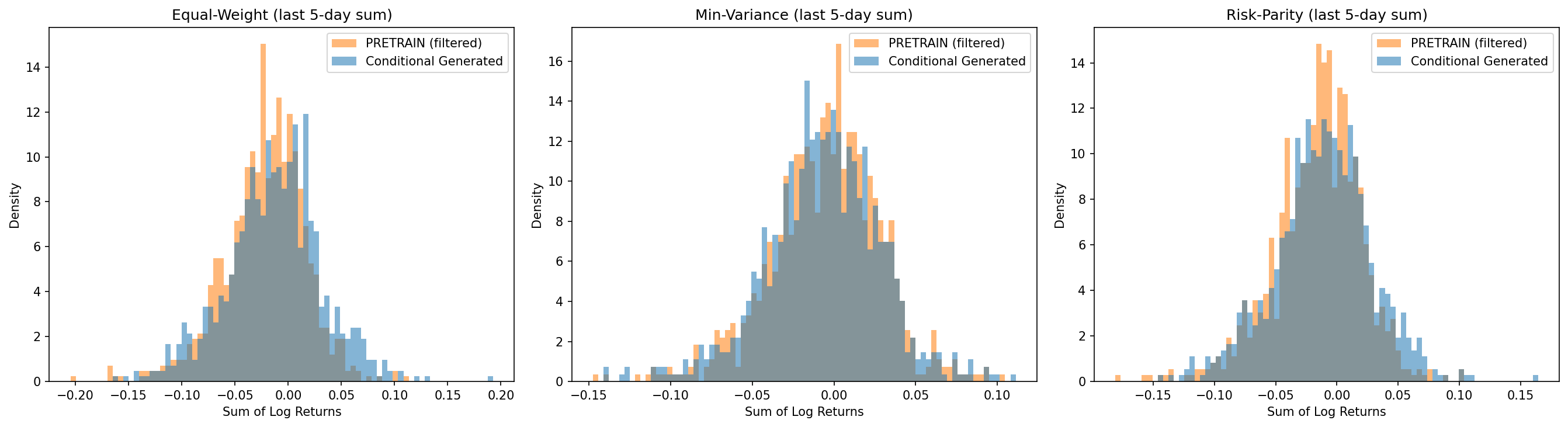}
    \caption{Histograms of portfolio constructions via CDG-ML ($\eta = 5$).}
    \label{fig:finance_autograd}
\end{figure}

\begin{figure}[H]
    \centering
    \includegraphics[width=0.9\linewidth]{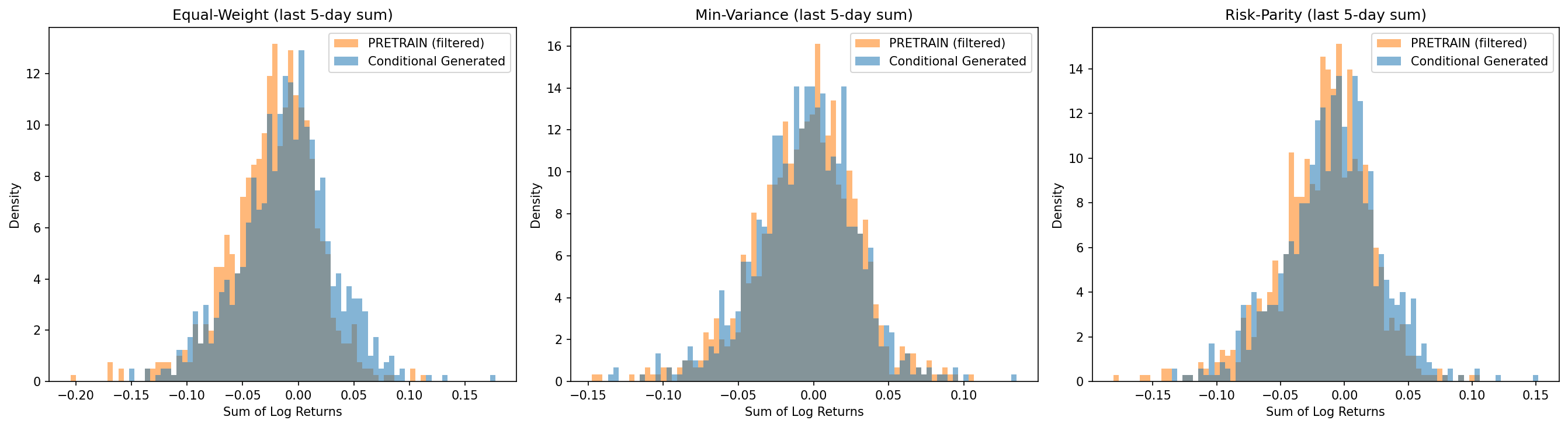}
    \caption{Histograms of portfolio constructions via CDG-MCL ($\eta = 2$).}
    \label{fig:finance_qmodel}
\end{figure}

Since the guidance strength is controlled by the tunable hyperparameter $\eta$, we select $\eta$ separately for each method. Empirically, CDG-ML uses a large guidance strength, with $\eta=5$ in our experiments, whereas CDG-MCL allows relatively smaller values, around $\eta=2$. 
This behavior is expected, because CDG-ML approximates $\nabla h$ by applying auto-differentiation to the learned surrogate $h_\phi$, and this yields a less accurate guidance to the conditional distribution than the direct gradient estimation used in CDG-MCL. Detailed results are reported in Table~\ref{tab:cdg_summary_insample_stoch05}. Within each method, CDG-ML and CDG-MCL, bold entries indicate the statistics that are closest to those of the pre-trained data across the three portfolio constructions, while underlined entries denote the second closest. 

\begin{table}[H]
\centering
\small

\begin{tabular}{lccccc}
\toprule
Portfolio & $\eta$ & Mean & Std & 5\% Quantile & 10\% Quantile \\
\midrule
\multicolumn{6}{c}{\textbf{Pretrained Generation (satisfying constraint)}} \\
\midrule
Equal Weight & -- & -0.02210 & 0.03869 & -0.08714 & -0.06975 \\
Min Variance & -- & -0.00513 & 0.03391 & -0.06483 & -0.04706 \\
Risk Parity & -- & -0.01566 & 0.03581 & -0.07775 & -0.06176 \\

\midrule
\multicolumn{6}{c}{\textbf{CDG-ML}} \\
\midrule

\multirow{5}{*}{Equal Weight}
& 0.5 & -0.00188 & \textbf{0.04045} & -0.06871 & -0.05145 \\
& 1 & -0.00011 & \underline{0.04113} & -0.07221 & -0.05404 \\
& 1.5 & -0.00208 & 0.04176 & -0.07549 & -0.05693 \\
& 2 & \underline{-0.00402} & 0.04235 & \underline{-0.07800} & \underline{-0.05931} \\
& 5 & \textcolor{black}{\textbf{-0.01339}} & 0.04483 & \textbf{-0.09380} & \textcolor{black}{\textbf{-0.07198}} \\
\noalign{\vskip 0.05em}
\hdashline
\noalign{\vskip 0.05em}

\multirow{5}{*}{Min Variance}
& 0.5 & 0.00269 & 0.03150 & -0.05125 & -0.03582 \\
& 1 & 0.00139 & 0.03193 & -0.05263 & -0.03804 \\
& 1.5 & 0.00011 & 0.03234 & -0.05509 & -0.04004 \\
& 2 & \underline{-0.00113} & \underline{0.03271} & \underline{-0.05675} & \underline{-0.04156} \\
& 5 & \textbf{-0.00710} & \textbf{0.03504} & \textcolor{black}{\textbf{-0.06698}} & \textbf{-0.04970} \\
\noalign{\vskip 0.05em}
\hdashline
\noalign{\vskip 0.05em}

\multirow{5}{*}{Risk Parity}
& 0.5 & 0.00204 & \underline{0.03547} & -0.05928 & -0.04349 \\
& 1 & 0.00029 & \textbf{0.03607} & -0.06081 & -0.04592 \\
& 1.5 & -0.00146 & 0.03663 & -0.06366 & -0.04896 \\
& 2 & \underline{-0.00317} & 0.03716 & \underline{-0.06660} & \underline{-0.05145} \\
& 5 & \textcolor{black}{\textbf{-0.01167}} & 0.03951 & \textcolor{black}{\textbf{-0.08021}} & \textcolor{black}{\textbf{-0.06280}} \\

\midrule
\multicolumn{6}{c}{\textbf{CDG-MCL}} \\
\midrule

\multirow{5}{*}{Equal Weight}
& 0.5 & 0.00015 & \textcolor{black}{\textbf{0.03996}} & -0.07126 & -0.05215 \\
& 1 & -0.00361 & \underline{0.04043} & -0.07594 & -0.05620 \\
& 1.5 & -0.00742 & 0.04105 & \underline{-0.08094} & \underline{-0.06158} \\
& 2 & \textbf{-0.01126} & 0.04174 & \textcolor{black}{\textbf{-0.08490}} & \textbf{-0.06599} \\
& 5 & \underline{-0.03411} & 0.04522 & -0.11453 & -0.09527 \\
\noalign{\vskip 0.05em}
\hdashline
\noalign{\vskip 0.05em}

\multirow{5}{*}{Min Variance}
& 0.5 & 0.00158 & 0.03133 & -0.05215 & -0.03634 \\
& 1 & -0.00085 & 0.03180 & -0.05507 & -0.03969 \\
& 1.5 & \underline{-0.00328} & \underline{0.03238} & \underline{-0.05798} & \underline{-0.04308} \\
& 2 & \textcolor{black}{\textbf{-0.00572}} & \textcolor{black}{\textbf{0.03302}} & \textbf{-0.06128} & \textcolor{black}{\textbf{-0.04648}} \\
& 5 & -0.02044 & 0.03678 & -0.08283 & -0.06395 \\
\noalign{\vskip 0.05em}
\hdashline
\noalign{\vskip 0.05em}

\multirow{5}{*}{Risk Parity}
& 0.5 & 0.00050 & 0.03512 & -0.06077 & -0.04501 \\
& 1 & -0.00283 & \textcolor{black}{\textbf{0.03563}} & -0.06432 & -0.05023 \\
& 1.5 & \underline{-0.00622} & \underline{0.03626} & \underline{-0.06870} & \underline{-0.05353} \\
& 2 & \textbf{-0.00965} & 0.03695 & \textbf{-0.07387} & \textbf{-0.05794} \\
& 5 & -0.03039 & 0.04070 & -0.09984 & -0.08485 \\

\bottomrule
\end{tabular}
\caption{Comparison of different portfolio constructions under CDG-ML and CDG-MCL, using samples generated by the pre-trained model and the conditional guidance models. \textbf{Bold} entries indicate the statistics closest to those of the pre-trained generation under each portfolio construction, while \underline{underlined} entries denote the second closest. } 
\label{tab:cdg_summary_insample_stoch05}
\end{table}

\paragraph{Out-of-sample comparison.} In addition to evaluating the in-sample coverage of our conditional guidance framework, we assess its out-of-sample performance. Specifically, we use data from August 25, 2016 to October 19, 2023 for training, and data from October 20, 2023 to March 13, 2026 for testing, covering 600 trading days. During the test period, we identify 203 windows that satisfy the stress condition defined above.

Figures~\ref{fig:oos_autograd} and~\ref{fig:oos_qmodel} present the out-of-sample comparison of CDG-ML and CDG-MCL under new market conditions, with the guidance strength $\eta$ fixed at the value selected from the in-sample training period. The corresponding portfolio-level statistics computed from real market data are reported in Table~\ref{tab:cdg_summary_outsample_stoch05}.

\begin{figure}[H]
    \centering
    \includegraphics[width=0.9\linewidth]{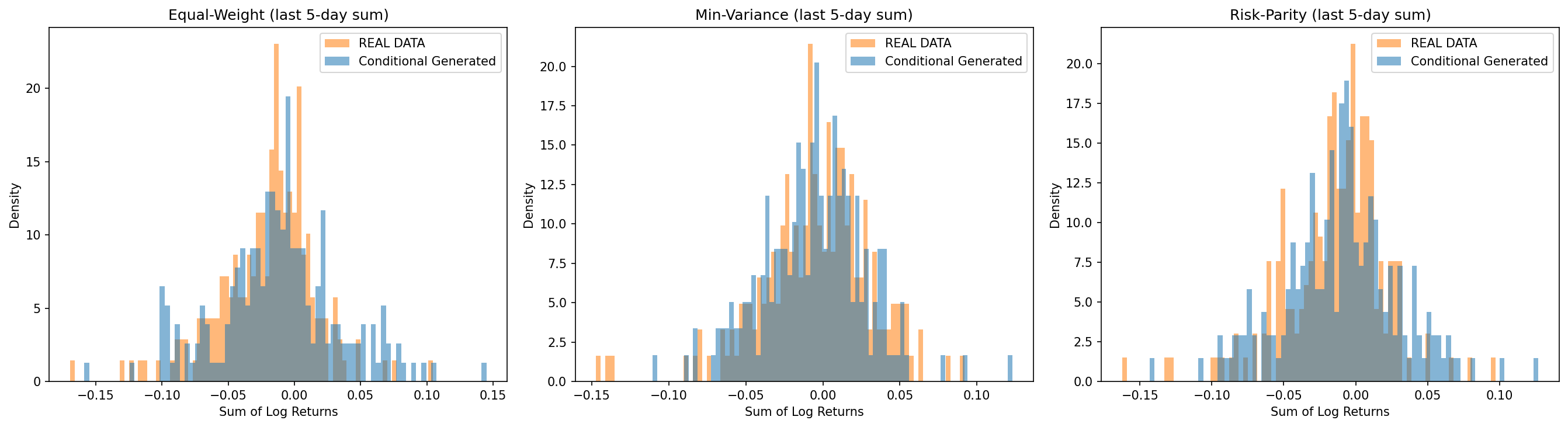}
    \caption{Out-of-sample performance of CDG-ML ($\eta = 5$). }
    \label{fig:oos_autograd}
\end{figure}

\begin{figure}[H]
    \centering
    \includegraphics[width=0.9\linewidth]{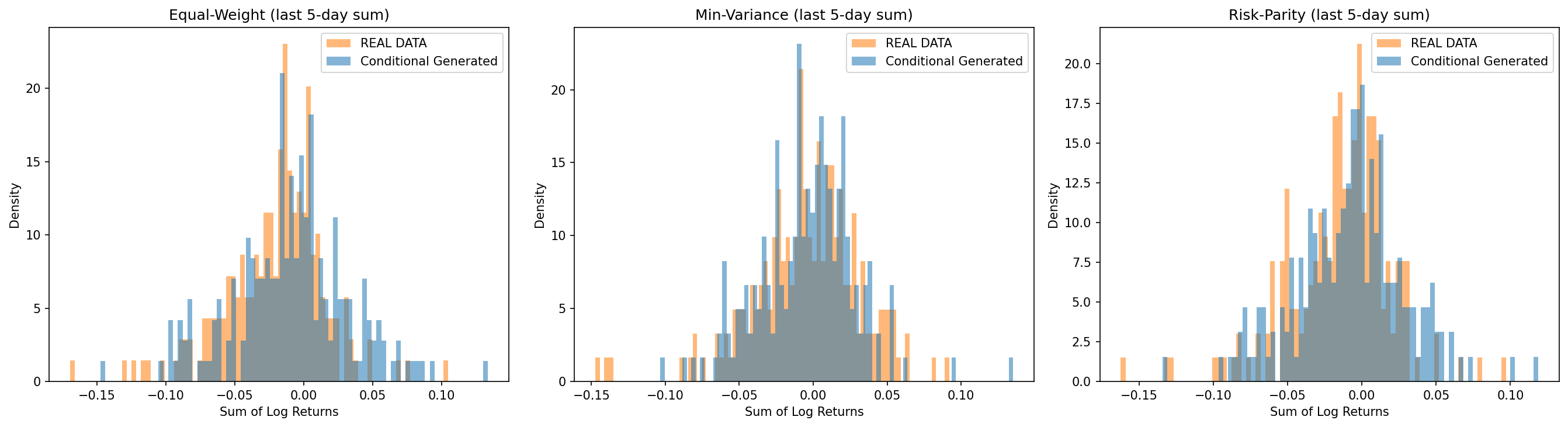}
    \caption{Out-of-sample performance of CDG-MCL ($\eta = 2$).}
    \label{fig:oos_qmodel}
\end{figure}

\begin{table}[H]
\centering
\small
\begin{tabular}{lccccc}
\toprule
Portfolio & $\eta$ & Mean & Std & 5\% Quantile & 10\% Quantile \\
\midrule
\multicolumn{6}{c}{\textbf{Real Data}} \\
\midrule
Equal Weight & -- & -0.02072 & 0.03627 & -0.08591 & -0.06645 \\
Min Variance & -- & -0.00450 & 0.03626 & -0.06278 & -0.04685 \\
Risk Parity & -- & -0.01427 & 0.03422 & -0.07139 & -0.05309 \\

\midrule
\multicolumn{6}{c}{\textbf{CDG-ML}} \\
\midrule
Equal Weight & 5 & -0.01124 & 0.04562 & -0.09392 & -0.06910 \\
Min Variance & 5 & -0.00616 & 0.03308 & -0.06111 & -0.04951 \\
Risk Parity & 5 & -0.00987 & 0.03968 & -0.07834 & -0.06217 \\

\midrule
\multicolumn{6}{c}{\textbf{CDG-MCL}} \\
\midrule
Equal Weight & 2 & -0.00879 & 0.04177 & -0.08469 & -0.06500 \\
Min Variance & 2 & -0.00465 & 0.03183 & -0.05978 & -0.04655 \\
Risk Parity & 2 & -0.00772 & 0.03675 & -0.07151 & -0.05498 \\

\bottomrule
\end{tabular}
\caption{Different portfolios under CDG-ML ($\eta=5$) and CDG-MCL ($\eta=2$) using out-of-sample real data and data generated by conditional guidance model. }
\label{tab:cdg_summary_outsample_stoch05}
\end{table}

We observe that, even under new market conditions, both algorithms achieve reasonably good coverage of the modes. In particular, for tail-risk measures such as the $5\%$ and $10\%$ quantiles, the errors are below $1\%$ across all portfolio constructions under both algorithms.

\subsection{Supply Chain Simulation}
\label{sec63}
 In this section, we demonstrate how conditional diffusion guidance can be used as a principled, data-driven tool for generating stress scenarios in complex supply-chain and queueing systems, enabling downstream simulation-based evaluation and capacity planning.

To start, we first demonstrate the performance of our framework in simulating conditional event arrivals. Note that one advantage of using our conditional diffusion guidance is that we can tune the degree of softness by varying $\eta$, and generating samples that are empirically biased to guidance set but still maintain diversity to some extent. As shown in Figure \ref{fig:con_exp}, although we generate exponential(1) data conditioning on $(1, +\infty)$, it still admits a small proportion (around 9\%) of samples in $(0,1)$, which is more realistic compared to hard conditioning.
\begin{figure}[H]
    \centering
    \includegraphics[width=0.6\linewidth]{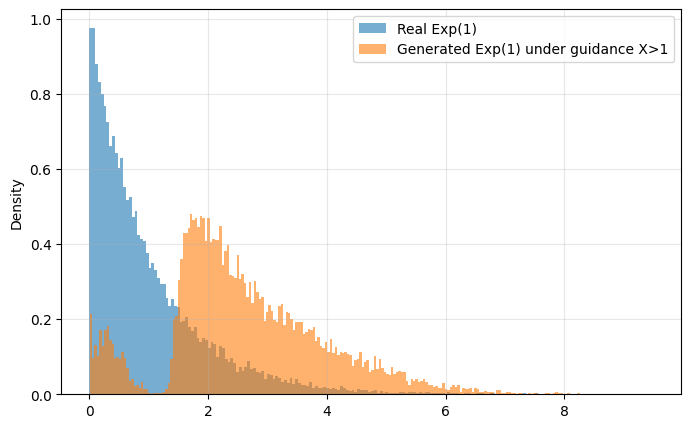}
    \caption{Density Comparison of Exp(1) and $\text{Exp}(1)|(1, \infty)$}
    \label{fig:con_exp}
\end{figure}

Next, we test our framework on generating conditional arrival times and service times in queueing networks.
We consider the hospital environment provided by \texttt{QGym} \cite{chen2024qgymscalablesimulationbenchmarking}, which consists of $M=8$ patient types (Cardiology, Surgery, Orthopedics, Respiratory disease, Gastroenterology and endoscopy, Renal disease, General Medicine, Neurology) and $N=11$ wards, each ward having a fixed number of beds corresponding to the available servers. Admission is subject to routing constraints: each ward can serve only 1–2 specific patient types, and patients who cannot be admitted immediately wait in the appropriate queue. When a bed becomes available, the ward follows a classic policy called the
$c\mu$ assignment rule \cite{cox1961queues}:
the action (i.e., job-server assignment) is decided by
\[
    \max_{a \in \mathcal{A}} \sum_{1\leq i\leq M, 1\leq j\leq N} c_i \mu_{ij} a_{ij}
\]
where $\mathcal{A} \subset \mathbb{R}^{M \times N}$ captures the routing constraints mentioned above. In this case, server $j$ prioritizes the queue with a larger $c_i \mu_{ij}$-index, where $c_i$ denotes the holding cost per job per unit time for queue $i$ $(1\leq i \leq M)$, and $\mu_{ij}$ is the service rate when server $j$ $(1\leq j \leq N)$ processes a job from queue $i$.

Based on the literature \cite{anthony2017seasonalSSI, do2023winterInHospitalCardiacArrest, kaufman2022impactHeatKidneyStones}, there is consistent evidence across temperate climate zones (e.g., North America, Europe, and North China) that healthcare burdens exhibit strong seasonal concentration, peaking in winter and, for certain specialties such as surgery and surgical site infections, also during summer or high-temperature periods. In the context of a conditional diffusion model, this periodic structure can be naturally exploited by {\it designing  guidance} that allocates higher arrival intensities and slower service rates—reflecting congestion—during these critical seasonal windows. As a representative stress scenario, we consider the flu season, in which elevated arrival rates combined with reduced service capacity imply that more patients enter the system while recovery takes longer, leading intuitively to a pronounced increase in total queue length.

Using the default hospital setting in the \texttt{QGym} environment \cite{chen2024qgymscalablesimulationbenchmarking}, the $c\mu$-rule yields a stable total queue length (aggregated over 8 patient types) that converges to approximately $400$ (Figure~\ref{fig:orig_queue}). Let $Z_1\sim\mathrm{Exp}(\lambda_1)$ and $Z_2\sim\mathrm{Exp}(\lambda_2)$ denote the inter-arrival and service times, respectively. To model seasonal effects, we condition the arrivals and services as $(Z_1\mid Z_1<\frac{3}{2 \lambda_1})$ and $(Z_2\mid Z_2 > \frac{1}{2\lambda_2})$, which induces an unstable queueing regime. 

Here, we implement the diffusion process using a drift-free VE–SDE with exponentially increasing diffusion 
$g(t)=\sigma^t$, and discretize time on a non-uniform grid constructed so that the marginal perturbation standard deviation increases linearly across steps, yielding an analytically tractable schedule that improves numerical stability during sampling. In addition, we set $\lambda_1=\lambda_2=1$
 and choose $\eta\approx 12$.

Under this setting, both the hard truncated exponential model (which admits an explicit solution) and our soft-guidance framework in Section \ref{sc:reinforce} lead to diverging queue lengths. Nevertheless, Figure~\ref{fig:conditional_queue} shows that the diffusion model with soft guidance leads to smaller divergence rates, as it captures a broader range of arrival and service times.

We observe that queue lengths in wards with restricted routing—where certain patient types can be served only by specific wards—are highly sensitive to variations in both arrival and service rates. By inspecting the simulation logs, we find that the wards serving patient types $6$ and $7$ exhibit explosive growth; accordingly, we double the number of servers assigned to these wards. The resulting performance is shown in Figure~\ref{fig:conv_compare}, where the system converges after approximately 30,000 days of simulation. As in the unstable regime, conditional diffusion generation consistently produces shorter queues than the blue benchmark, reflecting the smoothing effect induced by soft guidance. These results highlight the sensitivity of the queueing system to small distributional shifts and underscore the need for additional server capacity in critical wards during peak seasons. They also emphasize the importance of realistic simulators for stress testing, especially in unstable or near-unstable regimes.


\begin{figure}
    \centering
    \includegraphics[width=0.6\linewidth]{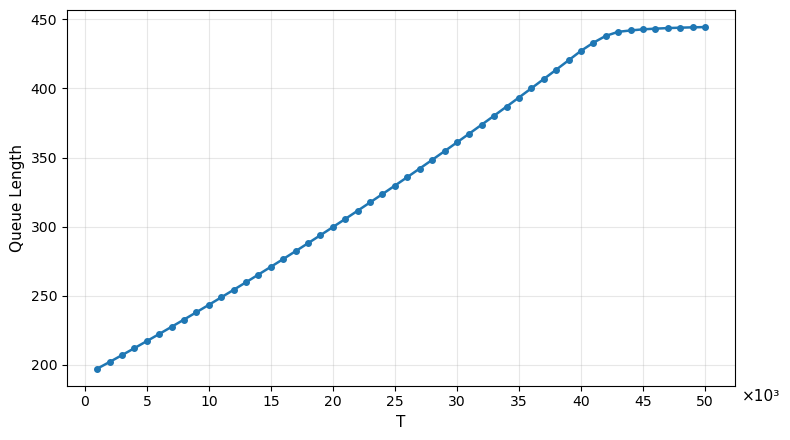}
    \caption{Limiting original total queue length.}
    \label{fig:orig_queue}
\end{figure}

\begin{figure}[H]
    \centering
    \includegraphics[width=0.6\linewidth]{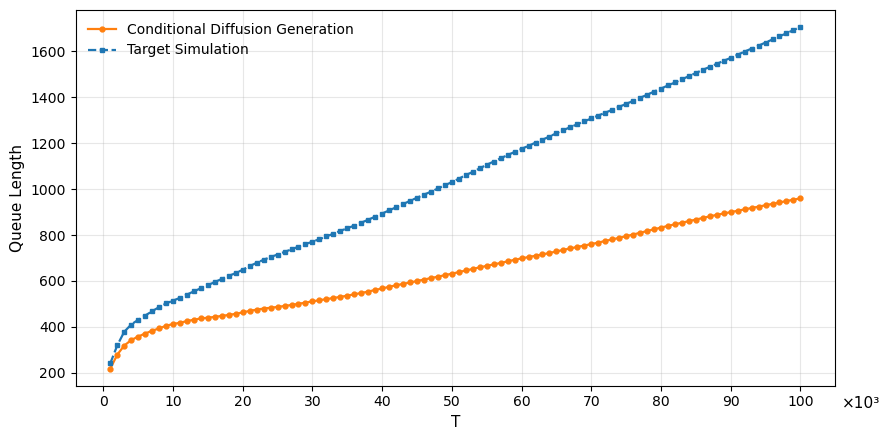}
    \caption{Comparison of queue length under the guidance of shorter patient inter-arrival times and longer service times (blue: hard truncated exponential distribution; orange: soft guidance).}
    \label{fig:conditional_queue}
\end{figure}

\begin{figure}[H]
    \centering
    \includegraphics[width=0.6\linewidth]{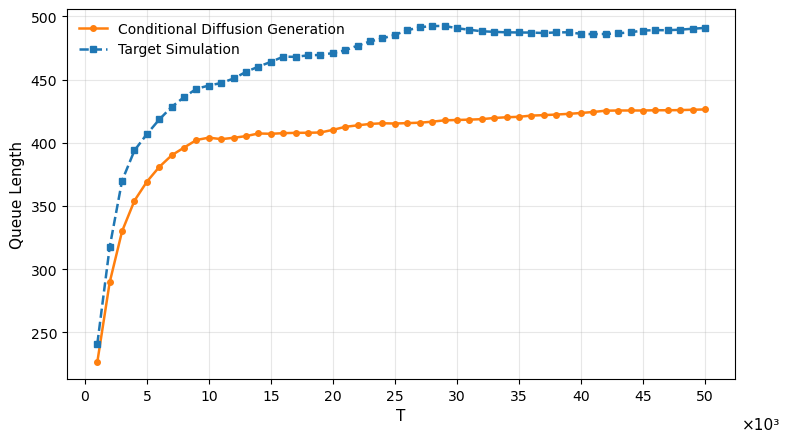}
    \caption{Comparison of queue length after adding more servers.}
    \label{fig:conv_compare}
\end{figure}



\bibliographystyle{abbrv}
\bibliography{unique}

\end{document}